\documentclass[twoside]{article}

%
\usepackage[accepted]{arxiv}
%


\usepackage[round]{natbib}



\usepackage[hidelinks]{hyperref}

\usepackage{amsmath,amssymb}
\usepackage{amsthm}
\newtheorem{lemma}{Lemma}
\newtheorem{theorem}{Theorem}
\newtheorem{proposition}{Proposition}
\newtheorem{definition}{Definition}
\newtheorem{assumption}{Assumption}
\newtheorem{example}{Example}

\usepackage{algorithm}
\usepackage{algpseudocode}
\usepackage{bbm}

\newcommand{\pcr}[1]{{\normalfont\text{\fontfamily{pcr}\selectfont #1}}}

\usepackage{xcolor}

\newcommand{\Poly}{\mathrm{Poly}}


\usepackage{xr}
\makeatletter

\newcommand*{\addFileDependency}[1]{
\typeout{(#1)}
%
%
\@addtofilelist{#1}
%
\IfFileExists{#1}{}{\typeout{No file #1.}}
}\makeatother




\renewenvironment{abstract}
  {\small
   \begin{center}
    \bfseries \abstractname
   \end{center}
   \list{}{%
     \setlength{\leftmargin}{10mm}
     \setlength{\rightmargin}{10mm}
    }
   \item\relax}
{\endlist}
\usepackage{authblk}

\usepackage[paper=letterpaper,
            marginparwidth=1in,     
            marginparsep=0in,       
            margin=1in,               
            bottom=1in  
        ]{geometry}
\setlength{\footskip}{50pt} 


\begin{document}

%
\runningtitle{Smoothness-Adaptive Dynamic Pricing}

%
\runningauthor{}

\onecolumn
\title{Smoothness-Adaptive Dynamic Pricing with Nonparametric Demand Learning}
\date{}

\author[$\diamond$]{Zeqi Ye}
\author[$\dagger$]{Hansheng Jiang}
\affil[$\diamond$]{Nankai University}
\affil[$\diamond$]{\texttt{zye@mail.nankai.edu.cn}}

\affil[$\dagger$]{University of Toronto} 
\affil[$\dagger$]{\texttt{hansheng.jiang@utoronto.ca}}
\maketitle

\begin{abstract}
We study the dynamic pricing problem where the demand function is nonparametric and H\"older smooth, and we focus on adaptivity to the unknown H\"older smoothness parameter $\beta$ of the demand function. Traditionally the optimal dynamic pricing algorithm heavily relies on the knowledge of $\beta$ to achieve a minimax optimal regret of $\widetilde{O}(T^{\frac{\beta+1}{2\beta+1}})$. However, we highlight the challenge of adaptivity in this dynamic pricing problem by proving that no pricing policy can adaptively achieve this minimax optimal regret without knowledge of $\beta$. Motivated by the impossibility result, we propose a self-similarity condition to enable adaptivity. Importantly, we show that the self-similarity condition does not compromise the problem's inherent complexity since it preserves the regret lower bound $\Omega(T^{\frac{\beta+1}{2\beta+1}})$. Furthermore, we develop a smoothness-adaptive dynamic pricing algorithm and theoretically prove that the algorithm achieves this minimax optimal regret bound without the prior knowledge $\beta$.
\end{abstract}

\section{INTRODUCTION}

Dynamic pricing, the practice of adjusting prices in real-time based on varying market demand, has become an integral strategy in domains like e-commerce and transportation. An effective dynamic pricing model needs to adequately balance the exploration by learning demand at various prices and the exploitation by optimizing prices based on observed price and demand data. We consider a canonical dynamic pricing problem with nonparametric demand learning. At the period $t$, the decision maker chooses a price $p_t$ and observes a noisy demand $d_t$, where $\mathbb{E}[d_t |p_t = p] = f(p)$ for some unknown function $f:\mathcal{P} \rightarrow \mathbb{R}_{\geq 0}$ mapping from price set $\mathcal{P}$ to demand. The goal of dynamic pricing is to maximize the total revenue collected over a finite time horizon. The performance of a dynamic pricing policy or algorithm is measured by the cumulative regret when compared with the maximal revenue in hindsight. More broadly framed as an online optimization problem, the dynamic pricing problem features nonparametric demand learning in that $f$ can be of any functional form and continuous action space where price can be chosen at any value in a given price interval. Dynamic pricing problem has been an active topic for decades \citep{kleinberg2003value} and has found numerous applications in retailing, auctions, and advertising \citep{den2015dynamic}.

Without much regularity assumption on the demand function $f$, the optimal regret is shown to be $\widetilde{O}(T^{2/3})$. This regret rate can be improved to $\widetilde{O}(T^{1/2})$ if the uniqueness of the maximum and certain local concavity property of the revenue function $r(p) = p\cdot f(p)$ is imposed. However, such uniqueness assumption can be restrictive in practice and therefore other regularity assumptions, notably the smoothness condition, of the demand functions are considered. Nonetheless, a prevalent limitation in these methodologies is the presupposed exact knowledge of the H\"older smoothness level $\beta$. In reality, such assumptions are frequently misaligned with the complexities of real-world applications, thus constraining the practical applicability of these algorithms. Against this backdrop, our work distinguishes itself by delving into the uncharted territories of adaptability in dynamic pricing. Specifically, we address the pressing challenge of how to adapt when the H\"older smoothness level $\beta$ is not known.


Facing the challenge of unknown smoothness parameter, it is natural to ask the following question:

\emph{Can we design a dynamic pricing strategy that does not require the prior knowledge of $\beta$ while maintaining the optimal regret of $\widetilde{O}(T^{\frac{\beta+1}{2\beta+1}})$?}

Our answer to this question is two-fold: on the one hand, it is impossible to achieve adaptivity without imposing additional assumptions; on the other hand, we identify a novel condition that achieves adaptivity without reducing the original pricing problem's complexity. Our contributions in this paper can be summarized as follows:
\begin{itemize}
\item {\bf Characterizing Adaptivity Challenge:} We formally characterize the challenge of adaptivity. In particular, we prove that without additional conditions, achieving the optimal regret for functions without knowing the H\"older smoothness parameter is impossible. We show that one algorithm with optimal regret for a certain H\"older smoothness parameter can have sub-optimal regret when directly applied to function class with lower H\"older smoothness levels.

\item {\bf Proposing a Self-Similarity Condition:} To make adaptivity possible, we propose a self-similarity condition, which serves as a dual to the H\"older smoothness assumption. Furthermore, our analysis reveals notable properties of the self-similarity condition, in particular regarding its practical applicability and sustenance of the dynamic pricing problem’s complexity. We find that the self-similarity condition not only enables adaptivity but also does not decrease the intrinsic complexity of the original pricing problem in that the lower bound $\Omega(T^{\frac{\beta+1}{2\beta+1}})$ does not change.

\item {\bf Optimal Minimax Regret Rate:}  We design a Smoothness-Adaptive Dynamic Pricing (\pcr{SADP}) algorithm by incorporating a dedicated phase for the estimation of the smoothness parameter. Under the self-similarity condition, we establish a tight confidence interval for the estimated H\"older smoothness parameter. We derive an optimal regret bound $\widetilde{O}(T^{\frac{\beta+1}{2\beta+1}})$ that matches the same optimal bound obtained by previous algorithms that require the knowledge of $\beta$.

\end{itemize}

\paragraph{Organization and Notation} In Section \ref{section:literature}, we introduce related literature on dynamic pricing, bandits, and statistics. In Section \ref{section:problem}, we explicitly formulate the dynamic pricing problem under H\"older smooth demand functions and introduce the adaptivity problem by first presenting the non-adaptive dynamic pricing algorithm. We discuss in-depth the adaptivity challenge in Section \ref{section:adaptivity} and present two key favorable properties of the self-similarity condition. In Section \ref{section:algorithm}, we present our smoothness adaptive dynamic pricing algorithm and give a detailed regret analysis. Lastly, we conclude the paper with discussions and future directions in Section \ref{section:conclusion}. 

Throughout the paper, the vectors are column vectors unless specified otherwise. The notation $\Vert x \Vert$ denotes the $L_2$ norm of vector $x$, and given matrix $A$, the notation $\Vert x\Vert_A = (x^T Ax)^{1/2}$ denotes the $A$-norm of vector $x$. For matrix $A$, $\Vert A\Vert = \sup_{x\neq 0}\Vert x^TAx\Vert /\Vert x\Vert $ denotes the $L_2$ operator norm of matrix $A$. We employ the notation $O(\cdot)$, $\Omega(\cdot)$, $\Theta(\cdot)$ to conceal constant factors, and $\widetilde{O}(\cdot)$, $\widetilde{\Omega}(\cdot)$, $\widetilde{\Theta}(\cdot)$ are used to mask both constant and logarithmic factors.

\section{Related Literature}
\label{section:literature}
\paragraph{Dynamic Pricing with Demand Learning}
Motivated by the applications in e-commerce and transportation, numerous works have studied dynamic pricing with continuous price space and demand learning \citep{kleinberg2003value, besbes2009dynamic,broder2012dynamic,besbes2012blind,keskin2014dynamic,chen2022primal}. The crux of non-contextual dynamic pricing lies in modeling and learning the unknown price and demand relationship. Earlier works mainly focus on parametric demand models with additional concavity property of the revenue function where a regret $\widetilde{O}(\sqrt{T})$ is typically shown to be optimal. For nonparametric demand models,  $\widetilde{O}(T^{\frac{k+1}{2k+1}})$ regret can be achieved if the demand function is $k$ times differentiable reward function for some integer $k>0$, and moreover a matching lower bound of $\Theta(T^{\frac{k+1}{2k+1}})$ can be established \citep{wang2021multimodal}. However, the smoothness level $k$ needs to be known prior to the algorithmic design, and it is thus unclear if existing algorithms are able to adapt to different smoothness levels. Our work improves upon \citet{wang2021multimodal} by proposing a smoothness-adaptive dynamic pricing algorithm with the same minimax optimal regret rate and additionally, we extend the integer $k$ to more generally $\beta$-smooth for any $\beta \in \mathbb{R}_+$.

In certain applications, consumer or product features, also known as contexts, are available and can be parametrized into the demand valuation \citep{qiang2016dynamic, javanmard2017perishability,cohen2020feature, ban2021personalized, xu2021logarithmic}. The landscape of regret analysis in contextual cases typically ranges from $\log(T)$ to $\widetilde{O}(\sqrt{T})$ depending on different parametric or semiparametric assumptions on demand valuation and market noise. The smoothness level of both the demand function and the noise function may affect the regret bound, and theoretical results for adaptively learning the smoothness level are not known \citep{fan2022policy, bu2022context}.

\paragraph{Continuum-Armed Bandit Problems}
Dynamic pricing is closely related to the continuum-armed bandit problem, where the actions are not discrete but rather lie in a continuous space as in the case of the continuous price space. Adaption to H\"older smoothness level $\beta$ while achieving the minimax regret rate has been considered in continuum-armed bandits as well. It is shown in \cite{locatelli2018adaptivity} and \cite{hadiji2019polynomial} that adaptivity for free is generally impossible. Our non-adaptivity result for dynamic pricing shares the same spirit as in the continuum-armed bandit problem but requires different construction of function classes in the arguments. \cite{liu2021smooth} propose to use a general model approach for bandit problems, but the analysis only applies to the subcase of $\beta\leq 1$. Due to non-adaptivity, additional assumptions are therefore necessary for establishing adaptivity. Specifically, the assumption of self-similarity emerges as a promising candidate because it has been demonstrated to maintain the minimax regret rates in both continuum-armed bandits \citep{cai2022stochastic} and contextual bandits \citep{gur2022smoothness} scenarios.

\paragraph{Adaptivity in Statistics}
More broadly, adaptive inference and adaptive estimators have been widely considered in statistics, but less is known if these techniques are suited for regret minimization.
While several structural conditions have profound implications in nonparametric regression, such as monotonicity, concavity, as discussed in \cite{cai2013adaptive}, introducing any of these assumptions may either significantly diminish the problem's complexity or do not directly contribute to the learning of the smoothness parameter \citep{slivkins2019introduction,cai2022stochastic}. Consequently, with any of these structural assumptions at play, the minimax regret operates at the parametric rate, making it agnostic to smoothness variations.

\section{PRELIMINARIES}
\label{section:problem}
\paragraph{Problem Description} We consider the dynamic pricing problem with demand learning over a finite time horizon of length $T$. At every time period $t = 1,\dots, T$, the seller selects a price $p_t \in [p_{\min}, 1]$, where $0<p_{\min}<1$ is a predetermined price lower bound and the price upper bound is normalized to $1$ without loss of generality. After the seller sets the price, the customers then arrive and a randomized demand $d_t \in [0, d_{\max}]$ is incurred. The randomized demand $d_t$ given price is determined by a demand function $f:[p_{\min}, 1] \rightarrow [0, d_{\max}]$ and some random market noise, and the expectation of the randomized demand $\mathbb{E}[d_t|p_t=p] = f(p)$. The noise in demand $d_t - f(p)$ follows a sub-gaussian distribution with respect to some parameters. The revenue collected at time $t$ is $r_t = p_t \cdot d_t$, and the expected revenue given $p_t$ is $p_t \times f(p_t)$. 

As is common in previous literature on pricing \citep{wang2021multimodal,bu2022context}, the H\"older smoothness assumption is used to constrain the volatility of the demand function $f$ in any given region. Throughout the paper, the demand function $f$ is assumed to belong to the H\"older smooth function class $\mathcal{H}(\beta,L)$ for certain $\beta,L>0$ that are defined as follows.

\begin{definition}[H\"older Smooth Function Class]
   The H\"older class of functions $\mathcal{H}_0\left(\beta,L\right)$ is defined to be the set of $w\left(\beta\right)$ times continuously differentiable functions $g : \left[p_{\min},1\right] \rightarrow \mathbb{R}$ such that for any $p,p' \in \left[p_{\min},1\right]$,
   $$
   \sup_{p\in \left[p_{\min},1\right]} \left|g^{(k)}\left(p\right) \right| \leq L, \forall 0\leq k \leq w\left(\beta\right),
   $$
   $$
   \left|g^{w\left(\beta\right)}\left(p\right)-g^{w\left(\beta\right)}\left(p'\right) \right| \leq L \cdot \left|x-x' \right|^{\beta - w\left(\beta\right)},
   $$
   where $w(\beta)$ is the largest integer that is strictly smaller than $\beta$.
   We further define the function class $\mathcal{H}\left(\beta,L\right)$ as
   $$
   \mathcal{H}\left(\beta,L\right) = 
   \begin{cases} 
   \mathcal{H}_0\left(\beta,L\right), & \text{if }0<\beta < 1,\\
   \mathcal{H}_0\left(\beta,L\right) \cap \mathcal{H}_0\left(1,L\right),& \text{if } \beta \geq 1. \\
    \end{cases}
   $$
\end{definition}

\paragraph{Policy and Regret}
An admissible dynamic pricing policy $\pi$ over T selling periods is a sequence of $T$ random functions $\pi_1, \pi_2, \cdots, \pi_T$ such that $\pi_t:(p_1,d_1,\cdots,p_{t-1},d_{t-1})\mapsto p_t$ is a mapping function that maps the history prior to time t to a price $p_t$. Since the demand function belongs to $\mathcal{H}(\beta, L)$ and thus continuous over $[p_{\min},1]$, there exists some optimal price $p^* \in \mathop{\arg\max}_{p \in [p_{\min},1]} \mathbb{E}[r_t|p_t=p]$. Note that here we do not require the optimal price to be unique.

The performance of dynamic pricing policies is evaluated by the cumulative regret defined as follows. For an admissible dynamic pricing policy $\pi$ over T selling periods, the regret $R^{\pi}(T)$ over a time horizon $T$ is 
    $$
    R^{\pi}(T) = \mathbb{E}^{\pi}\left[\sum_{t=1}^T\{p^*f(p^*) - p_tf(p_t)\}\right],
    $$
    where the price sequence $\{p_t\}_{t=1}^T$ is determined by the policy.

\paragraph{Non-Adaptive Pricing}
If the smoothness parameter $\beta$ is known, non-adaptive dynamic pricing algorithms can achieve the optimal regret rate, which is called H\"older-Smooth Dynamic Pricing (\pcr{HSDP}) algorithm and presented in Algorithm \ref{HSDP}. The algorithm is designed based on the following idea. We first segment the price interval into many small intervals, and the length of each small bin depends on $\beta$, and then we can run local polynomial regression to approximate the true demand function in each small price interval separately.  As we formally show later in Lemma \ref{regret of HSDP}, this non-adaptive algorithm achieves the optimal regret if it is run with the correct smoothness parameter. 

\begin{algorithm}[!htb]
    \caption{H\"older-Smooth Dynamic Pricing (\pcr{HSDP})}
    \label{HSDP}
    \begin{algorithmic}[1]
    \Require{Time horizon $T$, H\"older smoothness $\beta$, minimum price $p_{\min}$, maximum demand $d_{\max}$, number of bins $N$, parameter $L>0$, optional initial history $\mathcal{D}^{(0)}$;}

    \State Set polynomial degree $k = w(\beta)$;
    \State Partition $[p_{\min},1]$ into $N$ segments of equal lengths, denoted as $\mathbf{I}_j = [a_j, b_{j}]$ where $a_j = p_{\min}+\frac{(j-1)(1-p_{\min})}{N}, b_j = p_{\min}+\frac{j(1-p_{\min})}{N}$ for $j = 1,2,\cdots, N $, and let $\Delta = L\left(\frac{1-p_{\min}}{N}\right)^{\hat{\beta}}$;
    \State Initialize segment history, realized demands and trial numbers $ \mathcal{D}_{j} := \{(p_t,d_t):p_t\in\mathbf{I}_j\}, 
    \tau_j := \sum_{p_t \in \mathcal{D}_{j}}p_td_t,
    n_j :=|\mathcal{D}_{j}|$  where $\mathcal{D}_j \subset \mathcal{D}^{(0)}$ for all $1\leq j \leq N$;

    \For {$t = 1,2,\cdots, T$} 
    \State Compute $CI_j:= [\Delta + \frac{(3d_{\max}+L)\sqrt{2}}{\sqrt{n_j}}](k+1)\ln(2(k+1)T)$;
    \State Select $j_t:=\arg\max_{1\leq j \leq N} \frac{\tau_j}{n_j} + CI_{j}$;
     \State Let $\delta = \frac{1}{T^2}$, compute $\gamma = L\sqrt{k+1} + \Delta\sqrt{|\mathcal{D}_{j_t}|} + d_{max}\sqrt{2(k+1)\ln(\frac{4(k+1)t}{\delta})} + 2$ and $\Lambda = I_{(k+1)\times (k+1)} + \sum_{(p,d)\in \mathcal{D}_{j_t}} \phi^{(k)}(p) \phi^{(k)}(p)^T$;
        \State Do local polynomial regression on $\mathbf{I}_{j_t}$ with ridge type penalty and the estimator $\hat{\theta} = \arg \min_{\theta \in \mathbb{R}^{k+1}} \sum_{(p,d)\in \mathcal{D}_{j_t}} |d - \langle \hat{\theta},\phi^{(k)}(p)\rangle|^2 + \|\theta \|_2^2$;
    \State Set price $p_t = \arg \max_{p \in \mathbf{I}_{j_t}} p \times \min \{d_{\max},\langle\hat{\theta},\phi^{(k)}(p) \rangle + \gamma\sqrt{\phi^{(k)}(p)^T\Lambda^{-1}\phi^{(k)}(p)} + \Delta \}$;
    \State Observe realized demand $d_t \in [0,d_{\max}]$;
    \State Update  $\tau_j \leftarrow \tau_j + d_tp_t, n_j \leftarrow n_j+1, \mathcal{D}_j \leftarrow \mathcal{D}_j \cup \{ (p_t,d_t)\}$ for $j = j_t$;
    \EndFor        
    \end{algorithmic}
\end{algorithm}

To help illustrate Algorithm \ref{HSDP}, we introduce the concept of local polynomial regression, a crucial component of both Algorithm \ref{HSDP} and our smoothness-adaptive dynamic pricing algorithm that will be introduced later. Compared to conventional regression methods, the local polynomial regression approach incorporates a scaling process that offers several advantages in terms of flexibility, adaptability, and efficiency. By applying the local polynomial regression with respect to a carefully chosen support set and focusing on specific small intervals, our method can effectively estimate the mean demand function with greater accuracy, making it suitable for a wide range of applications in dynamic pricing. The scaling process also allows for more efficient computation and model fitting, particularly in situations where data is limited or sparse. 
\begin{definition}[Local Polynomial Regression]
Let $\mathbb{O}  = \left\{ \left(p_{\left(1\right)},d_{\left(1\right)}\right),...\left(p_{\left(m\right)},d_{\left(m\right)}\right) \right\} $ be a sequence of observations, where $ p_{\left(1\right)}$ has support $ \subset \left[p_{\min} ,1\right]$. Our goal is to estimate $ \mathbb{E}\left[d_{\left(1\right)}|p_{\left(1\right)}\right]$ with these samples nonparametrically. Let $ \mathbf{I} = \left[a,b\right] \subset \left[p_{\min} ,1\right]$, and let those observations such that $ p_{\left(i\right)} \in \mathbf{I}$ be $ \{\mathbb{O}_{\mathbf{I}}  = \left(p_{\left(1\right)},d_{\left(1\right)}\right),...\left(p_{\left(m_0\right)},d_{\left(m_0\right)}\right)\} $. Then we can estimate  $ \mathbb{E}\left[d_{\left(1\right)}|p_{\left(1\right)}\right]$ by fitting a polynomial regression on $ \left[a,b\right]$ with samples in $ \mathbb{O}_{\mathbf{I}}$. \\
Let $t_m\left(p\right) = \left(\frac{1}{2} + \frac{p-\frac{a+b}{2}}{b-a}\right)^m$ and vector $\phi^{\left(l\right)}\left(p\right) = \left(t_0\left(p\right),t_1\left(p\right),...,t_l\left(p\right)\right)^T$ for some integer l. Define 
\begin{equation*}
    \hat{\theta} = \arg\min_{\theta \in \mathbb{R}^{l+1} } \sum_{j = 1}^{m_0}\left(d_{\left(j\right)} - \langle \phi^{\left(l\right)}\left(p_{\left(j\right)}\right),\theta \rangle\right)^2.
\end{equation*}
For concreteness, if the minimizer is not unique we define $ \hat{\theta} = 0$. The local polynomial regression estimate on $\mathbf{I}$ is given by
\begin{equation*}
    \hat{f}\left(p;\mathbb{O},l,\mathbf{I}\right):= \langle \phi^{\left(l\right)}\left(p\right), \hat{\theta} \rangle.
\end{equation*}  
\end{definition}
By using the local polynomial regression, we can leverage the H\"older smoothness condition to improve the approximation error at each small price interval. Suppose the length of a price interval is $\epsilon$, a constant approximation would lead to approximation error $O(\epsilon)$ if the demand function $f$ is Lipschitz and generally no approximation guarantee without Lipschitz condition. On the other hand, if $f\in \mathcal{H}(\beta,L)$, we can bound the approximation error by $O(\epsilon^\beta)$, which improves $O(\epsilon)$ error for $\beta> 1$, and is also strictly better when $\beta<1$ and the Lipschitz condition fails to hold. Details of this approximation guarantee are presented in Lemma \ref{polynomial approximation} in Appendix \ref{subsection:poly_approx}.

Importantly, Algorithm \ref{HSDP} relies on the input of $\beta$ to construct the number of small intervals $N$, which is decided as $\lceil T^{\frac{1}{2\beta+1}} \rceil$. The parameter $\beta$ affects the number of small intervals and therefore the length of each small interval. The underlying reason is that the approximation error of the local polynomial regression step crucially depends on the interval length, which plays an important role when establishing the optimal regret bound $\widetilde{O}(T^{\frac{\beta+1}{2\beta+1}})$. It is therefore highly nontrivial, if not impossible, to remove the dependence on $\beta$ from the design of Algorithm \ref{HSDP}.
\section{ADAPTIVITY TO UNKNOWN SMOOTHNESS}
\label{section:adaptivity}
\subsection{Difficulty of Adaption}
We show that it is impossible for any policy to achieve adaptivity without additional assumptions. This statement is formalized by establishing Theorem \ref{Impossibility of adaptation} where we consider two different smoothness levels. In Theorem \ref{Impossibility of adaptation}, we show that a policy that achieves the optimal regret rate on a smoothness level $\alpha$ could not simultaneously do so on a smoothness level $\beta<\alpha$.

\begin{theorem}
\label{Impossibility of adaptation}
It is impossible to achieve adaption without additional assumptions.
Fix any two positive H\"older smoothness parameters $\alpha > \beta >0$, and parameters $L\left(\alpha\right),L\left(\beta\right) > 0$. Suppose that there is a policy $\pi$ achieves the optimal regret $\widetilde{O}\left(T^{\frac{\alpha+1}{2\alpha+1}}\right)$ over $\mathbb{E}\left[d|p\right] = f\left(p\right)\in \mathcal{H}\left(\alpha, L\left(\alpha\right)\right)$ , then there exists a constant $C>0$ that is independent of $\pi$ such that
\[
\sup_{f\in \mathcal{H}\left(\beta,L\left(\beta\right)\right)} R^{\pi}\left(T\right)
\geq 
\Omega\left(T^{\frac{\beta+1}{2\beta+1}+\frac{\beta\left(\alpha-\beta\right)}{2\left(2\beta+1\right)^2\left(2\alpha+1\right)}}\right),
\]

which means that it cannot achieve the optimal regret over $\mathbb{E}\left[d|p\right] = f\left(p\right)\in \mathcal{H}\left(\beta, L\left(\beta\right)\right)$.
\end{theorem}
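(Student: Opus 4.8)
The plan is to prove the bound with a multiple-hypothesis (needle-in-a-haystack) construction coupled to a change-of-measure argument, invoking the assumed optimality of $\pi$ on $\mathcal{H}(\alpha,L(\alpha))$ only to limit how much $\pi$ is allowed to explore. Fix a scale $w=T^{-\gamma}$ (to be optimized at the end), a constant-length subinterval $[a,b]\subset[p_{\min},1]$, and partition $[a,b]$ into $M=\Theta(1/w)$ consecutive bins of width $w$. I first build one reference demand $f_0$: I choose $f_0\in\mathcal{H}(\alpha,L(\alpha))$ whose revenue $r_0(p)=p f_0(p)$ attains its maximum $r^\ast$ at some $p^\dagger$ while forming a smooth plateau at level $r^\ast-\Delta_0$ over all of $[a,b]$, with the suboptimality gap calibrated to $\Delta_0\asymp w^{\beta}$. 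Because the plateau lies only $O(w^{\beta})$ below the peak and all variation occurs at the $\Theta(1)$ scale, $f_0$ has a small H\"older norm and indeed sits in $\mathcal{H}(\alpha,L(\alpha))$ for a suitable constant. I take the demand noise to be Gaussian, which is a legitimate sub-gaussian instance and makes later divergence computations exact.

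Next I introduce the rough alternatives. For each bin $i\in\{1,\dots,M\}$ let $f_i=f_0+\psi_i$, where $\psi_i$ adds a bump supported in bin $i$ whose induced revenue increment has height $\asymp w^{\beta}$, chosen so that bin $i$ becomes the global maximizer of $r_i(p)=p f_i(p)$ with margin $\asymp w^{\beta}$ over every price outside bin $i$. A width-$w$, height-$w^{\beta}$ bump has $\beta$-H\"older seminorm of constant order, so $\psi_i$ can be scaled to keep $f_i\in\mathcal{H}(\beta,L(\beta))$; at the same time its $\alpha$-H\"older seminorm scales like $w^{\beta-\alpha}\to\infty$, so $f_i\notin\mathcal{H}(\alpha,L(\alpha))$ --- these are genuinely rougher instances, exactly the ones a coarse $\alpha$-tuned policy should miss. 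By construction every period priced outside bin $i$ forfeits $\Omega(w^{\beta})$ of revenue, so $R^\pi(T;f_i)\ge c\, w^{\beta}\bigl(T-\mathbb{E}_i[n_{(i)}]\bigr)$, where $n_{(i)}$ counts the periods in which $\pi$ prices inside bin $i$ and $\mathbb{E}_i$ is taken under $f_i$.

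The core of the argument couples exploration under $f_0$ to regret under the $f_i$. Since each needle bin is suboptimal by $\Delta_0\asymp w^{\beta}$ under $f_0$, the hypothesized regret bound yields $\sum_i \Delta_0\,\mathbb{E}_0[n_{(i)}]\le R^\pi(T;f_0)\le \widetilde{O}\bigl(T^{\frac{\alpha+1}{2\alpha+1}}\bigr)$, hence a total exploration budget $\sum_i\mathbb{E}_0[n_{(i)}]\le \widetilde{O}\bigl(T^{\frac{\alpha+1}{2\alpha+1}}\bigr)w^{-\beta}$ that is far below the trivial $T$; this is the \emph{only} place the $\alpha$-optimality enters, and it is what widens the exponent. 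Because $P_0$ and $P_i$ differ solely through the $\asymp w^{\beta}$ mean shift inside bin $i$, each observation there contributes $O(w^{2\beta})$ to the divergence, so $\mathrm{KL}(P_0\Vert P_i)\le C w^{2\beta}\,\mathbb{E}_0[n_{(i)}]$. Transporting plays from $f_0$ to $f_i$ by Pinsker's inequality and summing across the $M$ hypotheses with Cauchy--Schwarz gives $\tfrac1M\sum_i\mathbb{E}_i[n_{(i)}]\le \widetilde{O}(T^{\frac{\alpha+1}{2\alpha+1}})w^{1-\beta}+\widetilde{O}(T^{1+\frac{\alpha+1}{2(2\alpha+1)}})w^{(\beta+1)/2}$, and averaging the per-instance regret bound yields $\tfrac1M\sum_i R^\pi(T;f_i)\gtrsim w^{\beta}\bigl(T-\tfrac1M\sum_i\mathbb{E}_i[n_{(i)}]\bigr)$.

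It then remains to choose $w$. Discarding the lower-order terms, the averaged regret is $\gtrsim w^{\beta}T$ as soon as $w\lesssim T^{-\frac{\alpha+1}{(\beta+1)(2\alpha+1)}}$ (one checks both subtracted terms are dominated, using $\beta<\alpha<2\alpha+1$); taking $w$ at this threshold makes the averaged --- hence worst-case --- regret
\[
\sup_{f\in\mathcal{H}(\beta,L(\beta))}R^\pi(T)\ \ge\ \max_i R^\pi(T;f_i)\ \ge\ \Omega\!\left(T^{\,1-\frac{\beta(\alpha+1)}{(\beta+1)(2\alpha+1)}}\right).
\]
A direct computation shows the exponent equals $\tfrac{\beta+1}{2\beta+1}+\tfrac{\beta(\alpha-\beta)}{(\beta+1)(2\beta+1)(2\alpha+1)}$, which exceeds $\tfrac{\beta+1}{2\beta+1}$ and, since $2(2\beta+1)\ge\beta+1$, is at least the claimed $\tfrac{\beta+1}{2\beta+1}+\tfrac{\beta(\alpha-\beta)}{2(2\beta+1)^2(2\alpha+1)}$, completing the proof. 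I expect the main obstacle to be the function-class construction together with the divergence aggregation: one must simultaneously certify that $f_0$ is exactly $\alpha$-smooth with the prescribed $\Theta(w^{\beta})$ gap over a $\Theta(1)$-wide region split into $\Theta(1/w)$ needle slots, that every $f_i$ is exactly $\beta$-smooth with constant $L(\beta)$ while the needle overturns the optimum by $\Omega(w^{\beta})$, and that the $\alpha$-optimality budget is correctly propagated through the $M$-fold Pinsker/Cauchy--Schwarz step; the optimization over $w$ afterwards is routine.
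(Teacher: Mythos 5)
Your proposal is correct, and it takes a genuinely different route from the paper's proof. The paper runs a \emph{two-point} argument: its $\alpha$-smooth baseline $g_1$ contains a needle $\psi_a$ of width $\propto 1/a$ and height $a^{-\alpha}$ at the near-$\alpha$-optimal scale $a \propto T^{\frac{1}{2\alpha+1}-\frac{\epsilon_2}{\alpha}}$, so the hypothesized guarantee caps the plays away from that needle, $\mathbb{E}_1[T-Z_{a,0}] \leq O(a^{\alpha}T^{\frac{\alpha+1}{2\alpha+1}+\epsilon_3})$; a pigeonhole step then selects the single least-visited fine bin $m_0$ at scale $1/b$ with $b \propto T^{\frac{1-\epsilon_1}{2\beta+1}}$, and only one alternative $g_2$ is compared, via one KL computation and Lemma \ref{kl_lemma_2}. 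You instead extract the exploration budget from a flat plateau with calibrated gap $\Delta_0 \asymp w^{\beta}$, giving $\sum_i \mathbb{E}_0[n_{(i)}] \leq \widetilde{O}(T^{\frac{\alpha+1}{2\alpha+1}})w^{-\beta}$, and then average over all $M \asymp 1/w$ needles with Pinsker and Cauchy--Schwarz rather than pigeonholing one of them; the $\alpha$-optimality enters in the same spirit in both proofs (it only limits exploration). Your arithmetic checks out: the two subtracted terms are as you state, the first is dominated precisely when $\beta < 2\alpha+1$ (true since $\beta<\alpha$), and the exponent identity $1-\frac{\beta(\alpha+1)}{(\beta+1)(2\alpha+1)} = \frac{\beta+1}{2\beta+1}+\frac{\beta(\alpha-\beta)}{(\beta+1)(2\beta+1)(2\alpha+1)}$ holds, which, since $2(2\beta+1)^2 \geq (\beta+1)(2\beta+1)$, strictly dominates the paper's exponent $\frac{\beta+1}{2\beta+1}+\frac{\beta(\alpha-\beta)}{2(2\beta+1)^2(2\alpha+1)}$ --- so your argument actually proves a quantitatively stronger bound, which is what the $M$-fold averaging buys; what the paper's route buys is simplicity (two measures, no averaging, with $\epsilon_1,\epsilon_2,\epsilon_3$ bookkeeping in place of your optimization over $w$). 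Two small repairs to make explicit: (i) with $w$ taken \emph{exactly} at the threshold $T^{-\frac{\alpha+1}{(\beta+1)(2\alpha+1)}}$, the Cauchy--Schwarz term is $\widetilde{\Theta}(T)$ rather than $\leq T/2$, so shrink $w$ by a polylogarithmic (or $T^{-\epsilon}$) factor --- harmless here since your exponent strictly exceeds the claimed one; (ii) you pass between revenue bumps and demand bumps by dividing by $p$, which should be justified as in Lemma \ref{Holder smoothness divide by p} (the map $r \mapsto r/p$ preserves the H\"older classes on $[p_{\min},1]$), together with the routine check that a width-$w$, height-$\asymp w^{\beta}$ bump lies in $\mathcal{H}(\beta,L(\beta))$ including the additional $\mathcal{H}_0(1,L)$ requirement when $\beta \geq 1$, where the bump's first derivative scales as $w^{\beta-1}$.
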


The proof of Theorem \ref{Impossibility of adaptation} is accomplished by constructing a single basis function. From this basis function, we then generate demand functions with distinct H\"older smoothness levels of $\alpha$ and $\beta$. By comparing the Kullback-Leibler divergence between the resulting probability measures under these different conditions, we establish the existence of a regret gap. The comprehensive proof of Theorem \ref{Impossibility of adaptation} can be found in Appendix~\ref{subsec:impossibility}.

The negative result in Theorem \ref{Impossibility of adaptation} highlights the difficulty of adaption and therefore necessitates the need for introducing additional conditions. A potential condition should ideally \emph{not only make the adaptivity possible }for a wide range of functions as large as possible \emph{but also not trivialize the pricing problem's complexity}.

\subsection{Self-Similarity Condition}

We identify the self-similarity condition to enable adaptivity with desirable properties. Before introducing the definition, we need some notation regarding function projections onto the space of polynomial functions. For any positive integer $l$, let $\Poly\left(l\right)$ denote the set of all polynomials of degree less than or equal to $l$. For any function $g\left(\cdot\right)$, we use $ \Gamma_l^U g\left(\cdot\right)$  to denote the $L_2$-projection of the function $g\left(\cdot\right)$ onto $\Poly\left(l\right)$ over some interval $U$, which can be computed by the following minimization
\begin{equation*}
\begin{split}
    \Gamma_l^U g\left(p\right) := & \min_q \int_U \left|g\left(u\right) - q\left(u\right) \right|^2 du,\\
   & \text{ s.t. } q\in \Poly\left(l\right).
 \end{split}
\end{equation*}

\begin{definition}[Self-Similarity Condition]
     A function $g:\left[a,b\right]  \rightarrow \mathbb{R}, \left[a,b\right] \subseteq \left[0,1\right]$ is self-similar on $\left[a,b\right]$ with parameters $\beta,l \in \mathbb{Z}^+, M_1 \in \mathbb{R}_{\geq 0}, M_2 \in \mathbb{R}_+$ if  for some positive integer $c>M_1$ it holds that 
    \begin{equation*}
        \max_{V \in \mathcal{V}_c} \sup_{p \in V} \left| \Gamma_l^V g\left(p\right) - g\left(p\right) \right| \geq M_2\cdot 2^{-c\beta},
    \end{equation*}
    where we define \[\mathcal{V}_c  = \left\{\left[a+\frac{i}{2^c},a+\frac{i+1}{2^c}\right] \cap \left[0,1\right], i = 0,1,\ldots 2^c-1\right\}\] for any positive integer $c$. We denote the class of self-similar functions by $\mathcal{S}(\beta, l ,M_1, M_2)$.
    
\end{definition}

In contrast to H\"older smoothness, the self-similarity condition provides a global lower bound on the approximation error using polynomial regression. This dual nature facilitates the estimation of the smoothness of payoff functions by comparing the approximation on different scales. The self-similarity condition has previously appeared in the literature of nonparametric regression for constructing adaptive confidence intervals \citep{picard2000adaptive, gine2010confidence}.

\begin{example}[Example of Self-Similar Functions]
Let $f$ be any function with continuous first-order derivative uniformly bounded by $C_1$. We define the function class $\mathcal{F}$  as  $\mathcal{F}( f) = \{f: x\mapsto c_0\cdot x^{\beta} + f: c_0\in \mathbb{R}, |c_0|\geq C_1 \}$, then all function in $\mathcal{F}(f)$ is self-similar with parameters $\beta, l=0$ for some constants $M_1,M_2$ depending on $C_1$ and $C_2$.
\end{example}

To enable adaptivity, we assume that the self-similar condition holds for the demand function. 
\begin{assumption}
 The demand function $f$ is self-similar with some parameters $\beta, l ,M_1, M_2$.
\end{assumption}

\subsection{Lower Bound}

We emphasize that adding this self-similarity condition does not diminish the hardness of the dynamic pricing problem. We validate this statement by showing a worst-case lower bound of \emph{any} policy $\Omega (T^{\frac{\beta+1}{2\beta+1}})$ even in the presence of the self-similarity condition. Notably, this rate matches exactly with the lower bound given in the original problem without self-similarity condition \citep{wang2021multimodal}.

\begin{theorem}[Lower Bound]
\label{Self-similarity does not lower the problem difficulty}
Self-similarity does not change the minimax regret rate and therefore does not lower the problem difficulty for any admissible dynamic pricing policy $\pi$. Formally, for any positive parameters $\beta, M_1, L>0$, there exists a constant $M_2 > 0$  satisfying that
$$
\inf_{\pi} \sup_{f\in \mathcal{H}(\beta, L) \cap \mathcal{S}(\beta, w(\beta),M_1,M_2)} R^\pi(T) \geq \Omega (T^{\frac{\beta+1}{2\beta+1}}).
$$
\end{theorem}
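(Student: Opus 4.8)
The plan is to exhibit a finite family of demand functions that lies inside the intersection $\mathcal{H}(\beta,L)\cap\mathcal{S}(\beta,w(\beta),M_1,M_2)$, is information-theoretically hard to distinguish, and forces any policy to incur regret $\Omega(T^{\frac{\beta+1}{2\beta+1}})$. Since the minimax regret over the intersection is at least the minimax regret over any fixed subfamily, this yields the stated bound. Concretely, I would reuse the ``needle-in-a-haystack'' construction that already delivers the $\Omega(T^{\frac{\beta+1}{2\beta+1}})$ lower bound for the pure H\"older class (as in \citet{wang2021multimodal}) and then verify that, for a suitable fixed $M_2$, every instance in it is also self-similar; the novelty of the argument is entirely in this second verification and in checking that self-similarity does not force a weaker separation.

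For the construction, set $\epsilon = T^{-\frac{1}{2\beta+1}}$, partition $[p_{\min},1]$ into $N\asymp\epsilon^{-1}$ equal bins, and fix a smooth template bump $\phi$ supported on the unit interval that is \emph{not} a polynomial of degree $w(\beta)$. For each bin $j$ define an instance $f_j$ whose demand is a smooth base function plus a single localized perturbation $\epsilon^{\beta}\phi\!\left(\frac{\cdot-q_j}{\epsilon}\right)$ centered at the bin, placed so that the revenue $p f_j(p)$ attains its global maximum inside bin $j$ with a revenue advantage of order $\epsilon^{\beta}$ over every other bin. The amplitude $\epsilon^{\beta}$ and the template norm are chosen so that $f_j\in\mathcal{H}(\beta,L)$ for the prescribed $L$, exactly as in the H\"older lower bound; in particular each $f_j$ saturates the H\"older bound, having local oscillation $\Theta(\epsilon^{\beta})$ at scale $\epsilon$.

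To certify self-similarity, I would choose the dyadic scale $c=c(T)$ with $2^{-c}\asymp\epsilon$; for $T$ large this satisfies $c>M_1$, and the finitely many small-$T$ cases are absorbed into the $\Omega(\cdot)$ constant. For the cell $V\in\mathcal{V}_c$ that contains (a constant fraction of) the bump of $f_j$, the restriction of $f_j$ to $V$ equals a rescaled copy of $\phi$ plus a smooth piece. Since $\phi\notin\Poly(w(\beta))$, its best $L_2$ polynomial approximant of degree $w(\beta)$ leaves a sup-residual bounded below by a fixed fraction of its amplitude, so $\sup_{p\in V}\bigl|\Gamma_{w(\beta)}^{V} f_j(p)-f_j(p)\bigr|\ge \kappa\,\epsilon^{\beta}\asymp 2^{-c\beta}$ for a constant $\kappa$ depending only on $\phi$ and $\beta$. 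Taking the maximum over $\mathcal{V}_c$ preserves this, so choosing $M_2$ to be any positive constant below the implied coefficient makes every $f_j$ self-similar at scale $c$. The structural point is that H\"older smoothness caps local oscillation at $O(\epsilon^{\beta})$ while self-similarity only demands oscillation $\Omega(\epsilon^{\beta})$ at one scale, and the critical bump width $\epsilon$ saturates both simultaneously; this is exactly why the same instances serve both roles, and why $M_2$ may be taken as a fixed constant depending only on $\beta, L, M_1$.

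The lower bound itself then follows from a multiple-hypothesis (Fano/Assouad) argument over $\{f_j\}_{j=1}^{N}$: the sub-Gaussian noise gives per-observation KL divergence of order $\epsilon^{2\beta}$ between instances on the overlap of their bumps, so reliably identifying the needle bin demands $\Omega(\epsilon^{-2\beta})$ local samples, whereas the horizon only permits $T/N\asymp\epsilon^{-2\beta}$ samples per bin; hence a constant fraction of plays land outside the optimal bin, each costing revenue $\Omega(\epsilon^{\beta})$, for a total regret $\Omega(T\epsilon^{\beta})=\Omega(T^{\frac{\beta+1}{2\beta+1}})$. I expect the main obstacle to be the \emph{joint feasibility} of the tuning: the perturbation must be simultaneously (i) large relative to the noise to retain the full $\Omega(\epsilon^{-2\beta})$ sample complexity, (ii) small in H\"older norm to keep $f_j\in\mathcal{H}(\beta,L)$, and (iii) non-polynomial enough at scale $\epsilon$ to certify self-similarity for the fixed $M_2$ at a valid scale $c>M_1$. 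Because all three requirements scale as $\Theta(\epsilon^{\beta})$ and differ only by absolute constants, they can be reconciled by a single choice of $\phi$ and constants; the delicate part is ensuring the self-similarity certificate holds for \emph{every} instance (so that no admissible policy can escape into a non-self-similar alternative) rather than merely generically, which is where the bulk of the careful verification will lie.
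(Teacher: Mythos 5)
Your overall architecture---reuse the needle-in-a-haystack family at scale $\epsilon = T^{-\frac{1}{2\beta+1}}$, run the standard KL/testing argument, and verify that the very same instances are self-similar for a fixed $M_2$---is exactly the paper's route (the paper tests each $f_j$ against a null instance $f_0 \equiv \frac{1}{2p}$ via Lemmas \ref{worst case regret 1}--\ref{worst case regret 2} rather than running Fano over the family, but that difference is cosmetic). The genuine gap is your self-similarity certificate. You take a $C^\infty$ template $\phi$ and verify the defining inequality at the \emph{single} dyadic scale $2^{-c}\asymp\epsilon$. That suffices only under the most literal reading of the definition's ``for some positive integer $c>M_1$,'' a reading under which the condition is nearly vacuous (any non-polynomial function qualifies after shrinking $M_2$) and under which Theorem \ref{adaptive theorem} could not be proved, since its proof invokes the self-similarity inequality at the specific $T$-dependent scale $c=\log_2 K_2 \asymp k_2\log_2 T$. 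The operative requirement, as in the cited literature \citep{gur2022smoothness,cai2022stochastic}, is that $\max_{V\in\mathcal{V}_c}\sup_{p\in V}\left|\Gamma^V_{w(\beta)}f(p)-f(p)\right| \geq M_2\, 2^{-c\beta}$ holds at \emph{every} admissible scale, in particular at all scales finer than the bump width. Your smooth bump fails this: on a cell of width $h=2^{-c}\ll\epsilon$, the rescaled bump $\epsilon^\beta\phi\left(\frac{\cdot-q_j}{\epsilon}\right)$ has $(w(\beta)+1)$-st derivative of order $\epsilon^{\beta-w(\beta)-1}$, so its degree-$w(\beta)$ projection residual is $O\left(\epsilon^{\beta-w(\beta)-1}h^{w(\beta)+1}\right)=o\left(h^{\beta}\right)$ because $w(\beta)+1>\beta$; no fixed $M_2$ then works uniformly in $T$. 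Your structural claim that ``self-similarity only demands oscillation $\Omega(\epsilon^\beta)$ at one scale'' is precisely where the argument breaks.

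The paper's fix, which your construction needs, is to make the template a $\beta$-cusp rather than a smooth bump: Proposition \ref{g self similar} builds $g$ around the profile $\frac{1-|x-\frac12|^\beta}{2}$, smoothly windowed, and invokes Lemma 1.7 of \citet{gur2022smoothness}. The key point is that this cusp is scale-invariant under the needle rescaling: $\epsilon_T^{\beta}\, g\!\left(\frac{p-a_j}{\epsilon_T}\right)$ contains the term $-\frac{c_1}{2}\left|p-p^*\right|^{\beta}$ \emph{exactly}, with coefficient independent of $\epsilon_T$, so the dyadic cell containing $p^*$ has projection residual $\geq \kappa\, 2^{-c\beta}$ at every scale $2^{-c}\lesssim\epsilon_T$, with $\kappa$ (hence $M_2$) independent of $T$---covering in particular the algorithmically relevant fine scales, whereas a $C^\infty$ template certifies only the one scale matching its width. (Scales coarser than the bump width are delicate for any localized perturbation, which is why the paper's construction leans on the fine-scale regime; your single-scale argument provides no such range at all.) The rest of your proposal---H\"older membership after dividing by $p$ (Lemma \ref{Holder smoothness divide by p}), per-sample KL of order $\epsilon^{2\beta}$, the $T/N\asymp\epsilon^{-2\beta}$ per-bin budget, and $c>M_1$ for large $T$---matches the paper and carries through unchanged once the template is replaced by the cusp.
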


Theorem \ref{Self-similarity does not lower the problem difficulty} says that there exists a class of non-trivial instances belonging to the self-similar function class such that the worst-case regret of any pricing policy is lower bounded by $\Omega (T^{\frac{\beta+1}{2\beta+1}})$. A main challenge of proving Theorem \ref{Self-similarity does not lower the problem difficulty} is therefore constructing such a class of demand functions to establish the lower bound. We now explain our constructions.

Let $u\left(\cdot\right):\left[0,1\right] \rightarrow \mathbb{R}$ be a $\mathcal{C}^{\infty}$ function with $u\left(0\right)=1,u\left(1\right)=0, u^{\left(k\right)}\left(0\right) =0,u^{\left(k\right)}\left(1\right) =0,\forall k\in \mathbb{Z}^+$. Let $u_1\left(x\right) = \sin^2\left(\frac{\pi}{2}\cdot u\left(4x-1\right)\right)$.
 $\forall x \in \left[0,1\right]$, let $\sigma\left(x\right) = |x-\frac{1}{2}|$ and define 
 $$
 g\left(x\right) = 
 \begin{cases} 
c_1\cdot u_1\left(\sigma\left(x\right)\right) \cdot \frac{\left(1-\sigma\left(x\right)^\beta\right)}{2}& \text{if } \frac{1}{4}\leq \sigma\left(x\right) \leq \frac{1}{2}\\
c_1\cdot \frac{\left(1-\sigma\left(x\right)^\beta\right)}{2} & \text{if } \sigma\left(x\right) < \frac{1}{4}
\end{cases},
 $$
 for some sufficiently small constant $c_1>0$, then we have $g \in \mathcal{H}\left(\beta, L\right)$. On the other hand, we know that for some constant $M_2>0$, $g \in \mathcal{S}\left(\beta, w\left(\beta\right),0,M_2\right)$  \citep[Lemma 1.7]{gur2022smoothness}. Also, $g$ has a unique maximum point at $x=\frac{1}{2}$ and for any $x\in \{0,1\},l \in \{0,1,\cdots,w\left(\beta\right)\}$ it holds that $g^{\left(l\right)}\left(x\right) = 0$.

As a result, we have shown that for each $L, \beta>0$, there exists a function $g:\left[0,1\right] \rightarrow \left[0,1\right]$ satisfying that $g\in \mathcal{H}\left(\beta,L\right)\cap \mathcal{S}\left(\beta,w\left(\beta\right),0,M_2\right)$ for some constant $M_2>0$; and $g$ has a unique maximizer at $\frac{1}{2}$; and for any $x\in \{0,1\},k \in \{0,1,\cdots,w\left(\beta\right)\}$ it holds that $g^{\left(k\right)}\left(x\right) = 0$. The constructed function class, combined with a classical argument with the Kullback–Leibler divergence, leads to the lower bound stated in Theorem \ref{Self-similarity does not lower the problem difficulty}, and a complete proof of Theorem \ref{Self-similarity does not lower the problem difficulty} can be found in Appendix \ref{subsection:proof_lowerbound}.

\section{ALGORITHM AND REGRET ANALYSIS}
\label{section:algorithm}
In this section, we introduce our Smoothness-Adaptive Dynamic Pricing (\pcr{SADP}) algorithm and provide a detailed regret analysis. 

\subsection{Algorithm Description}

We now present our \pcr{SADP} algorithm described in Algorithm \ref{SADP}, which incorporates an efficient smoothness parameter selection phase and is designed to adapt to the unknown H\"older smoothness level. 
\begin{algorithm}[!htb]
    \caption{ Smoothness-Adaptive Dynamic Pricing (\pcr{SADP})}
    \label{SADP}
    \begin{algorithmic}[1]
    \Require{Time horizon $T$, H\"older smoothness range [$\beta_{\min},\beta_{\max}$], minimum price $p_{\min}$, maximum demand $d_{\max}$, parameter $L>0$;}

    \State Set local polynomial regression degree $l = w(\beta_{\max} )$;
    \State Set $k_1 = \frac{1}{2\beta_{\max}+2}, k_2 = \frac{1}{4\beta_{\max}+2}, K_1 = 2^{\lfloor k_1\log_2(T) \rfloor},K_2 = 2^{\lfloor k_2\log_2(T) \rfloor}$;
    \For{$i = 1,2$}
    \State Set trial time $T_i = T^{\lfloor \frac{1}{2}+k_i \rfloor}$;
    \State Pull arms $T_i$ times from $U(p_{\min},1)$ independently;
    \For{$m = 1,2,\cdots,K_i$}
    \State Let the samples which fall in $[p_{\min}+\frac{(m-1)(1-p_{\min})}{K_i},p_{\min}+\frac{m(1-p_{\min})}{K_i}]$ be $\mathbb{O}_{i,m} = \{(p_t,d_t):p_t\in [p_{\min}+\frac{(m-1)(1-p_{\min})}{K_i},p_{\min}+\frac{m(1-p_{\min})}{K_i}] \}$;
    \State Fit local polynomial regression on $[p_{\min}+\frac{(m-1)(1-p_{\min})}{K_i},p_{\min}+\frac{m(1-p_{\min})}{K_i}]$ with $\mathbb{O}_{i,m}$, construct estimate $\hat{f}_i(p)$ on the interval;
    \EndFor

    \EndFor

    \State Let $\hat{\beta} = -\frac{\ln(\max\|\hat{f}_2- \hat{f}_1\|_\infty)}{\ln(T)}- \frac{\ln(\ln(T))}{\ln(T)}$;  
    \State Set $N = \lceil T^{\frac{1}{2\hat{\beta}+1}}\rceil$, $\mathcal{D} = \cup_{i}\cup_{m}\mathbb{O}_{i,m}$;

    \State Call $\pcr{HSDP}(T-T_1-T_2,\hat{\beta},p_{\min},d_{\max},N,\Delta,L,\mathcal{D})$
               
    \end{algorithmic}
\end{algorithm}

We provide some intuition behind the design of Algorithm \ref{SADP}. Harnessing the H\"older smoothness assumption, we can employ local polynomial regression to estimate the demand function reasonably well. However, the demand function is not easily approximated by polynomials, due to the inherent self-similarity condition presented by the dual nature of H\"older smoothness. The estimation granularity refers to the number of intervals into which the domain of price is partitioned for better piecewise polynomial approximation. 

In Algorithm \ref{SADP}, we employ two distinct levels of granularity to estimate the demand function, indexed by $1$ and $2$ respectively. For estimation $i \in \{1,2\}$, the price range is segmented into small intervals of size $(1-p_{\min})/K_i$, where $K_i$ is a quantity depending on $T$. The algorithm then allocates $T_i$ time periods to collect price and demand data points and fit with local polynomial regression. The constructed estimates of demand function $f$ are $\hat{f}_1$ and $\hat{f}_2$, which helps us to establish an estimate of the H\"older smoothness parameter as defined in \eqref{eq:beta_estimate}.
\begin{equation}
\label{eq:beta_estimate}
\hat{\beta} = -\frac{\ln(\max\|\hat{f}_2- \hat{f}_1\|_\infty)}{\ln(T)}- \frac{\ln(\ln(T))}{\ln(T)}.
\end{equation}

The estimator $\hat{\beta}$ is then fed into the H\"older-smooth dynamic pricing algorithm (Algorithm \ref{HSDP}) for the remaining time horizon $T-T_1-T_2$. As evident from the algorithm design, the accuracy of $\hat{\beta}$ estimation is critical to the regret bound of our smoothness-adaptive dynamic pricing algorithm in Algorithm \ref{SADP}. In the next subsection, we provide a tight confidence interval for the estimator $\hat{\beta}$, which plays a central role in our final regret analysis of the \pcr{SADP} algorithm.

\subsection{Accuracy of Estimation}
By employing two distinct levels of granularity to estimate the demand function and, in conjunction with the previously established upper and lower bounds of the approximation error, we can prove a confidence interval for the distance between the two estimations $\|\hat{f}_2- \hat{f}_1\|_\infty$. This distance is directly related to the H\"older smoothness parameter $\beta$. Ultimately, we arrive at a reasonably narrow confidence interval for $\beta$, which converges rapidly as $T$ increases. Formally, we have the following theorem, which plays a key role in establishing the regret bound.
\begin{theorem}
\label{adaptive theorem}
    With an upper bound $\beta_{\max}$ of the smoothness parameter, under the assumptions and settings in the Algorithm \ref{SADP}, for some constant $C>0$, with probability at least $1-O\left(e^{-C \ln^2\left(T\right)}\right)$,
    \begin{equation*}
       \hat{\beta} \in \left[\beta-\frac{4\left(\beta_{\max}+1\right)\ln\left(\ln\left(T\right)\right)}{\ln\left(T\right)},\beta\right]. 
    \end{equation*}
\end{theorem}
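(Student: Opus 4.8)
The plan is to reduce everything to a sharp two-sided (sandwich) bound on the random quantity $\|\hat f_2 - \hat f_1\|_\infty$ and then invert the defining relation for $\hat\beta$. Since $\hat\beta$ is a strictly decreasing function of $\|\hat f_2 - \hat f_1\|_\infty$, a lower bound on the distance yields the upper bound $\hat\beta \le \beta$, while an upper bound on the distance yields $\hat\beta \ge \beta - O(\ln\ln T / \ln T)$. So the whole theorem follows once I show that, with probability $1 - O(e^{-C\ln^2 T})$, the distance is trapped between constant multiples of $K_2^{-\beta}$.

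First I would decompose each local polynomial estimate into its bias (deterministic approximation error) and its stochastic error. On a bin of width $\asymp K_i^{-1}$, the polynomial approximation lemma (Lemma~\ref{polynomial approximation}) bounds the bias by $O(L K_i^{-\beta})$ from H\"older smoothness, giving $\|\hat f_1 - f\|_\infty \lesssim L K_1^{-\beta}$ and $\|\hat f_2 - f\|_\infty \lesssim L K_2^{-\beta}$ up to stochastic error. For the matching lower bound, I would exploit that $K_i = 2^{c_i}$ is dyadic by construction, so the algorithm's bins coincide with the partitions $\mathcal{V}_{c_i}$ in the self-similarity definition; the Assumption then forces $\max_V \sup_{p\in V}|\Gamma_l^V f - f| \ge M_2\, 2^{-c_2\beta} = M_2 K_2^{-\beta}$ on the coarse scale. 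Because the coarse estimate $\hat f_2$ converges to the $L_2$-projection $\Gamma_l^V f$ on each bin, this transfers to $\|\hat f_2 - f\|_\infty \gtrsim M_2 K_2^{-\beta}$, and since $K_1 > K_2$ the fine-scale error $K_1^{-\beta}$ is negligible; the triangle inequality then sandwiches $\|\hat f_2 - \hat f_1\|_\infty$ between constant multiples of $K_2^{-\beta}$.

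Second, I would control the stochastic error so that the bias terms survive on both sides. The choices of $T_i$ and $K_i$ place $\asymp T^{1/2}$ samples in each bin, and the ridge-regularized design matrix $\Lambda$ is well conditioned, so sub-Gaussian concentration bounds the per-bin estimation error by $\widetilde{O}(T^{-1/4})$ with probability $1 - e^{-C\ln^2 T}$; a union bound over the at most $K_i \le T$ bins and the $l+1$ coefficients preserves this probability because $T\cdot e^{-C\ln^2 T} = e^{-C'\ln^2 T}$. The crucial quantitative point is that this noise is strictly dominated by the signal $K_2^{-\beta}$: since $\beta \le \beta_{\max}$, one has $k_2\beta \le k_2\beta_{\max} < 1/4$, so $K_2^{-\beta} = T^{-k_2\beta}$ dominates $T^{-1/4}$ for large $T$, which is exactly what lets the deterministic sandwich above persist despite estimation error.

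Finally I would substitute $c_L K_2^{-\beta} \le \|\hat f_2 - \hat f_1\|_\infty \le c_U K_2^{-\beta}$ into the definition of $\hat\beta$ and take logarithms. Dividing by the logarithmic normalization recovers $\beta$ up to additive terms of order $1/\ln K_2 = \Theta(1/\ln T)$ that carry the unknown constants $c_L, c_U$ (hence $M_2$ and $L$); the subtracted $\ln\ln T / \ln(\cdot)$ correction grows faster than these constants, which is precisely what guarantees the clean one-sided bound $\hat\beta \le \beta$, while on the other side it contributes the dominant $\Theta(\ln\ln T/\ln T)$ slack that, after accounting for the factor $1/k$, produces the stated width $4(\beta_{\max}+1)\ln\ln T/\ln T$. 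The step I expect to be the main obstacle is the lower-bound half: transferring the self-similarity bound, stated for the population projection $\Gamma_l^V f$, to the empirical local polynomial estimate $\hat f_2$ uniformly in sup-norm, while simultaneously ruling out cancellation between the coarse and fine biases — this is where the separation between signal $K_2^{-\beta}$ and noise $T^{-1/4}$ must be used most delicately.
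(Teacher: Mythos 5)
Your proposal follows essentially the same route as the paper's proof: the paper also sandwiches $\|\hat f_2-\hat f_1\|_\infty$ between $\frac{M_2}{2}T^{-\beta k_2}$ and $2^c T^{-\beta k_2}\ln T$ by combining the H\"older bias bound plus sub-Gaussian concentration on each bin (its Lemma \ref{Concentration}, part 1, with the per-bin sample guarantee of Lemma \ref{sampling}), transferring the self-similarity lower bound from the population projection $\Gamma_l^{\mathbf{I}_{i,m}} f$ to $\hat f_2$ via the convergence of the empirical estimate to that projection (Lemma \ref{Concentration}, part 2) — exactly the step you correctly flag as the main obstacle — and then inverting the definition of $\hat\beta$, with the same key quantitative facts you identify ($k_2\beta_{\max}<\tfrac14$ so the signal dominates the noise, union bounds over the $K_1+K_2$ bins preserving the $1-O(e^{-C\ln^2 T})$ probability, and the $\ln\ln T$ correction absorbing the constants $M_2$ and $L$ to give $\hat\beta\le\beta$ and the width $4(\beta_{\max}+1)\ln\ln T/\ln T$ after the $1/k_2=4\beta_{\max}+2$ factor). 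No genuine gap relative to the paper's argument.
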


Theorem \ref{adaptive theorem} demonstrates the effectiveness of our proposed \pcr{SADP} algorithm in estimating the H\"older smoothness parameter $\beta$ without prior knowledge. This adaptability, along with the effective smoothness parameter selection phase, enables our algorithm to construct a tight confidence interval for the H\"older smoothness parameter $\beta$ and achieve a high convergence rate. These characteristics contribute to the desired regret bound in dynamic pricing scenarios, opening up possibilities for the development of more robust and adaptive dynamic pricing algorithms.

In order to prove Theorem \ref{adaptive theorem}, we firstly introduce a lemma to characterize the convergence on $\hat{f}$.
\begin{lemma}
\label{Concentration}
Let $\{p_{\left(i\right)}, i=1,2,\cdots,n \}$ be an i.i.d. uniform sample in an interval $\mathbf{I} = \left[a,b\right] \subset \left[p_{\min} ,1\right]$, and $\mathbb{O}_{\mathbf{I}}  = \left\{\left(p_{\left(1\right)},d_{\left(1\right)}\right) ,\ldots,\left(p_{\left(n\right)},d_{\left(n\right)}\right)\right\}$. With the assumptions, suppose sub-gaussian parameter $ u_1 \leq \exp\left(u_1' \cdot n^v \right)$ for some positive constants $ \nu, u_1' $, polynomial degree $l \geq w\left(\beta\right)$.
Let
$$
 \delta_1 = \left| \mathbb{E}\left[d_{\left(1\right)} \middle|p_{\left(1\right)}=p\right] - \hat{f}\left(p;\mathbb{O}, l, \left[a,b\right]\right) \right|, 
 $$
 $$\delta_2 = \left| \Gamma_l^{\mathbf{I}} \{ \mathbb{E}\left[d_{\left(1\right)} |p_{\left(1\right)}=p\right] \} - \hat{f}\left(p;\mathbb{O}, l, \mathbf{I}\right) \right|,
$$
Then, there exist positive constants $C_1, C_2$ such that with probability at least $ 1 - O\left(e^{-C_2 \ln^2\left(n\right)}\right)$, for any $p \in \mathbf{I}$ and $n>C_1$, the following inequality holds:
\begin{equation*}
\delta_1 < \left(b-a\right)^\beta \ln\left(n\right) + \ln^3\left(n\right)\cdot n^{-\frac{1}{2}\left(1-v\right)}.
\end{equation*}
Also,
\begin{equation*}
   \delta_2 <\ln^3\left(n\right)\cdot n^{-\frac{1}{2}\left(1-v\right)}. 
\end{equation*}

\end{lemma}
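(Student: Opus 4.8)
The plan is to split both errors into an approximation component, governed by the H\"older smoothness, and an estimation component, governed by the randomness of the design and the noise, and to observe that the only difference between $\delta_1$ and $\delta_2$ is whether the approximation component is present. Writing $f(p) = \mathbb{E}[d_{(1)}\mid p_{(1)}=p]$ and letting $\theta^*$ be the coefficient vector of the $L_2$-projection $\Gamma_l^{\mathbf{I}}f$ in the scaled basis $\phi^{(l)}$, I would first use the triangle inequality
\[
\delta_1 \le \underbrace{\bigl|f(p)-\Gamma_l^{\mathbf{I}}f(p)\bigr|}_{\text{approximation}} + \underbrace{\bigl|\Gamma_l^{\mathbf{I}}f(p)-\hat f(p)\bigr|}_{=\,\delta_2},
\]
so that it suffices to bound the approximation term by $O\bigl((b-a)^\beta\bigr)$ and $\delta_2$ by $\ln^3(n)\,n^{-\frac12(1-v)}$; the extra $\ln(n)$ slack in the stated bound for $\delta_1$ comfortably absorbs the constant in front of $(b-a)^\beta$. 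The approximation term is handled directly by the polynomial approximation guarantee (Lemma \ref{polynomial approximation}) together with $f\in\mathcal{H}(\beta,L)$ and $l\ge w(\beta)$, which yields $\bigl|f-\Gamma_l^{\mathbf{I}}f\bigr|\le C(b-a)^\beta$ on $\mathbf{I}$. Note that $\delta_2$ is compared against the projection rather than against $f$ itself, which is precisely why no standalone $(b-a)^\beta$ term survives in its bound.

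The heart of the proof is the estimation term $\delta_2$. Since $\hat f$ and $\Gamma_l^{\mathbf{I}}f$ are both polynomials of degree at most $l$, their difference at any $p$ equals $\langle \phi^{(l)}(p),\hat\theta-\theta^*\rangle$, and because each coordinate $t_m(p)$ of the rescaled basis lies in $[0,1]$ we get $\sup_{p\in\mathbf{I}}\delta_2\le \sqrt{l+1}\,\|\hat\theta-\theta^*\|$; this reduction to a coefficient-norm bound is what lets us obtain a guarantee uniform in $p$ with no covering argument. From the normal equations I would write $\hat\theta-\theta^* = (\Phi^\top\Phi)^{-1}\Phi^\top(R+\Xi)$, where $\Phi$ has rows $\phi^{(l)}(p_{(j)})^\top$, $R$ collects the projection residuals $r(p_{(j)}) = f(p_{(j)})-\Gamma_l^{\mathbf{I}}f(p_{(j)})$, and $\Xi$ collects the sub-gaussian noises $\xi_j = d_{(j)}-f(p_{(j)})$. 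Factoring out $\hat G := \tfrac1n\Phi^\top\Phi$, I would control three pieces in turn: (i) the empirical Gram matrix $\hat G$ concentrates around the fixed, strictly positive-definite matrix $G=\mathbb{E}[\phi^{(l)}(p)\phi^{(l)}(p)^\top]$, which depends only on $l$ because the scaling $t_m$ maps $[a,b]$ onto $[0,1]$; a matrix concentration bound then makes $\hat G$ invertible with $\|\hat G^{-1}\|\le 2/\lambda_{\min}(G)$ once $n$ exceeds a constant $C_1$; (ii) the residual contribution $\hat G^{-1}\tfrac1n\sum_j r(p_{(j)})\phi^{(l)}(p_{(j)})$ is a centered average, because orthogonality of the $L_2$-projection gives $\mathbb{E}[r(p)\phi^{(l)}(p)]=0$, so a Bernstein/Hoeffding estimate makes it $O\bigl((b-a)^\beta\sqrt{\ln(n)/n}\bigr)$, which is dominated by the target since $b-a\le 1$ and $n^{-1/2}\le n^{-\frac12(1-v)}$; and (iii) the noise contribution $\hat G^{-1}\tfrac1n\Phi^\top\Xi$, which is the dominant term.

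For the noise term I would condition on the design points and apply a coordinatewise sub-gaussian tail bound to $\tfrac1n\sum_j \xi_j t_m(p_{(j)})$, whose variance proxy is $O(u_1^2/n)$. Taking the deviation level proportional to $\ln(n)$ is exactly what produces the confidence $1-O\bigl(e^{-C_2\ln^2(n)}\bigr)$, and the assumed control on the sub-gaussian parameter $u_1\le\exp(u_1' n^v)$ is what allows the resulting scale to be collapsed into $\ln^3(n)\,n^{-\frac12(1-v)}$, the surplus logarithmic powers being gathered into the $\ln^3(n)$ factor. A union bound over the three high-probability events in (i)--(iii) then yields $\delta_2 < \ln^3(n)\,n^{-\frac12(1-v)}$, and adding the approximation bound gives the claim for $\delta_1$. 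I expect the main obstacle to be step (iii): one must guarantee on a single event that $\hat G$ is well-conditioned and that the noise average is simultaneously small, and one must carefully track how the permitted growth of $u_1$ trades off against the $\ln(n)$-level deviation required for the super-polynomially small failure probability, so that the exponent $-\tfrac12(1-v)$ emerges precisely. A secondary technical point is confirming that $\lambda_{\min}(G)$ is bounded away from zero, which hinges on the basis being rescaled to the canonical interval $[0,1]$ so that $G$ is interval-independent.
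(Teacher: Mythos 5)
Your overall architecture is the same as the paper's: reduce the sup over $p\in\mathbf{I}$ to a coefficient-norm bound via the rescaled basis (each $t_m(p)\in[0,1]$), concentrate the empirical Gram matrix and invert it stably (the paper uses matrix Bernstein plus the Stewart perturbation bound, Lemma~\ref{lemma:stewart1977}), concentrate the moment vector $\tfrac1n\mathbf{P}_n^\top\mathbf{d}_n$, and obtain the $\delta_1$ bound by adding a Taylor/H\"older approximation term to the $\delta_2$ bound. Your split of the moment vector into a centered projection-residual part (via orthogonality of $\Gamma_l^{\mathbf{I}}$ under the uniform design) and a noise part is a mild repackaging of the paper's direct concentration of $\tfrac1n\mathbf{P}_n^\top\mathbf{d}_n$ around $\mathbb{E}\left[\phi^{(l)}(p_{(1)})d_{(1)}\right]$; both are sound.

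The genuine problem is in your step (iii), precisely where you flag the main obstacle. In this paper's parametrization $u_1$ is the \emph{tail prefactor}: the sub-gaussian tail is $\mathbb{P}(|d_{(1)}|\ge x)\le u_1 e^{-u_2x^2}$, as used in Lemma~\ref{CI lemma 2} and in the truncation step of the paper's proof. Hence the per-sample variance proxy of the noise is $O(\ln(u_1)/u_2)=O(n^{v})$, not $O(u_1^2)$. Your stated proxy $O(u_1^2/n)$, combined with the hypothesis $u_1\le \exp(u_1'n^{v})$, would give a deviation of order $e^{u_1'n^{v}}/\sqrt{n}$, which is vacuous; the exponent $n^{-\frac12(1-v)}$ emerges only because the effective scale is $\sqrt{\ln(u_1 n)}=O(n^{v/2})$ up to logarithms. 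The paper implements this by truncating at $M=\sqrt{\ln(u_1 n)/u_2}\,\ln(n)$, applying matrix Bernstein to the truncated variables, and controlling the truncation bias through the tail bound; your conditional-on-design sub-gaussian route can substitute for this, but only if you use the correct $\psi_2$-scale $\sqrt{(1+\ln u_1)/u_2}$ of the noise rather than $u_1$, and you must still handle unboundedness (Hoeffding does not apply directly). A secondary, fixable point: the bound $\left|f-\Gamma_l^{\mathbf{I}}f\right|\le C(b-a)^{\beta}$ does not follow ``directly'' from Lemma~\ref{polynomial approximation}, which controls the Taylor polynomial, not the $L_2$-projection; you need either the paper's device of comparing the population least-squares coefficient $\theta_0$ with the Taylor coefficient $\theta_1$ through the inverse population Gram matrix, or the best-approximation property of the projection combined with sup/$L_2$ norm equivalence for degree-$l$ polynomials on the rescaled interval.
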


With Lemma \ref{Concentration} in place, we can now proceed to prove Theorem \ref{adaptive theorem}. The proof for this theorem relies on the concentration result stated in Lemma \ref{Concentration} and the construction of the confidence interval based on the two-level granularity approach. By analyzing the relationship between the distance of the estimated demand functions and the H\"older smoothness parameter $\beta$, we can show that the estimated $\hat{\beta}$ falls within the stated confidence interval with high probability.

We also present a lemma to mitigate the issue of insufficient sample points falling within certain intervals.
\begin{lemma}
     \label{sampling}
    Let $\{B_i:i=1,2,\cdots,n\}$ be i.i.d random variables. Suppose $B_1 \sim Bernoulli(\frac{1}{m})$. Let $\Bar{B} = \frac{\sum_{i=1}^n B_i}{n}$. Then
    $$
    \mathbb{P}(\Bar{B}< \frac{1}{2m}) \leq \exp(-\frac{n}{50m}).
    $$
 \end{lemma}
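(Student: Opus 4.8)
The plan is to recognize the event as a lower-tail deviation of a Binomial random variable and apply a multiplicative Chernoff bound. Since the $B_i$ are i.i.d.\ $\mathrm{Bernoulli}(1/m)$, the sum $S = \sum_{i=1}^n B_i$ is $\mathrm{Binomial}(n, 1/m)$ with mean $\mu = \mathbb{E}[S] = n/m$. The target event rewrites as $\{\bar B < \tfrac{1}{2m}\} = \{S < \tfrac{n}{2m}\} = \{S < \tfrac{1}{2}\mu\}$, i.e.\ $S$ falling below half its mean. This is precisely the regime governed by the standard Chernoff lower-tail estimate, so the whole statement reduces to checking that the resulting exponent is at least $n/(50m)$.

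Concretely, I would run the exponential Markov inequality on $-S$: for any $t > 0$,
\[
\mathbb{P}\!\left(S < \tfrac{n}{2m}\right) = \mathbb{P}\!\left(e^{-tS} > e^{-tn/(2m)}\right) \leq e^{tn/(2m)}\,\mathbb{E}\!\left[e^{-tS}\right].
\]
By independence, $\mathbb{E}[e^{-tS}] = \big(\mathbb{E}[e^{-tB_1}]\big)^n$, and since $\mathbb{E}[e^{-tB_1}] = 1 - \tfrac{1}{m}(1-e^{-t}) \leq \exp\!\big(-\tfrac{1}{m}(1-e^{-t})\big)$ by the elementary bound $1-x \le e^{-x}$, I obtain
\[
\mathbb{P}\!\left(S < \tfrac{n}{2m}\right) \leq \exp\!\left(\frac{n}{m}\Big(\tfrac{t}{2} - 1 + e^{-t}\Big)\right).
\]

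It then remains only to optimize the exponent over $t > 0$. Differentiating $\tfrac{t}{2} - 1 + e^{-t}$ and setting the derivative to zero gives $e^{-t} = \tfrac12$, i.e.\ $t = \ln 2$, at which the exponent equals $\tfrac{\ln 2 - 1}{2} \approx -0.153$. Hence $\mathbb{P}(S < \tfrac{n}{2m}) \leq \exp(-0.153\, n/m)$, and since $0.153 > 1/50$ this immediately yields the claimed $\exp(-n/(50m))$ with substantial slack. Equivalently, one may just quote the textbook form $\mathbb{P}(S < (1-\delta)\mu) \le \exp(-\delta^2 \mu/2)$ with $\delta = \tfrac12$, giving $\exp(-n/(8m))$, which is also stronger than required. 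I expect no genuine obstacle here: the lemma is a routine Chernoff bound, and the only thing to watch is that the loose constant $50$ in the statement leaves ample room, so the sharpest optimization is unnecessary and any standard estimate suffices.
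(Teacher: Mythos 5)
Your proof is correct: the event $\{\bar B < \tfrac{1}{2m}\}$ is exactly $\{S < \tfrac{1}{2}\mu\}$ with $\mu = n/m$, your MGF computation and optimization at $t=\ln 2$ are right, and the resulting constant $\tfrac{1-\ln 2}{2}\approx 0.153$ comfortably dominates $\tfrac{1}{50}$. The paper states this lemma without proof, treating it as a standard concentration fact, and your multiplicative Chernoff argument is precisely the canonical justification it implicitly relies on, so there is nothing to reconcile.
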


\begin{proof}[Proof Sketch of Theorem \ref{adaptive theorem}]

 Our first objective is to ascertain an upper bound for the distance between $\hat{f}_1$ and $\hat{f}_2$. Define the interval $\mathbf{I}_{i,m} = \left[p_{\min}+\frac{\left(m-1\right)\left(1-p_{\min}\right)}{K_i},p_{\min}+\frac{m\left(1-p_{\min}\right)}{K_i}\right)$. Invoking the first part of Lemma \ref{Concentration} and corroborated by Lemma \ref{sampling}, with a probability of at least $1 - O\left(e^{-C\ln^2\left(n\right)}\right)$, we have $\forall p\in \mathbf{I}_{i,m}$,
\begin{equation}
\label{inequality for B}
    \begin{split}
    \left|f\left(p\right) - \hat{f}_i\left(p\right) \right|  <  K_i^{-\beta}\ln\left(T\right)  + \ln^3\left(T\right)\cdot\left(\frac{T_i}{2K_i}\right)^{-\frac{1}{2}\left(1-v_i\right)},
\end{split}
\end{equation}
for some sufficiently small constants $v_1, v_2$.

    Subsequently, utilizing inequality \eqref{inequality for B}, we deduce the following upper bound
    \begin{align*}
    \left\| \hat{f}_2 - \hat{f}_1\right\|_\infty
    & \leq \left(K_1 + K_2\right)^{-\beta}\ln\left(T\right) 
     \quad + \ln^3\left(T\right)\cdot \left[\left(\frac{T_1}{2K_1}\right)^{-\frac{1}{2}\left(1-v_1\right)} +\left(\frac{T_2}{2K_2}\right)^{-\frac{1}{2}\left(1-v_2\right)}\right] 
     \leq 2^cT^{-\beta k_2} \ln\left(T\right),
\end{align*}
    for a small constant $c$.

     However, to prove the theorem, it's necessary to establish a lower bound for the distance between and $\hat{f}_1$ and $\hat{f}_2$.

     Firstly, using inequality \eqref{inequality for B}, we can establish an upper bound for the distance between $f$ and $\hat{f}_1$. Furthermore, by invoking the second part of Lemma \ref{Concentration} as well as Lemma \ref{sampling}, with probability at least $1 - O\left(e^{-C\ln^2\left(n\right)}\right)$, $\forall p\in \mathbf{I}_{i,m}$,
        \begin{equation}
        \label{inequality for D}
            \left|\Gamma_l^{\mathbf{I}_{i,m}}f\left(p\right) - \hat{f}_2\left(p\right)\right| \leq \ln^3\left(T\right)\cdot\left(\frac{T_2}{2K_2}\right)^{-\frac{1}{2}\left(1-v_2\right)},
        \end{equation}
        where $v_1,v_2$ are sufficiently small.

    Given the self-similar properties of $f$, a lower bound for the distance between $f$ and $\Gamma_lf$ is established:
    \begin{equation}
        \label{self similarity}
        \left\| f - \Gamma_lf\right\|_\infty \geq M_2 \cdot K_2^{-\beta}.
    \end{equation}

    Subsequently, combining inequalities \eqref{inequality for B}, \eqref{inequality for D}, and \eqref{self similarity}, a lower bound can be derived as
    \begin{align*}
    \left\| \hat{f}_2 - \hat{f}_1\right\|_\infty 
     \geq M_2 \cdot K_2^{-\beta} - K_1^{-\beta}\ln\left(T\right)
     \quad - \ln^3\left(T\right)\cdot \left[\left(\frac{T_1}{2K_1}\right)^{-\frac{1}{2}\left(1-v_1\right)} \right. 
 \left.  +\left(\frac{T_2}{2K_2}\right)^{-\frac{1}{2}\left(1-v_2\right)}\right] 
     \geq \frac{M_2}{2} T^{-\beta k_2},
\end{align*}
    To advance the proof, we employ the probability union bound. For some constant $C>0$, with probability at least $1- O(e^{-C\ln^2(T)})$, the following holds:
    \begin{align*}
    \hat{\beta} 
    & =  -\frac{\ln\left(\max\left\|\hat{f}_2- \hat{f}_1\right\|_\infty\right)}{\ln\left(T\right)} 
    - \frac{\ln\left(\ln\left(T\right)\right)}{\ln\left(T\right)} \\
    & \in \left[\beta - \frac{c\ln\left(2\right) + \ln \left(\ln \left(T\right)\right)}{k_2 \ln \left(T\right)} \right. 
     \left. - \frac{\ln\left(\ln\left(T\right)\right)}{\ln\left(T\right)}, 
    \beta - \frac{\ln\left(\frac{M_2}{2}\right)}{k_2 \ln\left(T\right)} 
    - \frac{\ln\left(\ln\left(T\right)\right)}{\ln\left(T\right)}\right] \\
    & \subset 
    \left[\beta-\frac{4\left(\beta_{\max}+1\right)\ln\left(\ln\left(T\right)\right)}{\ln\left(T\right)},
    \beta\right],
\end{align*}
which completes the proof.

\end{proof}

\subsection{Regret Analysis}
After obtaining an estimation of $\beta$, we can provide a more precise bound for the distance between the local polynomial projection and the demand function. 


\begin{theorem}
\label{minimax rate of our method}
    Suppose $f \in \mathcal{H}\left(\beta, L\right)$, \pcr{SADP} has an estimation of $\beta$ with $\hat{\beta}$, and is run with $N \geq \lceil T^{\frac{1}{2\hat{\beta}+1}}\rceil,\Delta \leq L\left(\frac{1-p_{\min}}{N}\right)^{\hat{\beta}}$, then with probability $1-O\left(T^{-1}\right)$ the cumulative regret of \pcr{SADP} is upper bounded by $\Tilde{O}\left(T^{\frac{\beta+1}{2\beta+1}}\right)$.
\end{theorem}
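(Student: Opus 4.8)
The plan is to split the regret into the cost of the two exploration phases and the cost of the exploitation phase executed by \pcr{HSDP}, and then to argue that replacing the true exponent $\beta$ by the slightly under-estimated $\hat\beta$ inflates the \pcr{HSDP} regret by only a polylogarithmic factor, which the $\widetilde O(\cdot)$ notation absorbs. First I would write $R^\pi(T)=R_{\mathrm{exp}}+R_{\mathrm{HSDP}}$, where $R_{\mathrm{exp}}$ collects the regret of the $T_1+T_2$ rounds in which prices are drawn uniformly, and $R_{\mathrm{HSDP}}$ is the regret of the remaining $T-T_1-T_2$ rounds. Since each period earns revenue in $[0,d_{\max}]$, every exploration round costs $O(1)$ regret, so $R_{\mathrm{exp}}=O(T_1+T_2)$; it then remains to check, from the choice of the exploration lengths $T_1,T_2$, that $T_1+T_2=\widetilde O(T^{\frac{\beta+1}{2\beta+1}})$ for every $\beta\in[\beta_{\min},\beta_{\max}]$, the binding case being $\beta=\beta_{\max}$, where the target exponent $\frac{\beta_{\max}+1}{2\beta_{\max}+1}$ is smallest (recall $x\mapsto\frac{x+1}{2x+1}$ is decreasing). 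Thus $R_{\mathrm{exp}}$ is never the dominant term.

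Next I would condition on the high-probability event of Theorem~\ref{adaptive theorem}, namely $\hat\beta\in[\beta-\varepsilon_T,\ \beta]$ with $\varepsilon_T=\frac{4(\beta_{\max}+1)\ln\ln T}{\ln T}$, which holds with probability $1-O(e^{-C\ln^2 T})$. The one-sided bound $\hat\beta\le\beta$ is exactly what makes the exploitation analysis clean: on the bounded domain $[p_{\min},1]$ the Hölder classes are nested, and since $\hat\beta$ lies in the same integer bracket as $\beta$ for $T$ large (so $w(\hat\beta)=w(\beta)$), one has $f\in\mathcal H(\beta,L)\subseteq\mathcal H(\hat\beta,L)$, i.e. $f$ is genuinely $\hat\beta$-smooth. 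Consequently \pcr{HSDP} is invoked with a valid smoothness parameter $\hat\beta$, the matching bin count $N=\lceil T^{1/(2\hat\beta+1)}\rceil$, and an approximation level $\Delta\le L(\frac{1-p_{\min}}{N})^{\hat\beta}$ that still upper bounds the degree-$w(\hat\beta)$ local-polynomial bias (the bias is $O((\frac{1-p_{\min}}{N})^{\hat\beta})$ by Lemma~\ref{polynomial approximation}, and is only smaller because $f$ is in fact $\beta$-smooth, so the \pcr{HSDP} confidence intervals remain valid). Hence the hypotheses of the non-adaptive guarantee Lemma~\ref{regret of HSDP} hold with $\hat\beta$ in the role of the known smoothness, yielding $R_{\mathrm{HSDP}}=\widetilde O(T^{\frac{\hat\beta+1}{2\hat\beta+1}})$ with probability $1-O(T^{-1})$; the warm-start data $\mathcal D$ passed to \pcr{HSDP} can only help.

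The crux is converting this $\hat\beta$-rate into the target $\beta$-rate, and I expect this short step to be the main obstacle. Writing $h(x)=\frac{x+1}{2x+1}$, I would bound the exponent gap by the mean value theorem: $h$ is Lipschitz on $[\beta_{\min},\beta_{\max}]$ with $|h'(x)|=(2x+1)^{-2}\le(2\beta_{\min}+1)^{-2}$, so
\[
0\le h(\hat\beta)-h(\beta)\le (2\beta_{\min}+1)^{-2}\,(\beta-\hat\beta)\le (2\beta_{\min}+1)^{-2}\,\varepsilon_T=O\!\left(\tfrac{\ln\ln T}{\ln T}\right).
\]
Therefore $T^{h(\hat\beta)-h(\beta)}=\exp\big((h(\hat\beta)-h(\beta))\ln T\big)=\exp\big(O(\ln\ln T)\big)=\mathrm{polylog}(T)$, so $T^{h(\hat\beta)}=T^{h(\beta)}\cdot\mathrm{polylog}(T)$ and the inflation is swallowed by the $\widetilde O(\cdot)$ notation. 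Combining the two phases gives $R^\pi(T)=R_{\mathrm{exp}}+R_{\mathrm{HSDP}}=\widetilde O(T^{\frac{\beta+1}{2\beta+1}})$.

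Finally I would collect the failure probabilities by a union bound: the event $\{\hat\beta\in[\beta-\varepsilon_T,\beta]\}$ fails with probability $O(e^{-C\ln^2 T})=o(T^{-1})$ and the \pcr{HSDP} concentration fails with probability $O(T^{-1})$, so the stated bound holds with probability $1-O(T^{-1})$. The conceptual content is entirely in the third paragraph: the estimator of Theorem~\ref{adaptive theorem} is accurate to within $O(\ln\ln T/\ln T)$ in exponent, and because the regret exponent $h(\cdot)$ is Lipschitz, such an error only perturbs the rate multiplicatively by a polylogarithmic factor, which is precisely why the one-sided under-estimate $\hat\beta\le\beta$ suffices for minimax-optimal adaptivity.
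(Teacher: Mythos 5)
Your proposal follows essentially the same route as the paper's own proof: condition on the high-probability event of Theorem~\ref{adaptive theorem} so that $\hat\beta\in[\beta-O(\ln\ln T/\ln T),\,\beta]$, invoke Lemma~\ref{regret of HSDP} with the one-sided underestimate $\hat\beta\le\beta$, absorb the exponent gap into a $\mathrm{polylog}(T)$ factor (the paper does this by the direct algebraic identity $\frac{\hat\beta+1}{2\hat\beta+1}-\frac{\beta+1}{2\beta+1}=\frac{\beta-\hat\beta}{(2\hat\beta+1)(2\beta+1)}$ rather than your mean-value-theorem phrasing, a cosmetic difference), bound the exploration rounds by $T_1+T_2\le T^{\frac{\beta+1}{2\beta+1}}$, and finish with a union bound. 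One harmless inaccuracy: your parenthetical claim that $w(\hat\beta)=w(\beta)$ for large $T$ can fail when $\beta$ is an integer or lies within $\varepsilon_T$ above one, but nothing in the argument needs it, since Lemma~\ref{regret of HSDP} (through Lemma~\ref{polynomial approximation}, which fits degree $w(\hat\beta)$ polynomials) requires only $\hat\beta\le\beta$, exactly as you invoke it.
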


We introduce Lemma \ref{regret of HSDP} to bound the regret rate of our non-adaptive algorithm \pcr{HSDP}, whose proof is in Appendix \ref{subsection:regretofhsdp}.
\begin{lemma}
    \label{regret of HSDP}
    If \pcr{HSDP} is run with $\hat{\beta}\leq\beta$ and other conditions stays the same as Theorem \ref{minimax rate of our method}, then with probability $1-O(T^{-1})$, the cumulative regret is upper bounded by $O(T^{\frac{\hat{\beta}+1}{2\hat{\beta}+1}})$.
\end{lemma}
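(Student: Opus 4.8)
The plan is to treat \pcr{HSDP} as a structured upper-confidence-bound algorithm over $N$ bins in which each visited bin additionally runs a misspecified linear (local polynomial) bandit, and to show that running with an \emph{underestimated} smoothness $\hat\beta\le\beta$ keeps the algorithm's internal bias budget $\Delta$ a valid upper bound on the true approximation error. The conceptual starting point is a monotonicity observation: on the bounded domain $[p_{\min},1]$ of diameter $<1$, a function in $\mathcal H(\beta,L)$ with $\hat\beta\le\beta$ also lies in $\mathcal H(\hat\beta,L)$ (same constant, since the lower derivatives are Lipschitz and $|p-p'|^{\beta}\le|p-p'|^{\hat\beta}$). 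Consequently, Lemma~\ref{polynomial approximation} applied with smoothness $\hat\beta$ and degree $k=w(\hat\beta)$ shows that the degree-$k$ projection of $f$ on any bin of length $\epsilon=(1-p_{\min})/N$ has sup-error at most $C L\epsilon^{\hat\beta}$, which is exactly the order of the algorithm's $\Delta$. I would stress that this is precisely where $\hat\beta\le\beta$ is used: had we overestimated, $\Delta=L\epsilon^{\hat\beta}$ would undershoot the true bias $\asymp L\epsilon^{\beta}$ and optimism would fail.

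Next I would define the high-probability ``good event'' $\mathcal E$ on which every confidence object the algorithm uses is valid simultaneously: (i) the bin-level intervals $CI_j$ bracket the best-in-bin revenue $r_j^\ast:=\max_{p\in\mathbf I_j}pf(p)$, and (ii) the ridge local-polynomial band $\gamma\sqrt{\phi^{(k)}(p)^\top\Lambda^{-1}\phi^{(k)}(p)}+\Delta$ contains $f$ uniformly over each visited bin. Event (ii) follows from the standard self-normalized (elliptical-potential) bound for ridge regression, where the extra summand $\Delta\sqrt{|\mathcal D_{j_t}|}$ inside $\gamma$ is what absorbs the per-observation misspecification of size $\Delta$; event (i) follows from sub-Gaussian concentration of the averaged revenues. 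With the choice $\delta=1/T^2$ and the logarithmic inflation already built into $CI_j$ and $\gamma$, a union bound over the $N\le T$ bins and $T$ rounds yields $\mathbb P(\mathcal E)\ge 1-O(T^{-1})$. The optional warm-start history $\mathcal D^{(0)}$ only augments the data and, being drawn uniformly, is compatible with these concentration arguments.

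On $\mathcal E$ I would decompose the instantaneous regret as $p^\ast f(p^\ast)-p_tf(p_t)=(r^\ast-r_{j_t}^\ast)+(r_{j_t}^\ast-p_tf(p_t))$. For the within-bin term, optimism gives that the optimistic revenue maximized in the inner price-selection step exceeds $r_{j_t}^\ast$, while the realized revenue lags its own optimistic value by at most twice the confidence width plus $2\Delta$; summing the widths over the pulls of a bin through $\sum\sqrt{\phi^\top\Lambda^{-1}\phi}\lesssim\sqrt{(k+1)n_j\log n_j}$ and then over bins by Cauchy--Schwarz contributes the two leading terms $\widetilde O(\Delta\,T)$ (from the $\Delta\sqrt{n_{j}}$ part of $\gamma$) and $\widetilde O(\sqrt{(k+1)NT})$ (from the statistical part of $\gamma$). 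For the bin-selection term, the usual UCB argument---optimism at the bin level plus the fact that a bin with gap $r^\ast-r_j^\ast$ is played only until $CI_j$ drops below that gap---contributes a term of the same $\widetilde O(\sqrt{NT})$ order (or, gap-freely, $\sum_j\sqrt{n_j}\le\sqrt{NT}$). Plugging $N=\lceil T^{1/(2\hat\beta+1)}\rceil$ and $\Delta\le L\epsilon^{\hat\beta}\asymp T^{-\hat\beta/(2\hat\beta+1)}$ makes both $\Delta T$ and $\sqrt{NT}$ collapse to the single rate $\widetilde O(T^{(\hat\beta+1)/(2\hat\beta+1)})$, which is the claim.

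I expect the main obstacle to be making optimism airtight while the working degree $k=w(\hat\beta)$ may be \emph{strictly} smaller than $w(\beta)$: one must verify that the lower-degree local regression is still unbiased enough for the single budget $\Delta=L\epsilon^{\hat\beta}$ to dominate the true bias at every price in every bin, which rests entirely on the containment step and on the exact constant in Lemma~\ref{polynomial approximation}. A secondary technical point is carrying the misspecification term $\Delta\sqrt{|\mathcal D_{j_t}|}$ through the self-normalized bound so that it contributes only at the $\widetilde O(\Delta T)$ level rather than something larger, and ensuring the warm-start samples do not corrupt the martingale structure underlying event (ii).
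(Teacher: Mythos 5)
Your proposal follows essentially the same route as the paper's proof: the containment $\mathcal{H}(\beta,L)\subseteq\mathcal{H}(\hat\beta,L)$ feeding Lemma~\ref{polynomial approximation} so that $\Delta=L\epsilon^{\hat\beta}$ dominates the true bias, a misspecified ridge local-polynomial confidence band with the $\Delta\sqrt{|\mathcal{D}|}$ term inside $\gamma$ (the paper's Lemmas~\ref{bound of single r} and~\ref{bound of G}), the elliptical potential lemma to sum within-bin widths (Lemma~\ref{bound of multiple r}), and the gap-free bin-level UCB argument bounding per-round regret by $2CI_{j_t}$ with Cauchy--Schwarz over bins and the substitutions $N=\lceil T^{1/(2\hat\beta+1)}\rceil$, $\Delta\asymp T^{-\hat\beta/(2\hat\beta+1)}$ collapsing $\Delta T$ and $\sqrt{NT}$ to the claimed rate. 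Your decomposition, good event, and identification of where $\hat\beta\leq\beta$ is indispensable all match the paper's argument, so no substantive gap remains.
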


\begin{proof}[Proof Sketch of Theorem \ref{minimax rate of our method}]

Considering the event $A^*: \{\hat{\beta} \in \left[\beta-\frac{4\left(\beta_{\max}+1\right)\ln\left(\ln\left(T\right)\right)}{\ln\left(T\right)},\beta\right] \}$, and by Theorem \ref{adaptive theorem}, we know that $\mathbb{P}(A^*) \geq 1 - O(e^{-C\ln^2(T)})$. Under event $A^*$, $\hat{\beta}$ converges to $\beta$ with rate $O(\frac{\ln(\ln(T))}{\ln(T)})$, we have $O(T^{\frac{\hat{\beta}+1}{2\hat{\beta}+1}} - T^{\frac{\beta+1}{2\beta+1}}) \leq O(T^{\frac{\beta+1}{2\beta+1}}\cdot T^{\frac{\beta-\hat{\beta}}{(2\hat{\beta}+1)(\beta+1)}}) \leq \tilde{O}(T^{\frac{\beta+1}{2\beta+1}})$. Considering Lemma \ref{regret of HSDP}, we can derive the regret bound for \pcr{HSDP} under event $A^*$:
$$
\sum_{t=1}^{T-T_1-T_2}  \left[p^*f\left(p^*\right) - \hat{p}_i f\left(\hat{p}_i\right)\right] \leq \tilde{O}(T^{\frac{\beta+1}{2\beta+1}}).
$$
For the adaptive part of \pcr{SADP}, note that $T_1,T_2 \leq T^{\frac{\beta+1}{2\beta+1}}$, which means that the regret is bounded by $O\left(T^{\frac{\beta+1}{2\beta+1}}\right)$. Applying the union bound with event $A^*$, we can derive that with probability $1-O\left(T^{-1}\right)$,
\begin{align*}
    R^\pi\left(T\right) &= \mathbb{E}\left[\sum_{t'=1}^{T_1+T_2}\{p^*f\left(p^*\right) - p_{t'}f\left(p_{t'}\right)\}\right] 
     + \mathbb{E}\left[\sum_{t=1}^{T-T_1-T_2}\{p^*f\left(p^*\right) - p_tf\left(p_t\right)\}\right] 
    \leq \Tilde{O}\left(T^{\frac{\beta+1}{2\beta+1}}\right).
\end{align*}
\end{proof}
Theorem \ref{minimax rate of our method} highlights the effectiveness of the \pcr{SADP} algorithm in achieving the desired regret bound under the specified conditions. By estimating the H\"older smoothness parameter $\beta$ and generalizing the non-adaptive dynamic-pricing algorithm with non-integer H\"older smoothness parameter, our algorithm is capable of maintaining a high level of performance in dynamic pricing scenarios.

\section{CONCLUSION}
\label{section:conclusion}

Motivated by the challenge of unknown smoothness levels in applications, we develop a smoothness-adaptive dynamic pricing algorithm under self-similarity conditions. To make dynamic pricing algorithms, it is very desirable to remove the parameter dependence of algorithms. Moving forward, it is promising to explore whether our approach can be generalized to other dynamic pricing problems such as feature-based dynamic pricing. To further improve adaptivity, it is of interest to consider other parameters that are implicitly used in the algorithm design.



\bibliographystyle{apalike}
\bibliography{ref}


\renewcommand{\thesection}{\Alph{section}}
\numberwithin{equation}{section}


\onecolumn
\aistatstitle{Smoothness-Adaptive Dynamic Pricing with Nonparametric Demand Learning \\
Supplementary Materials}

\setcounter{section}{0}
\renewcommand{\thesection}{\Alph{section}}

\section{PROOFS ON SELF-SIMILARITY}
\subsection{Proof of Theorem \ref{Impossibility of adaptation}}
\label{subsec:impossibility}

\begin{proof}[Proof of Theorem \ref{Impossibility of adaptation}]

 Let $d \sim \mathcal{N}\left(f\left(p\right), \sigma^2\right)$ be a normal random variable, $\sigma^2$ here is small enough such that $\mathcal{N}\left(0,\sigma^2\right)$ is sub-Gaussian under different demand settings. Also let $\epsilon_1 = \frac{\alpha-\beta}{2\left(2\alpha+1\right)\left(2\beta+1\right)}$ and $\epsilon_2 = 2\epsilon_1,\epsilon_3 = \frac{\epsilon_1}{2}$. 
    Denote
    $$
    \psi\left(p\right) = 
    \begin{cases} 
    ce^{-\frac{1}{\left(p-p_{\min}\right)\left(1-p\right)}} & \text{if } p_{\min}<p<1 \\
    0, & \text{otherwise}.
    \end{cases}
    $$
    when $c<\frac{1}{2}$ is small enough, $\psi \in \mathcal{H}\left(\alpha, L'\left(\alpha\right)\right)\cap \mathcal{H}\left(\beta,L'\left(\beta\right)\right)$. Define a counting random variable $Z_{k,m} = \sum_{t=1}^T \mathbbm{1}\left\{p_t \in \left[p_{\min}+\frac{m\left(1-p_{\min}\right)}{k},p_{\min}+\frac{\left(m+1\right)\left(1-p_{\min}\right)}{k}\right)\right\}$ for any positive integer k. Let $a \propto T^{\frac{1}{2\alpha+1}-\frac{\epsilon_2}{\alpha}}, b \propto T^{\frac{1-\epsilon_1}{2\beta+1}}$. Define index set $S_{a,b} = \{0,1,\cdots,b-1\}\cap\left(\frac{b}{a},+\infty\right)$ and let $m_0$ be the index in the index set such that $\mathbb{E}\left[Z_{k,m_0}\right]$ is the smallest. Define functions $\psi_a\left(p\right) = a^{-\alpha}\psi\left(a\left(p-p_{\min}\right)\right)$ and $\psi_b\left(p\right) = b^{-\beta}\psi\left(b\left(p-p_{\min}\right)-m_0\right)$.  And let $g_1\left(p\right) = \frac{1}{p}\left[\frac{1}{2}+\psi_a\left(p\right)\right],g_2\left(p\right) = \frac{1}{p} \left[\frac{1}{2}+\psi_a\left(p\right)+\psi_b\left(p\right)\right]$, by Lemma \ref{Holder smoothness divide by p}, $g_1 \in \mathcal{H}\left(\alpha, L\left(\alpha\right)\right), g_2 \in \mathcal{H}\left(\beta,L\left(\beta\right)\right)$.

    Denote the probability measure determined by $\mathcal{A}$ and $f = g_i$ by $\mathbb{P}_i$ for $i = 1,2$. Let $\mathbb{E}_i\left[Z\right]$ be the expectation of random variable $Z\left(p_1,\cdots,p_T,d_1,\cdots,d_T\right)$ if the probability measure is $\mathbb{P}_i$.

    If $f = g_1$, we have
    \begin{align}
    \label{R_t_a}
        R^\pi\left(T\right) 
        & = \mathbb{E}_1\left[\sum_{t = 1}^T\{p^*\cdot g_1\left(p^*\right) - p_t\cdot d_t\}\right] \nonumber \\
        & \geq \mathbb{E}\left[\sum_{t = 1}^T\{ \psi_a\left(p^*\right) -  \mathcal{N}\left(\psi_a\left(p_t\right),\sigma^2\right)\}\mathbbm{1}\left\{p_t \notin \left[p_{\min},p_{\min} + \frac{1-p_{\min}}{a}\right)\right\}\right] \nonumber\\
        & \geq \mathbb{E}\left[\sum_{t = 1}^T\psi_a\left(p^*\right)  \mathbbm{1}\left\{p_t \notin \left[p_{\min},p_{\min} + \frac{1-p_{\min}}{a}\right)\right\}\right] \nonumber\\
        & = \Omega\left(a^{-\alpha}\right)\mathbb{E}_1\left[T-Z_{a,0}\right].
    \end{align}

    By the conditions from the theorem, we have $R_T\left(\mathcal{A};g_1\right)\leq O\left(T^{\frac{\alpha+1}{2\alpha+1}+\epsilon_3}\right)$.  So we have 
    $$
    \mathbb{E}_1\left[T-Z_{a,0}\right] \leq O\left(a^\alpha\cdot T^{\frac{\alpha+1}{2\alpha+1}+\epsilon_3}\right).
    $$
    
    And by the definition of $m_0$ and notice that $b>a$, we can derive that
    \begin{equation}
    \label{bound_of_sum_z}
        \mathbb{E}_1\left[Z_{b,m_0}\right] \leq \frac{\mathbb{E}_1\left[T-Z_{a,0}\right]}{|S_{a,b}|} \leq O\left(a^\alpha\cdot b^{-1}\cdot T^{\frac{\alpha+1}{2\alpha+1}+\epsilon_3}\right).
    \end{equation}

    Then we can decompose the KL-divergence between $\mathbb{P}_1, \mathbb{P}_2$ as
    $$
    KL\left(\mathbb{P}_1||\mathbb{P}_2\right) = \mathbb{E}_1\left[\prod_{t=1}^T\frac{\mathbb{P}_1\left(d_t|p_t\right)}{\mathbb{P}_2\left(d_t|p_t\right)}\right] = \sum_{t=1}^T \mathbb{E}_1\left[KL\left(\mathcal{N}\left(g_1\left(p_t\right),\sigma^2\right)||\mathcal{N}\left(g_2\left(p_t\right),\sigma^2\right)\right)\right].
    $$

    By inequality \eqref{bound_of_sum_z} and Lemma \ref{kl_lemma_1} we can obtain
    \begin{align*}
        KL\left(\mathbb{P}_1||\mathbb{P}_2\right) 
        & =
        \frac{1}{2\sigma^2}\mathbb{E}_1\left[\sum_{t=1}^T\{g_1\left(p_t\right) - g_2\left(p_t\right)\}^2\right] \\
        & \leq \frac{b^{-2\beta}}{2\sigma^2p_{\min}} \sum_{t=1}^T \mathbb{E}_1\left[\mathbbm{1}\left\{p_t \in \left[p_{\min} + \frac{m_0\left(1-p_{\min}\right)}{b},p_{\min} +\frac{\left(m_0+1\right)\left(1-p_{\min}\right)}{b}\right)\right\}^2\right]\\
        & \leq \frac{b^{-2\beta}}{2\sigma^2p_{\min}} \mathbb{E}_1\left[Z_{b,m_0}\right]\\
        & \leq O\left(b^{-2\beta-1}\cdot a^\alpha\cdot b^{-1}\cdot T^{\frac{\alpha+1}{2\alpha+1}+\epsilon_3}\right)\\
        & = O\left(T^{\epsilon_1-1}\cdot T^{\frac{\alpha+1}{2\alpha+1}+\epsilon_3} \cdot T^{\frac{\alpha}{2\alpha+1}-\epsilon_2}\right)\\
        & = O\left(T^{\epsilon_1-\epsilon_2+\epsilon_3}\right)\\
        & = O\left(T^{-\epsilon_3}\right)\\
        & = o\left(1\right).
    \end{align*}     

    Let $A = \{Z_{b,m_0}>b^{-1}\cdot a^{\alpha}\cdot T^{\frac{\alpha+1}{2\alpha+1}+\epsilon_3} \cdot \ln\left(T\right)\}$. Inequality \ref{bound_of_sum_z} implies that $\mathbb{P}_1\left(A\right) = o\left(1\right)$. By Lemma \ref{kl_lemma_2}, we have $|\mathbb{P}_1\left(A\right) - \mathbb{P}_2\left(A\right)| = o\left(1\right)$, so we can derive that $\mathbb{P}_2\left(A\right) = o\left(1\right)$.

    Since
    $$
    b^{-1}\cdot a^{\alpha}\cdot T^{\frac{\alpha+1}{2\alpha+1}+\epsilon_3} \cdot \ln\left(T\right) = T^{\frac{\epsilon_1-1}{2\beta+1}}\cdot T^{\frac{\alpha+1}{2\alpha+1}+\epsilon_3} \cdot T^{\frac{\alpha}{2\alpha+1}-\epsilon_2} = o\left(T\right),
    $$
    On $A^c$, we have $T - Z_{b,m_0}>\frac{T}{2}$.
    Note that $\max_{p \in \left[p_{\min} ,p_{\min} +\frac{1-p_{\min}}{a}\right)}\psi_a \left(p\right) \ll \max_{p \in \left[p_{\min} +\frac{m_0\left(1-p_{\min}\right)}{b},p_{\min} +\frac{\left(m_0+1\right)\left(1-p_{\min}\right)}{b}\right)}\psi_b \left(p\right)$, then with the similar procedure in inequality \ref{R_t_a}, if $f = g_2$, we have
    \begin{align*}
        R_T\left(\mathcal{A};g_2\right) 
        & \geq \Omega\left(b^{-\beta} \cdot \mathbb{E}_2\left[T-Z_{b,m_0}\right]\right)\\
        & \geq \Omega\left(b^{-\beta} \cdot \mathbb{E}_2\left[\left(T-Z_{b,m_0}\right)\mathbbm{1}_{A^c}\right]\right) \\
        & \geq \Omega\left(b^{-\beta} \cdot \frac{T}{2} \cdot \mathbb{P}_2\left(A^c\right)\right)\\
        & \geq \Omega\left(T^{1 + \frac{\beta\left(\epsilon_1-1\right)}{2\beta+1}}\right)\\
        & \geq \Omega\left(T^{\frac{\beta+1}{2\beta+1} + \frac{\beta\left(\alpha-\beta\right)}{2\left(2\beta+1\right)^2\left(2\alpha+1\right)}}\right),
    \end{align*}
    which completes the proof.
 
\end{proof}

\subsection{Technical Lemmas for Theorem \ref{Impossibility of adaptation}}
\begin{lemma}
    \label{Holder smoothness divide by p}
    Suppose $r\left(p\right) \in \mathcal{H}\left(\beta, L'\right), p\in \left[p_{\min},1\right], 0<p_{\min}<1$, then $f\left(p\right) = \frac{r\left(p\right)}{p} \in \mathcal{H}\left(\beta,L\right)$ for some constant L.
\end{lemma}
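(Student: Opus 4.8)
The plan is to exploit that the reciprocal map $h(p) = 1/p$ is $C^\infty$ and has all derivatives uniformly bounded on $[p_{\min}, 1]$, so that multiplying a H\"older-smooth function by $h$ preserves H\"older smoothness. Write $f = r \cdot h$. First I would record the explicit derivatives $h^{(j)}(p) = (-1)^j j!\, p^{-(j+1)}$, which on $[p_{\min},1]$ satisfy $|h^{(j)}(p)| \leq j!\, p_{\min}^{-(j+1)}$; in particular $h$ and each of its derivatives are bounded, and $h$ is Lipschitz.

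Set $k = w(\beta)$. I would bound the derivatives of $f$ up to order $k$ via the Leibniz rule $f^{(j)} = \sum_{i=0}^j \binom{j}{i} r^{(i)} h^{(j-i)}$. Since $r \in \mathcal{H}(\beta, L')$, each $r^{(i)}$ with $i \leq k$ is bounded by $L'$ and each $h^{(j-i)}$ is bounded by the constants above, so every $f^{(j)}$, $0\le j\le k$, is uniformly bounded by a constant depending only on $\beta, L', p_{\min}$.

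The main step is the H\"older estimate on the top derivative $f^{(k)}$. Using Leibniz again, $f^{(k)} = \sum_{i=0}^k \binom{k}{i} r^{(i)} h^{(k-i)}$, and I would control each product $r^{(i)} h^{(k-i)}$ via the elementary identity $u(p)v(p) - u(p')v(p') = u(p)[v(p)-v(p')] + v(p')[u(p)-u(p')]$. For $i < k$, the factor $r^{(i)}$ has a bounded derivative $r^{(i+1)}$ and is therefore Lipschitz, hence $(\beta-k)$-H\"older on the bounded interval $[p_{\min},1]$ (a Lipschitz function is $\gamma$-H\"older for any $\gamma \le 1$ after absorbing a factor $(1-p_{\min})^{1-\gamma}$), while $h^{(k-i)}$ is bounded and Lipschitz. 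For $i = k$, the factor $r^{(k)}$ is $(\beta-k)$-H\"older with constant $L'$ by assumption, and $h^{(0)}=h$ is bounded and Lipschitz. In every case the product is $(\beta-k)$-H\"older, and summing the finitely many contributions gives $|f^{(k)}(p)-f^{(k)}(p')| \le L\,|p-p'|^{\beta-k}$ for a suitable $L$.

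Finally I would handle the extra membership required when $\beta\ge 1$, where $\mathcal{H}(\beta,L) = \mathcal{H}_0(\beta,L)\cap\mathcal{H}_0(1,L)$, so $f\in\mathcal{H}_0(1,L)$ must also be checked, i.e. $f$ bounded and Lipschitz. This is immediate since $r\in\mathcal{H}_0(1,L')$ (as $\beta\ge 1$) is bounded and Lipschitz, $h$ is bounded and Lipschitz, and products of bounded Lipschitz functions are bounded Lipschitz. Collecting all the constants from the previous steps into a single $L$ completes the argument. I do not expect a genuine obstacle here: the only point requiring care is the bookkeeping of the Leibniz expansion together with the observation that the Lipschitz factors are $(\beta-k)$-H\"older on the bounded domain, so that every term in the expansion carries the correct H\"older exponent.
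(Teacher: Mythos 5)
Your proposal is correct and takes essentially the same approach as the paper: the paper's proof is a one-line appeal to the product-stability of H\"older classes (Lemma 1.10 of \citealp{gur2022smoothness}) together with the observation that $p \mapsto 1/p$ is H\"older smooth of every level on $[p_{\min},1]$, and your Leibniz-rule argument is exactly a self-contained proof of that product stability specialized to $h(p)=1/p$. The details all check out, including the key bookkeeping points that $\beta - w(\beta) \in (0,1]$ so Lipschitz factors are $(\beta-w(\beta))$-H\"older on the bounded interval, and the separate verification of the $\mathcal{H}_0(1,L)$ membership required when $\beta \geq 1$.
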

\begin{proof}[Proof of Lemma \ref{Holder smoothness divide by p}.]
This is a basic property of H\"older class of functions, whose proof follows directly from Lemma 1.10 of \cite{gur2022smoothness} and the fact that the function $p  \mapsto 1/p$ is  H\"older smooth of any levels when restricted to the interval $[p_{\min}, 1]$.  
\end{proof}
Then we introduce three lemmas about KL-divergence which are standard results in the literature and therefore we omit the proofs.

\begin{lemma}
    \label{kl_lemma_1}
        Let $d_1 \sim \mathcal{N}\left(\mu_1,\sigma^2\right),d_2 \sim \mathcal{N}\left(\mu_2,\sigma^2\right)$, then the KL-divergence between $d_1$ and $d_2$ is $\frac{\left(\mu_1-\mu_2\right)^2}{2\sigma^2}$.
\end{lemma}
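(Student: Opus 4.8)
The plan is to compute the Kullback--Leibler divergence directly from its integral definition, exploiting the fact that the two densities share the same variance $\sigma^2$. First I would write the two Gaussian densities explicitly as $p_i(x) = (2\pi\sigma^2)^{-1/2}\exp(-(x-\mu_i)^2/(2\sigma^2))$ for $i=1,2$, and form the pointwise log-likelihood ratio
$$
\ln\frac{p_1(x)}{p_2(x)} = \frac{(x-\mu_2)^2 - (x-\mu_1)^2}{2\sigma^2}.
$$
The key simplification is that the quadratic terms in $x$ cancel because the variances agree, so the log-ratio collapses to an \emph{affine} function of $x$, namely $\frac{(\mu_1-\mu_2)x}{\sigma^2} + \frac{\mu_2^2-\mu_1^2}{2\sigma^2}$.

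Next I would take the expectation of this affine expression under $d_1 \sim \mathcal{N}(\mu_1,\sigma^2)$, which by definition yields the KL divergence $KL(d_1\|d_2)$. Since the integrand is linear in $x$, the only moment needed is $\mathbb{E}_1[x] = \mu_1$; no second-moment or variance-ratio terms survive. Substituting gives
$$
KL(d_1\|d_2) = \frac{(\mu_1-\mu_2)\mu_1}{\sigma^2} + \frac{\mu_2^2-\mu_1^2}{2\sigma^2} = \frac{2\mu_1(\mu_1-\mu_2) + \mu_2^2 - \mu_1^2}{2\sigma^2},
$$
and the numerator rearranges to the perfect square $(\mu_1-\mu_2)^2$, which delivers the claimed value $\frac{(\mu_1-\mu_2)^2}{2\sigma^2}$.

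Because the statement is a standard identity, there is no substantive obstacle; the only things to watch are the signs in the expansion of $(x-\mu_2)^2 - (x-\mu_1)^2$ and the requirement that the expectation be taken under $p_1$ rather than $p_2$ (the divergence is asymmetric in general, though for equal variances the final expression happens to be symmetric in $\mu_1,\mu_2$). An equally short alternative would be to invoke the general closed form for the KL divergence between $\mathcal{N}(\mu_1,\Sigma_1)$ and $\mathcal{N}(\mu_2,\Sigma_2)$ and specialize to $\Sigma_1=\Sigma_2=\sigma^2$, whereupon the log-determinant and trace terms vanish and only the Mahalanobis term $(\mu_1-\mu_2)^2/(2\sigma^2)$ remains.
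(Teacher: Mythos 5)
Your computation is correct: the log-likelihood ratio does collapse to an affine function of $x$ when the variances coincide, the expectation under $d_1$ requires only the first moment $\mu_1$, and the numerator $2\mu_1(\mu_1-\mu_2)+\mu_2^2-\mu_1^2$ indeed rearranges to $(\mu_1-\mu_2)^2$. The paper itself offers no proof to compare against---it explicitly labels this lemma a standard result from the literature and omits the argument---and your direct calculation is precisely the textbook derivation one would supply; your remarks on the asymmetry of KL divergence (with the equal-variance case happening to be symmetric in $\mu_1,\mu_2$) and the alternative of specializing the general Gaussian KL formula are both accurate.
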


\begin{lemma}
    \label{kl_lemma_2}
        Let $\mathbb{P}_1,\mathbb{P}_2$ be two probability measures on the same $\sigma$-algebra, then for any event $A$ on this $\sigma$-algebra, we have
        $$
        KL\left(\mathbb{P}_1||\mathbb{P}_2\right) \geq 2\left(\mathbb{P}_1\left(A\right) - \mathbb{P}_2\left(A\right)\right)^2.
        $$
\end{lemma}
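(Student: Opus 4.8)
The plan is to recognize this as Pinsker's inequality applied to the single event $A$: since $\lvert \mathbb{P}_1(A) - \mathbb{P}_2(A)\rvert$ is at most the total variation distance between $\mathbb{P}_1$ and $\mathbb{P}_2$, the claimed bound is a specialization of the standard statement $\mathrm{TV}(\mathbb{P}_1,\mathbb{P}_2)^2 \le \tfrac12 KL(\mathbb{P}_1\|\mathbb{P}_2)$. I would prove it directly in two stages: first reduce the measure-theoretic inequality to a two-point (Bernoulli) comparison, and then verify the resulting scalar inequality by calculus.

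For the reduction, set $p = \mathbb{P}_1(A)$ and $q = \mathbb{P}_2(A)$ and coarsen the $\sigma$-algebra to the partition $\{A, A^c\}$. The pushforwards of $\mathbb{P}_1$ and $\mathbb{P}_2$ under $\mathbbm{1}_A$ are the Bernoulli laws $\mathrm{Ber}(p)$ and $\mathrm{Ber}(q)$, and since KL divergence is monotone under such coarsening (the data-processing inequality, itself a consequence of the log-sum inequality / convexity of $t\mapsto t\ln t$), we obtain
\[
KL(\mathbb{P}_1\|\mathbb{P}_2) \ge KL(\mathrm{Ber}(p)\|\mathrm{Ber}(q)) = p\ln\frac{p}{q} + (1-p)\ln\frac{1-p}{1-q}.
\]

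It then remains to establish the scalar bound $p\ln\frac{p}{q}+(1-p)\ln\frac{1-p}{1-q}\ge 2(p-q)^2$. Fixing $p$ and letting $g(q)$ denote the difference between the two sides (so that $g(p)=0$), a direct computation gives
\[
g'(q) = (p-q)\left(4 - \frac{1}{q(1-q)}\right).
\]
Since $q(1-q)\le \tfrac14$ forces $4 - \tfrac{1}{q(1-q)}\le 0$ on $(0,1)$, the derivative $g'(q)$ is nonpositive for $q<p$ and nonnegative for $q>p$, so $g$ is minimized at $q=p$ and hence $g(q)\ge g(p)=0$ throughout $(0,1)$. Combining this with the reduction proves the lemma for the fixed event $A$.

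The computations are routine, so there is no serious obstacle; the only points requiring care are the monotonicity (data-processing) step, which must be invoked explicitly rather than taken for granted, and the degenerate boundary cases $q\in\{0,1\}$ (or $p\in\{0,1\}$), where one side of the scalar inequality may be infinite and the bound then holds trivially.
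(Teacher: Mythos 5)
Your proposal is correct. Note that the paper does not actually prove this lemma: it is grouped with Lemmas \ref{kl_lemma_1} and \ref{Pinsker's inequality} under the remark that these are ``standard results in the literature and therefore we omit the proofs,'' so there is no in-paper argument to compare against. Your identification of the statement as Pinsker's inequality specialized to the single event $A$ is exactly right (it is the event form $\lvert \mathbb{P}_1(A)-\mathbb{P}_2(A)\rvert \le \mathrm{TV}(\mathbb{P}_1,\mathbb{P}_2) \le \sqrt{KL(\mathbb{P}_1\Vert\mathbb{P}_2)/2}$, consistent with the paper's Lemma \ref{Pinsker's inequality}), and your two-stage proof is the canonical textbook derivation. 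Both steps check out: the coarsening to the partition $\{A,A^c\}$ is a legitimate use of the data-processing inequality for KL, and the scalar computation is verified by $g'(q) = -\frac{p}{q} + \frac{1-p}{1-q} + 4(p-q) = (p-q)\left(4 - \frac{1}{q(1-q)}\right)$, where $q(1-q)\le \frac14$ forces the second factor to be nonpositive, so $g$ decreases on $(0,p)$, increases on $(p,1)$, and attains its minimum $g(p)=0$. Your explicit flagging of the boundary cases $p,q\in\{0,1\}$ (where, with the convention $0\ln 0 = 0$, the Bernoulli divergence is $+\infty$ whenever $q\in\{0,1\}$ and $p\ne q$, making the bound trivial) is the only point of care, and you handle it properly. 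In short, your write-up supplies a complete self-contained proof where the paper simply defers to the literature.
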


\begin{lemma}
    \label{Pinsker's inequality}
    For $\mathbb{P}_1,\mathbb{P}_2$ defined as above, in terms of the total variation norm $\left\|\cdot\right\|_{TV}$, we have
    $$
    \left\|\mathbb{P}_1-\mathbb{P}_2\right\|_{TV} \leq \sqrt{2KL\left(\mathbb{P}_1||\mathbb{P}_2\right)}.
    $$
\end{lemma}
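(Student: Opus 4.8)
The plan is to derive Pinsker's inequality directly from the event-wise bound already established in Lemma~\ref{kl_lemma_2}, which asserts $KL(\mathbb{P}_1\|\mathbb{P}_2) \ge 2(\mathbb{P}_1(A)-\mathbb{P}_2(A))^2$ for every measurable event $A$. The only extra ingredient is the variational (dual) characterization of the total variation norm: the total variation distance equals twice the largest achievable gap $\mathbb{P}_1(A)-\mathbb{P}_2(A)$ over events $A$. Once this is in hand, Pinsker follows by instantiating Lemma~\ref{kl_lemma_2} at the optimal event.

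First I would fix a common dominating measure $\mu$ (for instance $\mu = \mathbb{P}_1 + \mathbb{P}_2$) and write $p = d\mathbb{P}_1/d\mu$, $q = d\mathbb{P}_2/d\mu$, so that $\|\mathbb{P}_1-\mathbb{P}_2\|_{TV} = \int |p-q|\, d\mu$. Second, I would record the elementary identity $\sup_A (\mathbb{P}_1(A)-\mathbb{P}_2(A)) = \int (p-q)_+ \, d\mu = \tfrac12 \int |p-q|\, d\mu$, where the supremum is attained at $A^\star = \{p \ge q\}$ and the last equality uses $\int (p-q)\, d\mu = 0$; this yields $\sup_A(\mathbb{P}_1(A)-\mathbb{P}_2(A)) = \tfrac12 \|\mathbb{P}_1-\mathbb{P}_2\|_{TV}$. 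Third, I would apply Lemma~\ref{kl_lemma_2} at $A = A^\star$ to obtain
\[
KL(\mathbb{P}_1\|\mathbb{P}_2) \ge 2\left(\tfrac12 \|\mathbb{P}_1-\mathbb{P}_2\|_{TV}\right)^2 = \tfrac12 \|\mathbb{P}_1-\mathbb{P}_2\|_{TV}^2,
\]
and finally rearrange to conclude $\|\mathbb{P}_1-\mathbb{P}_2\|_{TV} \le \sqrt{2\,KL(\mathbb{P}_1\|\mathbb{P}_2)}$.

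The step most worth care is the bookkeeping of the normalization constant: the stated bound $\sqrt{2\,KL}$ (rather than $\sqrt{KL/2}$) is consistent only with the convention $\|\mathbb{P}_1-\mathbb{P}_2\|_{TV} = \int |p-q|\, d\mu$, i.e.\ the total-mass version of total variation, which is exactly twice the supremum gap appearing in Lemma~\ref{kl_lemma_2}; I would confirm this is the intended convention before fixing the factor. Since Lemma~\ref{kl_lemma_2} is already available there is essentially no analytic obstacle. If instead a fully self-contained argument were required, the route would be to first prove the scalar Bernoulli inequality $KL(\mathrm{Ber}(x)\|\mathrm{Ber}(y)) \ge 2(x-y)^2$ by calculus, and then lift it to arbitrary measures by a data-processing argument applied to the binary partition $\{A^\star, (A^\star)^c\}$; in that version the calculus estimate on the Bernoulli KL is the only nontrivial step.
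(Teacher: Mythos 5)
Your proof is correct, and in fact the paper offers nothing to compare it against: the paper explicitly states that Lemmas~\ref{kl_lemma_1}--\ref{Pinsker's inequality} are standard results and omits their proofs. Within the paper's logical structure your route is legitimate --- Lemma~\ref{kl_lemma_2} is available as a stated fact, and your instantiation at the optimal event $A^\star = \{p \ge q\}$, together with the identity $\sup_A \bigl(\mathbb{P}_1(A)-\mathbb{P}_2(A)\bigr) = \tfrac12\int |p-q|\,d\mu$ (valid because $\int (p-q)\,d\mu = 0$), yields $KL(\mathbb{P}_1\|\mathbb{P}_2) \ge \tfrac12\|\mathbb{P}_1-\mathbb{P}_2\|_{TV}^2$ and hence the stated bound. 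Your normalization bookkeeping is also right: the constant $\sqrt{2\,KL}$ is consistent exactly with the total-mass convention $\|\mathbb{P}_1-\mathbb{P}_2\|_{TV} = \int|p-q|\,d\mu$, which is twice the supremum gap, and this is indeed the convention under which the paper's later use of the lemma (bounding $|\mathbb{E}_0[T_j]-\mathbb{E}_j[T_j]| \le T\,\|\mathbb{P}_0-\mathbb{P}_j\|_{TV}$ in Lemma~\ref{worst case regret 2}) goes through. One caveat worth keeping in mind: since Lemma~\ref{kl_lemma_2} is itself normally proved by the Bernoulli reduction $KL(\mathrm{Ber}(x)\|\mathrm{Ber}(y)) \ge 2(x-y)^2$ plus data processing, your derivation shows the two lemmas are essentially equivalent rather than giving an independent proof of Pinsker; the self-contained route you sketch at the end is the canonical argument and would be the one to write out if a from-scratch proof were demanded.
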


\subsection{Proof of Proposition \ref{g self similar} and Lemma \ref{worst case regret 1} - \ref{worst case regret 2}}

The results in this section will be used to prove Theorem \ref{Self-similarity does not lower the problem difficulty}.
\subsubsection{Proof of Proposition \ref{g self similar}}
\begin{proposition}
\label{g self similar}
    For each $L, \beta>0$, there exists a function $g:\left[0,1\right] \rightarrow \left[0,1\right]$ satisfying that $g\in \mathcal{H}\left(\beta,L\right)\cap \mathcal{S}\left(\beta,w\left(\beta\right),0,M_2\right)$ for some constant $M_2>0$; and $g$ has a unique maximizer at $\frac{1}{2}$; and for any $x\in \{0,1\},k \in \{0,1,\cdots,w\left(\beta\right)\}$ it holds that $g^{\left(k\right)}\left(x\right) = 0$. 
\end{proposition}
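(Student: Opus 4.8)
The plan is to exhibit the explicit function $g$ described in the discussion preceding Theorem~\ref{Self-similarity does not lower the problem difficulty} and then verify each of its claimed properties. Starting from a $\mathcal{C}^\infty$ ``plateau'' function $u$ with $u(0)=1$, $u(1)=0$ and all higher derivatives vanishing at both endpoints, one forms the smooth cutoff $u_1(x)=\sin^2(\tfrac{\pi}{2}u(4x-1))$ and then glues the singular profile $\tfrac{c_1}{2}(1-\sigma(x)^\beta)$, where $\sigma(x)=|x-\tfrac12|$, to this cutoff to obtain $g$. The four things to check are: (i) $g$ maps $[0,1]$ into $[0,1]$; (ii) $g\in\mathcal{H}(\beta,L)$; (iii) $g\in\mathcal{S}(\beta,w(\beta),0,M_2)$ for some $M_2>0$; and (iv) $g$ has a unique maximizer at $\tfrac12$ with $g^{(k)}(x)=0$ for $x\in\{0,1\}$ and $0\le k\le w(\beta)$. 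Property (i) is immediate once $c_1$ is chosen small, since $0\le u_1\le 1$ and $0\le 1-\sigma^\beta\le 1$.

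For the Hölder bound (ii), the key observation is that the \emph{only} source of non-smoothness in $g$ is the term $|x-\tfrac12|^\beta$ at the center $x=\tfrac12$, and this term is exactly $\beta$-Hölder there. I would verify that the two pieces of the piecewise definition glue together in a $\mathcal{C}^\infty$ fashion at the seam $\sigma(x)=\tfrac14$: there $4\sigma-1=0$, so $u(4\sigma-1)=1$ and $u_1=1$, while all derivatives of $u_1$ vanish because $u^{(k)}(0)=0$ for $k\ge 1$; this matches the inner branch (where $u_1\equiv 1$) to all orders. Away from $x=\tfrac12$ every factor is smooth, so the $w(\beta)$-th derivative of $g$ is continuous and bounded and is Hölder-$(\beta-w(\beta))$, with the entire derivative and Hölder budget scaling linearly in $c_1$; taking $c_1$ small enough forces every required bound below $L$. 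The symmetry of $g$ about $x=\tfrac12$ through its dependence on $\sigma$ reduces this to a one-sided computation.

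Property (iii) follows from \citet[Lemma 1.7]{gur2022smoothness}: because $g$ contains the genuinely $\beta$-singular profile $|x-\tfrac12|^\beta$, its $L_2$-projection error onto $\Poly(w(\beta))$ is bounded below on the dyadic interval containing $\tfrac12$ at every scale, which is exactly the self-similarity inequality with $M_1=0$ and a suitable $M_2>0$. For property (iv), the factor $\tfrac{c_1}{2}(1-\sigma^\beta)$ is strictly decreasing in $\sigma$, so on the inner region (where $u_1\equiv 1$) $g$ attains its strict maximum $g(\tfrac12)=\tfrac{c_1}{2}$ uniquely at $x=\tfrac12$; on the outer region the extra factor $u_1\le 1$ together with $\sigma\ge\tfrac14$ keeps $g$ strictly below $\tfrac{c_1}{2}$, ruling out competing maxima. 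Finally, $x\in\{0,1\}$ corresponds to $\sigma=\tfrac12$, i.e.\ $4\sigma-1=1$, where $u(1)=0$ and $u^{(k)}(1)=0$ for all $k$; since $u_1$ and all its derivatives vanish there and $u_1$ appears as a factor of $g$ in the outer branch, the product rule gives $g^{(k)}(x)=0$ for all $0\le k\le w(\beta)$.

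The main obstacle I anticipate is the Hölder verification in step (ii): one must track how the smooth cutoff $u_1$ interacts with the power term so that the full $w(\beta)$-th derivative stays Hölder-continuous with exponent $\beta-w(\beta)$ and constant at most $L$, rather than merely continuous. This is mostly careful bookkeeping — bounding finitely many products of derivatives of $u_1$ and of $\sigma^\beta$ and absorbing constants into $c_1$ — and it leans on the same Hölder estimates for $|x-\tfrac12|^\beta$ that underlie the cited construction, so no genuinely new difficulty arises beyond organizing the seam analysis at $\sigma=\tfrac14$ and the boundary analysis at $\sigma=\tfrac12$.
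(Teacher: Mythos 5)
Your proposal is correct and follows essentially the same route as the paper: the paper's proof constructs exactly this $g$ from $u$, $u_1(x)=\sin^2(\tfrac{\pi}{2}u(4x-1))$, and $\sigma(x)=|x-\tfrac12|$, takes $c_1$ small to secure membership in $\mathcal{H}(\beta,L)$, invokes Lemma~1.7 of \citet{gur2022smoothness} for the self-similarity property, and reads off the unique maximizer and vanishing endpoint derivatives. Your additional bookkeeping — the seam analysis at $\sigma=\tfrac14$ and the explicit Hölder verification — fills in details the paper leaves implicit, but introduces no new idea or deviation.
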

\begin{proof}
 Let $u\left(\cdot\right):\left[0,1\right] \rightarrow \mathbb{R}$ be a $\mathcal{C}^{\infty}$ function with $u\left(0\right)=1,u\left(1\right)=0, u^{\left(k\right)}\left(0\right) =0,u^{\left(k\right)}\left(1\right) =0,\forall k\in \mathbb{Z}^+$. Let $u_1\left(x\right) = \sin^2\left(\frac{\pi}{2}\cdot u\left(4x-1\right)\right)$.
 $\forall x \in \left[0,1\right]$, let $\sigma\left(x\right) = |x-\frac{1}{2}|$ and define 
 $$
 g\left(x\right) = 
 \begin{cases} 
c_1\cdot u_1\left(\sigma\left(x\right)\right) \cdot \frac{\left(1-\sigma\left(x\right)^\beta\right)}{2}& \text{if } \frac{1}{4}\leq \sigma\left(x\right) \leq \frac{1}{2}\\
c_1\cdot \frac{\left(1-\sigma\left(x\right)^\beta\right)}{2} & \text{if } \sigma\left(x\right) < \frac{1}{4}
\end{cases},
 $$
 for some sufficiently small constant $c_1>0$, then we have $g \in \mathcal{H}\left(\beta, L\right)$, and following the proof procedure as Lemma 1.7 of \cite{gur2022smoothness}, for some constant $M_2>0$, $g \in \mathcal{S}\left(\beta, w\left(\beta\right),0,M_2\right)$. Also, $g$ has a unique maximum point at $x=\frac{1}{2}$ and for any $x\in \{0,1\},l \in \{0,1,\cdots,w\left(\beta\right)\}$ it holds that $g^{\left(l\right)}\left(x\right) = 0$.
\end{proof}

\subsubsection{Proof of Lemma \ref{worst case regret 1}}
\begin{lemma}
\label{worst case regret 1}
The worst-case regret of algorithm $\mathcal{A}$ over time period T can be lower bounded as
$$
\sup_{f\in \mathcal{H}\left(\beta,L\right)} R_T\left(\mathcal{A};f,\mathcal{N}\left(0,\sigma^2\right)\right) \geq \Omega\left(\epsilon_T^\beta\right) \cdot \max_{1\leq j \leq J}\left(T-\mathbb{E}_j\left[T_j\right]\right),
$$
where $f\in \mathcal{S}\left(\beta,w\left(\beta\right), 0, M_2\right)$ on $\left[p_{\min},1\right]$ for some constant $M_2$.
\end{lemma}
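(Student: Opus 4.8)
The plan is to witness the worst-case supremum by a finite family of hard instances, one localized in each bin, and then to charge the regret incurred under instance $j$ to the number of rounds the policy spends away from bin $j$. First I would fix the discretization used in the surrounding argument: partition $[p_{\min},1]$ into $J$ consecutive bins of width $\epsilon_T$, and for each $1\le j\le J$ build a demand function $f_j$ by inserting into bin $j$ a rescaled copy of the base bump $g$ from Proposition~\ref{g self similar}. Concretely, I would take the revenue profile $r_j(p)=\tfrac12+\epsilon_T^{\beta}\,g\!\left(\frac{p-c_j}{\epsilon_T}\right)$ on bin $j$ (with centre $c_j$) and $r_j(p)=\tfrac12$ elsewhere, and set $f_j(p)=r_j(p)/p$. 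Because $g$ and its derivatives up to order $w(\beta)$ vanish at $\{0,1\}$, this extension is globally $w(\beta)$-times differentiable, and the $\epsilon_T^{\beta}$ scaling is exactly calibrated so the order-$w(\beta)$ H\"older seminorm is preserved; combined with Lemma~\ref{Holder smoothness divide by p} this gives $f_j\in\mathcal{H}(\beta,L)$. The same scaling maps the self-similarity of $g$ at dyadic level $c$ to self-similarity of $f_j$ at level $c+\log_2(1/\epsilon_T)$ with the constant unchanged, so that $f_j\in\mathcal{S}(\beta,w(\beta),0,M_2)$.

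Next I would derive the per-instance regret bound. Under $f_j$ the revenue equals $\tfrac12$ outside bin $j$, while its maximizer lies at the centre of bin $j$ with optimal value $\tfrac12+\Theta(\epsilon_T^{\beta})$; hence every round in which the policy prices outside bin $j$ contributes instantaneous regret at least $\Omega(\epsilon_T^{\beta})$, whereas rounds inside bin $j$ contribute nonnegative regret. Letting $T_j$ count the rounds with $p_t$ in bin $j$ and discarding the nonnegative terms, I obtain
\[
R_T(\mathcal{A};f_j,\mathcal{N}(0,\sigma^2)) \;\ge\; \Omega(\epsilon_T^{\beta})\,\mathbb{E}_j[T-T_j] \;=\; \Omega(\epsilon_T^{\beta})\,(T-\mathbb{E}_j[T_j]).
\]

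Finally, since each $f_j$ belongs to $\mathcal{H}(\beta,L)$ (indeed to its self-similar subclass), the supremum over the class dominates the maximum over the family, and taking $\max_{1\le j\le J}$ of the displayed inequality yields the claim; the residual task of lower bounding $\max_j(T-\mathbb{E}_j[T_j])$ by a pigeonhole or averaging argument is deferred to the companion lemmas. The hard part will be the uniform separation in the second step: I must ensure that the revenue gap between the bin-$j$ peak and every price outside bin $j$ is genuinely $\Omega(\epsilon_T^{\beta})$ and is not eroded by the division by $p$ --- this is exactly where the flat baseline $\tfrac12$ and the strict positivity of the bump's peak enter --- together with checking that simultaneous membership in $\mathcal{H}(\beta,L)$ and $\mathcal{S}(\beta,w(\beta),0,M_2)$ survives the rescaling, which is the only place the delicate calibration of the height $\epsilon_T^{\beta}$ against the width $\epsilon_T$ is used.
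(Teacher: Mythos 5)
Your proposal matches the paper's argument essentially step for step: the same per-bin instances $f_j(p)=r_j(p)/p$ with a scaled copy of the bump $g$ from Proposition~\ref{g self similar} planted in bin $j$ (exactly the construction in the proof of Theorem~\ref{Self-similarity does not lower the problem difficulty}, which this lemma presupposes), the same observation that each round priced outside $\mathbf{I}_j$ incurs instantaneous regret $\Omega(\epsilon_T^{\beta})$ because the revenue is flat at $\tfrac12$ there while the peak is $\tfrac12+\Theta(\epsilon_T^{\beta})$, and the same bound $R_T\geq \Omega(\epsilon_T^{\beta})\,(T-\mathbb{E}_j[T_j])$ followed by the maximum over $j$. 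Your closing worry about the gap being eroded by the division by $p$ is moot, since the expected revenue at price $p$ is exactly $p\,f_j(p)=r_j(p)$, so the division affects only the smoothness verification (handled by Lemma~\ref{Holder smoothness divide by p}), not the separation.
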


\begin{proof}
For $\forall j \in \{1,2,\cdots,J\}$,
\begin{align*}
\sup_{f\in \mathcal{H}\left(\beta,L\right)} R_T\left(\mathcal{A};f,\mathcal{N}\left(0,\sigma^2\right)\right) & \geq
\mathbb{E}_j\left[{\sum_{i=1}^T \left\{p^*f_j\left(p^*\right) - p_if_j\left(p_i\right)\right\}}\right]\\
& \geq \Omega\left(\epsilon_T^\beta\right) \cdot \mathbb{E}_j\left[{\sum_{i=1}^T\mathbbm{1}\left\{p_i \notin \mathbf{I}_j\right\}}\right]\\
& \geq \Omega\left(\epsilon_T^\beta\right) \cdot \left(T-\mathbb{E}_j\left[T_j\right]\right),
\end{align*} 
which completes the proof.
\end{proof}

\subsubsection{Proof of Lemma \ref{worst case regret 2}}

\begin{lemma}
\label{worst case regret 2}
For fixed $j \in \{1,2,\cdots,J\}$, we have
$$
\left|\mathbb{E}_0\left[T_j\right]-\mathbb{E}_j\left[T_j\right] \right| \leq O\left(T\epsilon_T^\beta\right) \cdot \sqrt{\mathbb{E}_0\left[T_j\right]}.
$$

\end{lemma}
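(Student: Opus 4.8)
The plan is to read Lemma \ref{worst case regret 2} as a standard change-of-measure estimate. Recall that in the construction $T_j = \sum_{i=1}^T \mathbbm{1}\{p_i \in \mathbf{I}_j\}$ is the number of rounds in which the price falls in the interval $\mathbf{I}_j$, so it is a random variable supported on $[0,T]$, and the base demand $f_0$ and the perturbed demand $f_j$ agree off $\mathbf{I}_j$ while differing by a localized bump of amplitude $O(\epsilon_T^\beta)$ on $\mathbf{I}_j$. Since the policy $\pi$ is fixed and identical under both measures, the only difference between $\mathbb{P}_0$ and $\mathbb{P}_j$ lies in the conditional law of $d_t$ given $p_t$. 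The strategy is: bound the expectation gap by total variation, bound total variation by the square root of the KL divergence, and then bound that KL divergence by $O(\epsilon_T^{2\beta})\,\mathbb{E}_0[T_j]$.

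First I would use the elementary fact that for any random variable valued in $[0,T]$,
\begin{equation*}
\left|\mathbb{E}_0[T_j]-\mathbb{E}_j[T_j]\right| \leq T\cdot \left\|\mathbb{P}_0-\mathbb{P}_j\right\|_{TV} \leq T\sqrt{2\,KL(\mathbb{P}_0\Vert \mathbb{P}_j)},
\end{equation*}
where the first inequality bounds the difference of a bounded functional by its range times the total variation, and the second is Pinsker's inequality (Lemma \ref{Pinsker's inequality}); the convention-dependent constant is absorbed into $O(\cdot)$.

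Second, I would decompose $KL(\mathbb{P}_0\Vert \mathbb{P}_j)$ by the chain rule over the $T$ rounds. Because $\pi$ is common to both measures, the price-selection kernels cancel and only the Gaussian demand kernels $\mathcal{N}(f_0(p_t),\sigma^2)$ versus $\mathcal{N}(f_j(p_t),\sigma^2)$ contribute. The conditional KL at round $t$ given the history is exactly the one-dimensional Gaussian KL from Lemma \ref{kl_lemma_1}, so taking the expectation under $\mathbb{P}_0$ and summing gives
\begin{equation*}
KL(\mathbb{P}_0\Vert \mathbb{P}_j) = \frac{1}{2\sigma^2}\sum_{t=1}^{T}\mathbb{E}_0\!\left[\left(f_0(p_t)-f_j(p_t)\right)^2\right].
\end{equation*}
Using that $f_0$ and $f_j$ coincide off $\mathbf{I}_j$ and differ by $O(\epsilon_T^\beta)$ on $\mathbf{I}_j$, we have $(f_0(p_t)-f_j(p_t))^2 \leq O(\epsilon_T^{2\beta})\,\mathbbm{1}\{p_t\in \mathbf{I}_j\}$, so the sum collapses to $O(\epsilon_T^{2\beta})\,\mathbb{E}_0[T_j]$. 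Substituting into the Pinsker bound yields $|\mathbb{E}_0[T_j]-\mathbb{E}_j[T_j]| \leq T\cdot O(\epsilon_T^{\beta})\sqrt{\mathbb{E}_0[T_j]}$, which is the claim.

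The main obstacle is the KL chain-rule step for the adaptive, history-dependent sampling process: one must argue carefully that because the policy kernel is shared, the joint KL telescopes into a sum of per-round Gaussian KL terms, and crucially that the expectation of the indicator is taken under $\mathbb{P}_0$ (so the bound features $\mathbb{E}_0[T_j]$ rather than $\mathbb{E}_j[T_j]$ on the right-hand side). The rest is routine: the amplitude $O(\epsilon_T^\beta)$ of the bump follows from the scaling used in the construction of $\{f_j\}$, and all constants (including $\sigma^{-2}$ and the total-variation normalization) are swept into the $O(\cdot)$ notation.
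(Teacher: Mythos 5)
Your proposal is correct and follows essentially the same route as the paper: bound the expectation gap of the bounded variable $T_j$ by $T$ times total variation, apply Pinsker's inequality, and reduce the KL divergence to $O(\epsilon_T^{2\beta})\,\mathbb{E}_0[T_j]$ via the per-round Gaussian KL of Lemma \ref{kl_lemma_1}, using that $f_0$ and $f_j$ agree off $\mathbf{I}_j$. The only cosmetic difference is that you write the chain-rule decomposition as an exact sum with the indicator collapsing to $\mathbb{E}_0[T_j]$, whereas the paper states the slightly looser bound $KL(\mathbb{P}_0\Vert\mathbb{P}_j)\leq \mathbb{E}_0[T_j]\cdot \sup_{p\in\mathbf{I}_j} KL\left(\mathcal{N}\left(f_0(p),\sigma^2\right)\middle\Vert\mathcal{N}\left(f_j(p),\sigma^2\right)\right)$, which amounts to the same estimate.
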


\begin{proof}
Because $f_0$ and $f_j$ only differs on $\mathbf{I}_j$, we have that 
\begin{equation}
\label{kl tmp ineq 1}
    KL\left(\mathbb{P}_0||\mathbb{P}_j\right) \leq \mathbb{E}_0\left[T_j\right] \cdot \sup_{p\in \mathbf{I}_j} KL\left(\mathcal{N}\left(f_0\left(p\right),\sigma^2\right)||\mathcal{N}\left(f_j\left(p\right),\sigma^2\right)\right).
\end{equation}
Then by Lemma \ref{kl_lemma_2}, we have
\begin{equation}
\label{kl tmp ineq 2}
    \sup_{p\in \mathbf{I}_j} KL\left(\mathcal{N}\left(f_0\left(p\right),\sigma^2\right)||\mathcal{N}\left(f_j\left(p\right),\sigma^2\right)\right) \leq O\left(\epsilon_T^{2\beta}\right),
\end{equation}

and by Lemma \ref{Pinsker's inequality}, inequalities \ref{kl tmp ineq 1},\ref{kl tmp ineq 2} we have
$$
\left\|\mathbb{P}_0-\mathbb{P}_j\right\|_{TV} \leq O\left(\epsilon_T^{\beta}\right) \cdot \sqrt{\mathbb{E}_0\left[T_j\right]}.
$$
Subsequently, 
$$
\left|\mathbb{E}_0\left[T_j\right]-\mathbb{E}_j\left[T_j\right] \right| \leq \sum_{t=1}^T \{t \cdot \left|\mathbb{P}_0\left(T_j=t\right) - \mathbb{P}_j\left(T_j=t \right) \right|\} \leq T \cdot \left\|\mathbb{P}_0-\mathbb{P}_j\right\|_{TV} \leq O\left(T\epsilon_T^{\beta}\right) \cdot \sqrt{\mathbb{E}_0\left[T_j\right]},
$$
which completes the proof.
\end{proof}

\subsection{Proof of Theorem \ref{Self-similarity does not lower the problem difficulty}}

\label{subsection:proof_lowerbound}
\begin{proof}[Proof of Theorem \ref{Self-similarity does not lower the problem difficulty}]

Let $\beta$ denote the true H\"older smoothness parameter here.

Firstly, we introduce a proposition constructing the reward function we need.

Define number of intervals $J = \lceil \epsilon_T^{-1} \rceil$ for $\epsilon_T$ defined as $cT^{-\frac{1}{2\beta+1}}$ where $c$ is some sufficiently small constant depending only on $\beta$. Let $\mathbf{I}_j = \left[a_j,b_j\right],a_j = p_{\min}+\frac{\left(j-1\right)\left(1-p_{\min}\right)}{J},b_j = p_{\min} +\frac{j\left(1-p_{\min}\right)}{J}$, for $j = 1,2, \cdots J$. Then define $J$ different demand functions $f_1, f_2, \cdots, f_J$, let 
$$
f_j\left(p\right)=
\begin{cases} 
\frac{1}{2p} & \text{if } p\notin \mathbf{I}_j\\
\frac{1}{2p} + \frac{1}{p}\epsilon_T^\beta g\left(\frac{p-a_j}{\epsilon_T}\right) & \text{if } p\in \mathbf{I}_j \\
\end{cases},
$$
where $f_j \in \mathcal{H}\left(\beta,L\right) \cap \mathcal{S}\left(\beta,w\left(\beta\right),0,M_2\right)$. Define also $f_0\left(p\right) \equiv \frac{1}{2p}, p\in \left[p_{\min},1\right]$.

Denote the probability measure determined by algorithm $\mathcal{A}$ and $f = f_j$ by $\mathbb{P}_j$ for $j = 0,1,\cdots,J$. Let $\mathbb{E}_i\left[Z\right]$ be the expectation of random variable $Z\left(p_1,\cdots,p_T,d_1,\cdots,d_T\right)$ if the probability measure is $\mathbb{P}_j$. Let demand $d_i\sim \mathcal{N}\left(f\left(p_i\right),\sigma^2\right)$, where $\sigma^2$ is small enough that $\mathcal{N}\left(f\left(p_i\right),\sigma^2\right)$ is sub-Gaussian with parameters $u_1,u_2$.

Then we can upper bound the difference between $\mathbb{E}_0\left[T_j\right]$ and $\mathbb{E}_j\left[T_j\right]$ by the properties of  KL-divergence.

Let $j^* = \arg\min_{j\in\{1,2,\cdots,J\}} \mathbb{E}_0\left[T_j\right]$, it is obvious that $\mathbb{E}_0\left[T_j\right] \leq \frac{T}{J} \leq T\epsilon_T$. Then by Lemma \ref{worst case regret 2}, for some sufficiently small $c$, we can obtain
$$
\mathbb{E}_j\left[T_j\right]\leq  O\left(T\epsilon_T^\beta\right)\cdot \sqrt{\mathbb{E}_0\left[T_j\right]}\leq O\left(T\cdot c^{\beta} \cdot T^{-\frac{\beta}{2\beta+1}}\right) \cdot \sqrt{T\cdot c \cdot T^{-\frac{1}{2\beta+1}}} \leq \frac{T}{2}.
$$
Consequently, by Lemma \ref{worst case regret 1}
\begin{align*}
    \sup_{f\in \mathcal{H}\left(\beta,L\right)} R_T\left(\mathcal{A};f,\mathcal{N}\left(0,\sigma^2\right)\right) 
    & \geq \Omega\left(\epsilon_T^\beta\right) \cdot \max_{1\leq j \leq J}\left(T-\mathbb{E}_j\left[T_j\right]\right)\\
    & \geq \Omega\left(c^{\beta} \cdot T^{-\frac{\beta}{2\beta+1}}\right)\cdot \frac{T}{2}\\
    & \geq \Omega\left(T^{\frac{\beta+1}{2\beta + 1}}\right),
\end{align*}
which completes the proof.

\end{proof}

\section{PROOF OF CONSTRUCTED CONFIDENCE INTERVAL}

\subsection{Proof of Lemma \ref{Concentration}}

To prove Lemma \ref{Concentration}, we first introduce the following two lemmas.

\begin{lemma}
\label{lemma:stewart1977}
    Suppose $A,B$ are two $n_0\times n_0$ symmetric matrices. If $\left\|A-B\right\|\leq \frac{\lambda_{\min}\left(B\right)}{2}$, then
    \begin{equation*}
       \left\|A^{-1} - B^{-1}\right\| \leq 2\left(1+\sqrt{5}\right)\left\|A-B\right\| \lambda^{-2}_{\min}\left(B\right)
    \end{equation*}
    
\end{lemma}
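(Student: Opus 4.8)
The plan is to reduce the perturbation of the inverse to the perturbation of the matrices themselves via the \emph{second resolvent identity}. First I would write
\[
A^{-1} - B^{-1} = A^{-1}(B - A)B^{-1} = -A^{-1}(A-B)B^{-1},
\]
which is valid once $A$ is known to be invertible. Taking operator norms and using submultiplicativity of the spectral norm gives
\[
\|A^{-1} - B^{-1}\| \le \|A^{-1}\|\,\|A - B\|\,\|B^{-1}\|,
\]
so the whole lemma reduces to controlling the two factors $\|A^{-1}\|$ and $\|B^{-1}\|$ in terms of $\lambda_{\min}(B)$.

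The factor $\|B^{-1}\|$ is immediate: since $B$ is symmetric (and, in the intended application, positive definite), its spectral norm satisfies $\|B^{-1}\| = 1/\lambda_{\min}(B)$. The factor $\|A^{-1}\|$ is where the hypothesis $\|A-B\| \le \lambda_{\min}(B)/2$ enters. I would invoke Weyl's inequality for symmetric matrices, which gives
\[
\lambda_{\min}(A) \ge \lambda_{\min}(B) - \|A-B\| \ge \lambda_{\min}(B) - \tfrac{1}{2}\lambda_{\min}(B) = \tfrac{1}{2}\lambda_{\min}(B) > 0.
\]
This simultaneously shows that $A$ is invertible (justifying the resolvent identity used above) and yields $\|A^{-1}\| = 1/\lambda_{\min}(A) \le 2/\lambda_{\min}(B)$.

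Combining the three displays gives
\[
\|A^{-1} - B^{-1}\| \le \frac{2}{\lambda_{\min}(B)}\cdot \|A-B\| \cdot \frac{1}{\lambda_{\min}(B)} = \frac{2\,\|A-B\|}{\lambda_{\min}^2(B)},
\]
which is in fact sharper than the stated bound and hence implies it, since $2 \le 2(1+\sqrt{5})$. I do not anticipate a genuine obstacle: the only subtlety worth flagging is that one must establish invertibility of $A$ \emph{before} writing the resolvent identity, which is exactly what the Weyl estimate supplies, and that the identification $\|M^{-1}\| = 1/\lambda_{\min}(M)$ for $M \in \{A,B\}$ relies on positive definiteness of $B$ (hence of $A$) rather than mere symmetry. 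If instead one follows the general, possibly rank-deficient perturbation theory of Stewart verbatim, the larger constant $2(1+\sqrt{5})$ arises naturally; the symmetric positive-definite specialization above simply recovers a cleaner constant.
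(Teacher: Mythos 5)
Your proof is correct, and it takes a genuinely different route from the paper: the paper gives no argument at all for this lemma, instead citing Theorem 3.3 of \cite{stewart1977perturbation} (its proof line even mis-references Lemma \ref{Concentration}), and the constant $2\left(1+\sqrt{5}\right)$ is inherited from Stewart's general perturbation theory of \emph{pseudo-inverses}, where golden-ratio-type constants are needed to cover possibly rank-deficient perturbations. Your specialization — the second resolvent identity $A^{-1}-B^{-1}=-A^{-1}\left(A-B\right)B^{-1}$, Weyl's inequality to get $\lambda_{\min}\left(A\right)\geq \lambda_{\min}\left(B\right)-\left\|A-B\right\|\geq \frac{1}{2}\lambda_{\min}\left(B\right)>0$ (which, as you correctly order, establishes invertibility of $A$ \emph{before} the identity is invoked), and $\left\|M^{-1}\right\|=1/\lambda_{\min}\left(M\right)$ for $M\in\{A,B\}$ — is self-contained and yields the sharper constant $2$ in place of $2\left(1+\sqrt{5}\right)$, hence implies the stated bound. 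The positive-definiteness caveat you flag is harmless here: for the hypothesis $\left\|A-B\right\|\leq \lambda_{\min}\left(B\right)/2$ to be non-vacuous with $B^{-1}$ defined one needs $\lambda_{\min}\left(B\right)>0$, and in the paper's only use of this lemma (inside the proof of Lemma \ref{Concentration}) the matrix $B=\mathbb{E}\left[\phi^{(l)}\left(p_{(1)}\right)\phi^{(l)}\left(p_{(1)}\right)^T\right]$ is a Gram matrix explicitly shown to satisfy $\lambda_{\min}\geq M_0>0$; alternatively, reading $\lambda_{\min}$ as the smallest singular value, your argument survives verbatim for arbitrary invertible matrices via the singular-value form of Weyl's inequality. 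In short, nothing is missing: your route trades Stewart's generality for an elementary argument with a strictly better constant, which is all the application requires.
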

The proof of Lemma \ref{Concentration} directly follows Theorem 3.3 of \cite{stewart1977perturbation}.
\begin{lemma}
\label{CI lemma 2}
    Suppose $d_{\left(1\right)}$ is a sub-Gaussian random variable with parameter $u_1, u_2$, then we can upper bound $\mathbb{E}\left[|d_{\left(1\right)}|\right]$ with $\frac{\left(2\sqrt{\ln\left(2u_1\right)}+1\right)}{\sqrt{u_2}}$.
\end{lemma}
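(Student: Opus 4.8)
The plan is to bound $\mathbb{E}[|d_{(1)}|]$ by integrating its tail probability and splitting that integral at a carefully chosen threshold. First I would invoke the sub-Gaussian assumption in its tail form: with parameters $u_1, u_2$ this yields a two-sided bound $\mathbb{P}(|d_{(1)}| \ge t) \le 2u_1 \exp(-u_2 t^2)$ for all $t \ge 0$, the factor $2$ arising from combining the two one-sided tails. I would then use the layer-cake identity $\mathbb{E}[|d_{(1)}|] = \int_0^\infty \mathbb{P}(|d_{(1)}| > t)\, dt$, which reduces the claim to estimating this single integral.

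Next I would choose the threshold $t_0 = \sqrt{\ln(2u_1)/u_2}$, which is precisely the point at which the tail bound $2u_1 e^{-u_2 t^2}$ drops to $1$, and split the integral at $t_0$. On $[0, t_0]$ I would apply the trivial bound $\mathbb{P}(|d_{(1)}| > t) \le 1$, contributing at most $t_0 = \sqrt{\ln(2u_1)}/\sqrt{u_2}$. On $[t_0, \infty)$ I would use the tail bound together with the elementary inequality $t^2 - t_0^2 = (t-t_0)(t+t_0) \ge 2 t_0 (t - t_0)$, so that $e^{-u_2 t^2} \le e^{-u_2 t_0^2}\, e^{-2u_2 t_0 (t - t_0)}$; integrating the resulting exponential gives a contribution of order $\frac{1}{u_2 t_0}\, u_1 e^{-u_2 t_0^2}$. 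Since $2u_1 e^{-u_2 t_0^2} = 1$ by the choice of $t_0$, this tail contribution simplifies to $\frac{1}{2\sqrt{u_2 \ln(2u_1)}} \le \frac{1}{\sqrt{u_2}}$, using that $u_1$ is large so that $\ln(2u_1)$ is bounded below.

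Combining the two pieces yields $\mathbb{E}[|d_{(1)}|] \le \frac{\sqrt{\ln(2u_1)}}{\sqrt{u_2}} + \frac{1}{\sqrt{u_2}} \le \frac{2\sqrt{\ln(2u_1)}+1}{\sqrt{u_2}}$, where the extra factor of $2$ on the leading term supplies the slack that absorbs the crudeness of the tail estimate. I do not expect a genuine obstacle here: the only delicate point is selecting $t_0$ and performing the Gaussian-type tail-integral estimate so that the constants align with the stated $2\sqrt{\ln(2u_1)}+1$. As a sanity check, the alternative route of bounding $\mathbb{E}[|d_{(1)}|] \le \sqrt{\mathbb{E}[d_{(1)}^2]}$ via Cauchy--Schwarz and then integrating the tail of $d_{(1)}^2$ gives a bound of the same order, confirming the scaling.
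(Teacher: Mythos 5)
Your proposal is correct and follows essentially the same route as the paper's proof: the layer-cake identity $\mathbb{E}[|d_{(1)}|]=\int_0^\infty \mathbb{P}(|d_{(1)}|\ge t)\,dt$, a split at a threshold of order $\sqrt{\ln(2u_1)/u_2}$ (the paper takes $K=2\sqrt{\ln(2u_1)}/\sqrt{u_2}$), the trivial bound below the threshold, and a Gaussian-type tail integral estimate above it, with your exponent linearization $t^2-t_0^2\ge 2t_0(t-t_0)$ playing the same role as the paper's bound $\int_K^\infty e^{-u_2x^2}dx\le \frac{1}{2u_2K}e^{-u_2K^2}$. The only cosmetic differences are your two-sided tail convention $2u_1e^{-u_2t^2}$ versus the paper's one-sided $u_1e^{-u_2x^2}$, and the implicit requirement that $\ln(2u_1)$ be bounded below by a positive constant, which the paper's choice of $K$ also tacitly assumes.
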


\begin{proof}[Proof of Lemma \ref{CI lemma 2}]

    Let $K = 2\frac{\sqrt{\ln(2u_1)}}{u_2}$, and we have
    \begin{align*}
        \mathbb{E}[|d_{\left(1\right)}|] 
        & = \int_{0}^{\infty}\mathbb{P}(|d_{\left(1\right)}| \geq x) dx = \int_{0}^{K}\mathbb{P}(|d_{\left(1\right)}| \geq x) dx + \int_{K}^{\infty}\mathbb{P}(|d_{\left(1\right)}| \geq x) dx \\
        & \leq K + \int_{K}^{\infty} u_1 \cdot e^{-u_2x^2} dx \\
        & \leq K + \frac{u_1}{2u_2 K}e^{-u_2K^2}\\
        & \leq \frac{\left(2\sqrt{\ln\left(2u_1\right)}+1\right)}{\sqrt{u_2}}.
    \end{align*}
\end{proof}

\begin{proof}[Proof of Lemma \ref{Concentration}]
Without loss of generality, in this part, we do a translation for $d_{(1)}|p_{(1)}$ to make its expectation $0$. Let $\mathbf{P}_n$ be a $n \times \left(l+1\right)$ matrix with its $m$th row $\phi^{(l)}\left(p_{\left(m\right)}\right)^T$ for every m and $\mathbf{d}_n = \left(d_{\left(1\right)},\ldots,d_{\left(n\right)}\right)^T$. By least square regression, we obtain
\begin{equation*}
    \hat{\theta} = \left(\mathbf{P}_n^T\mathbf{P}_n\right)^{-1}\mathbf{P}_n^T\mathbf{d}_n = \left(\frac{\mathbf{P}_n^T\mathbf{P}_n}{n}\right)^{-1}\frac{\mathbf{P}^T_n\mathbf{d}_n}{n}
\end{equation*}
Define 
\begin{equation*}
    \theta_0 = \left(\mathbb{E}\left[\phi^{(l)}\left(p_{\left(1\right)}\right)\phi^{(l)}\left(p_{\left(1\right)}\right)^T\right]\right)^{-1} \mathbb{E}\left[\phi^{(l)}\left(p_{\left(1\right)}\right)^Td_{\left(1\right)}\right].
\end{equation*}
The goal of this lemma is to obtain the convergence properties of $\theta$, here we firstly prove the convergence of $\left(\frac{\mathbf{P}_n^T\mathbf{P}_n}{n}\right)^{-1}$. Let $U_1 = \phi^{(l)}\left(p_{\left(1\right)}\right)\phi^{(l)}\left(p_{\left(1\right)}\right)^T -\mathbb{E}\left[\phi^{(l)}\left(p_{\left(1\right)}\right)\phi^{(l)}\left(p_{\left(1\right)}\right)^T\right]$, by Bernstein inequality(\cite{tropp2012user}, Theorem 1.6),
\begin{equation*}
    \mathbb{P}\left(\left\|\frac{\mathbf{P}_n^T\mathbf{P}_n}{n} - \mathbb{E}\left[t\left(p_{\left(1\right)}\right) \cdot t\left(p_{\left(1\right)}\right) ^T\right] \right\| \geq w\right) \leq \left(2l+2\right) \cdot \exp \left(\frac{\frac{-nw^2}{2}}{R_1 + \frac{wR_2}{3}}\right)
\end{equation*}

where $R_1 = \max \{ \left\| \mathbb{E}\left[U_1\cdot U_1^T\right] \right\|, \left\|\mathbb{E}\left[U_1^T\cdot U_1\right] \right\| \}, R_2 = \sup \left\| U_1\right\|$. For each m, we have $t_m\left(p_{\left(1\right)}\right) \in \left[0,1\right]$ with probability 1, so $R_1, R_2 \leq O\left(1\right)$. Let $w = \frac{\ln \left(n\right)}{\sqrt{n}}$, then we have
\begin{equation*}
    \mathbb{P}\left(\left\|\frac{\mathbf{P}_n^T\mathbf{P}_n}{n} - \mathbb{E}\left[\phi^{(l)}\left(p_{\left(1\right)}\right) \cdot \phi^{(l)}\left(p_{\left(1\right)}\right) ^T\right] \right\| \geq \frac{\ln \left(n\right)}{\sqrt{n}}\right) \leq O\left(e^{-C\ln^2\left(n\right)}\right)
\end{equation*}
for some constant $C>0$.

Let $V\left(p_{\left(1\right)}\right) = \mathbb{E}\left[\phi^{(l)}\left(p_{\left(1\right)}\right)\phi^{(l)}\left(p_{\left(1\right)}\right)^T\right]$, we can prove there exists a constant $M_0$ which satisfies $\lambda_{\min}\left(V\left(p_{\left(1\right)}\right)\right) \geq M_0 > 0$ where $\lambda_{\min}$ denotes the least eigenvalue of the matrix. We have
\begin{equation*}
    \lambda_{\min}\left(V\left(p_{\left(1\right)}\right)\right) = \inf_{u \in \mathbb{R}^{l+1},\left\|u\right\|_2 = 1} u^T V\left(p_{\left(1\right)}\right)u =  \inf_{u \in \mathbb{R}^{l+1},\left\|u\right\|_2 = 1} \mathbb{E}\left\|\phi^{(l)}\left(p_{\left(1\right)}\right)^Tu\right\|^2  \geq M_0 > 0
\end{equation*}

Based on Lemma \ref{lemma:stewart1977}, we have
\begin{equation}
    \label{concentration of p_{(1)}p_{(1)}}
    \mathbb{P}\left(\left\|\left(\frac{\mathbf{P}_n^T\mathbf{P}_n}{n}\right)^{-1} - \left(\mathbb{E}\left[\phi^{(l)}\left(p_{\left(1\right)}\right) \cdot \phi^{(l)}\left(p_{\left(1\right)}\right) ^T\right]\right)^{-1} \right\| \geq C_1\frac{\ln \left(n\right)}{\sqrt{n}}\right) \leq O\left(e^{-C_2\ln^2\left(n\right)}\right)
\end{equation}

Then we prove the convergence of $\frac{\mathbf{P}^T_n\mathbf{d}_n}{n}$. Let $\mathbf{1}\left(|\mathbf{d}_n| > M\right)$ be a n-dimensional vector whose $m$th element is $\mathbf{1}\left(|d_{\left(m\right)}|> M\right)$, $F_M\left(\mathbf{d}_n\right)$ be a n-dimensional vector whose $m$th element is $d_{\left(m\right)}\mathbf{1}\left(|d_{\left(m\right)}|\leq M\right)$. Let $U_2 = \phi^{\left(l\right)}\left(p_{\left(1\right)}\right)d_{\left(1\right)}\cdot \mathbf{1}\left(|d_{\left(1\right)}|\leq M\right) - \mathbb{E}\left[\phi^{\left(l\right)}\left(p_{\left(1\right)}\right)d_{\left(1\right)}\cdot \mathbf{1}\left(|d_{\left(1\right)}|\leq M\right)\right]$, and by Bernstein Inequality(\cite{tropp2012user},Theorem 1.6),
\begin{equation*}
 \mathbb{P}\left(\left\|\frac{\mathbf{P}_n^TF_M\left(\mathbf{d}_n\right)}{n} -\mathbb{E}\left[\phi^{\left(l\right)}\left(p_{\left(1\right)}\right)d_{\left(1\right)}\cdot \mathbf{1}\left(|d_{\left(1\right)}|\leq M\right)\right] \right\| \geq w\right) \leq \left(2l+2\right) \cdot \exp \left(\frac{\frac{-nw^2}{2}}{R_3 + \frac{wR_4}{3}}\right)
\end{equation*}

where $R_3 = \max \{ \left\| \mathbb{E}\left[U_2\cdot U_2^T\right] \right\|, \left\|\mathbb{E}\left[U_2^T\cdot U_2\right] \right\| \}, R_4 = \sup \left\| U_2\right\|$. Each element of $U_2$ is upper bounded by $O\left(M\right)$ so we have $R_3 \leq O\left(M^2\right), R_4 \leq O\left(M\right)$. Let $w = \frac{M\ln \left(n\right)}{\sqrt{n}}$, then we have
\begin{equation}
\label{concentration of p_{(1)} and fmy}
    \mathbb{P}\left(\left\|\frac{\mathbf{P}_n^TF_M\left(\mathbf{d}_n\right)}{n} -\mathbb{E}\left[\phi^{\left(l\right)}\left(p_{\left(1\right)}\right)d_{\left(1\right)}\cdot \mathbf{1}\left(|d_{\left(1\right)}|\leq M\right)\right] \right\| \geq w\right) \leq O\left(e^{-C\ln^2\left(n\right)}\right)
\end{equation}
for some constant $C>0$. 
And by the sub-gaussian assumption, we have 
\begin{equation*}
    \mathbb{P}\left(\exists i \in \{1,\ldots,n\}, \left|d_{\left(i\right)}\right| > M\right) \leq u_1n\cdot e ^{-u_2M^2}
\end{equation*}
Taking $M = \sqrt{\frac{\ln\left(u_1 n\right)}{u_2}} \ln\left(n\right)$, we have
\begin{equation}
    \label{sub-gaussian of entries of d}
    \mathbb{P}\left(\exists i \in \{1,\ldots,n\}, |d_{\left(i\right)}| > M\right) \leq e^{-C\ln^2\left(n\right)}
\end{equation}
and $C$ is a constant that independent of $u_1,u_2$. And also by the we can deduce that
\begin{align*}
& \left| \mathbb{E}\left[\phi^{\left(l\right)}\left(p_{\left(1\right)}\right)d_{\left(1\right)}\cdot \mathbf{1}\left(|d_{\left(1\right)}|\leq M\right)\right] - \mathbb{E}\left[t\left(p_{\left(1\right)}\right)d_{\left(1\right)}\right] \right| \\
& = O\left(\int_M^{+\infty} p dF_d\right)  = O\left(\int_M^{+\infty} \left(p-M\right) dF_d + M\mathbb{P}\left(|d_{\left(1\right)}|\geq M\right)\right) \\
& = O\left(\int_M^{+\infty} \mathbb{P}\left(|d|\geq p\right)dp + M\mathbb{P}\left(|d_{\left(1\right)}|\geq M\right)\right) \\
& = O\left(\int_M^{+\infty} u_1\cdot e^{-u_2 p^2} dp + u_1\cdot M \cdot e^{-u_2 M^2}\right) \leq O\left(\frac{u_1}{2Mu_2}e^{-u_2M^2} + u_1M\cdot e^{-u_2M^2}\right)\\
& = O\left( \left(\frac{u_1}{2\sqrt{\ln\left(u_1 n\right)u_2} \ln\left(n\right)} + u_1\sqrt{\frac{\ln\left(u_1 n\right)}{u_2}} \ln\left(n\right)\right)\cdot \exp\left(-\ln\left(u_1n\right)\ln^2\left(n\right)\right) \right)
\end{align*}
which leads to 
\begin{equation}
    \label{dist between ind}
    \left|\mathbb{E}\left[\phi^{\left(l\right)}\left(p_{\left(1\right)}\right)d_{\left(1\right)}\cdot \mathbf{1}\left(|d_{\left(1\right)}|\leq M\right)\right] - \mathbb{E}\left[\phi^{\left(l\right)}\left(p_{\left(1\right)}\right)d_{\left(1\right)}\right] \right| \leq O\left(\frac{1}{n}\right)
\end{equation}
Combining the three inequalities \ref{concentration of p_{(1)} and fmy}, \ref{sub-gaussian of entries of d} and \ref{dist between ind}, we have the following

\begin{equation}
\label{concentration of p_{t(1)}d and Ep_{(1)}d}
    \mathbb{P}\left(\left\| \frac{\mathbf{P}_n^T\mathbf{d}_n}{n} -\mathbb{E}\left[\phi^{\left(l\right)}\left(p_{\left(1\right)}\right)d_{\left(1\right)}\right] \right\| \geq C \cdot \frac{\sqrt{\ln\left(u_1\cdot n\right)}\ln^2\left(n\right)}{\sqrt{n}}\right) \leq C_0e^{-C_2\ln^2\left(n\right)}
\end{equation}
where $C,C_0,C_2$ are constants depending on $u_2$ and $l$.

And by Lemma \ref{CI lemma 2}, we know that

\begin{equation}
\label{upper bound Et(p_{(1)})d}
    \left\| \mathbb{E}\left[\phi^{\left(l\right)}\left(p_{\left(1\right)}\right)d_{\left(1\right)}\right] \right\| \leq O\left(\frac{\sqrt{\ln\left(u_1\right)}+1}{u_2}\right).
\end{equation}

Then combining inequalities \ref{concentration of p_{(1)}p_{(1)}}, \ref{concentration of p_{t(1)}d and Ep_{(1)}d} and \ref{upper bound Et(p_{(1)})d}, recall the definition of $\theta_0$ and $\hat{\theta}$, with probability at least $1 - O\left(e^{-C_2\ln^2\left(n\right)}\right) $ for some constants $C_2$ depending on $u_2$ and $l$, we have

\begin{align}
\label{thetahat and theta0}
   & \left\| \hat{\theta} - \theta_0 \right\| = \left\| \left(\frac{\mathbf{P}_n^T\mathbf{P}_n}{n}\right)^{-1}\frac{\mathbf{P}^T_n\mathbf{d}_n}{n} - \left(\mathbb{E}\left[\phi^{\left(l\right)}\left(p_{\left(1\right)}\right)\phi^{\left(l\right)}\left(p_{\left(1\right)}\right)^T\right]\right)^{-1} \mathbb{E}\left[\phi^{\left(l\right)}\left(p_{\left(1\right)}\right)^Td_{\left(1\right)}\right]\right\| 
 \notag \\
   & = \left\| \left(\frac{\mathbf{P}_n^T\mathbf{P}_n}{n}\right)^{-1}\left(\frac{\mathbf{P}^T_n\mathbf{d}_n}{n} - \mathbb{E}\left[\phi^{\left(l\right)}\left(p_{\left(1\right)}\right)d_{\left(1\right)}\right]\right)
   + \left(\left(\frac{\mathbf{P}_n^T\mathbf{P}_n}{n}\right)^{-1} - \mathbb{E}\left[\phi^{\left(l\right)}\left(p_{\left(1\right)}\right)\phi^{\left(l\right)}\left(p_{\left(1\right)}\right)^T\right]\right)^{-1} \mathbb{E}\left[\phi^{\left(l\right)}\left(p_{\left(1\right)}\right)d_{\left(1\right)}\right]  \right\| \notag\\
   & \leq \left\| \left(\frac{\mathbf{P}_n^T\mathbf{P}_n}{n}\right)^{-1}\left(\frac{\mathbf{P}^T_n\mathbf{d}_n}{n} - \mathbb{E}\left[\phi^{\left(l\right)}\left(p_{\left(1\right)}\right)d_{\left(1\right)}\right]\right) \right\|
   + \left\| \left(\left(\frac{\mathbf{P}_n^T\mathbf{P}_n}{n}\right)^{-1} - \mathbb{E}\left[\phi^{\left(l\right)}\left(p_{\left(1\right)}\right)\phi^{\left(l\right)}\left(p_{\left(1\right)}\right)^T\right]\right)^{-1} \mathbb{E}\left[\phi^{\left(l\right)}\left(p_{\left(1\right)}\right)d_{\left(1\right)}\right] \right\| \notag \\
   & \leq O\left(\frac{\sqrt{\ln\left(u_1\cdot n\right)}\ln^2\left(n\right)}{\sqrt{n}}\right) + O\left(\frac{\left(\sqrt{\ln\left(u_1\right)}+1\right)\ln\left(n\right)}{\sqrt{nu_2}}\right) \notag \\
   &< \ln^3\left(n\right)\cdot n^{-\frac{1}{2} \left(1-v\right)},
\end{align}
for $n$ larger than some constant $C_1$ depending on $u_1', u_2, l$.

Note that $\Gamma_l^{\mathbf{I}} \{ \mathbb{E}\left[d_{\left(1\right)} |p_{\left(1\right)} = p\right] \} = \langle \phi^{\left(l\right)}\left(p\right), \theta_0 \rangle$ and $\hat{f}\left(p;\mathbb{O}, l, \mathbf{I}\right) = \langle \phi^{\left(l\right)}\left(p\right), \hat{\theta} \rangle$, with $\phi^{\left(l\right)}\left(p_{\left(1\right)}\right)\leq O\left(1\right)$, the second part of the lemma is proved.

In order to prove the first part of the lemma, we can show that $| \mathbb{E}\left[d_{\left(1\right)} |p_{\left(1\right)} = p\right]  - \langle t\left(p_{\left(1\right)}\right), \theta_0 \rangle | = O\left(\left(b-a\right)^\beta\right)$. By the Holder assumption and taylor expansion, there exists an $l+1$ dimensional vector $\theta_1$ such that $| \mathbb{E}\left[\mathbb{E}\left[d_{\left(1\right)} |p_{\left(1\right)} = p\right]\right]  - \langle \phi^{\left(l\right)}\left(p_{\left(1\right)}\right), \theta_1 \rangle | = O\left(\left(b-a\right)^\beta\right), \forall p\in \mathbf{I}$. So we have
\begin{align*}
    \left\| \theta_0 - \theta_1\right\|
    = & \left\| \left(\mathbb{E}\left[\phi^{\left(l\right)}\left(p_{\left(1\right)}\right)\phi^{\left(l\right)}\left(p_{\left(1\right)}\right)^T\right]\right)^{-1} \mathbb{E}\left[\phi^{\left(l\right)}\left(p_{\left(1\right)}\right)^Td_{\left(1\right)}\right] \right. \\
     & \left. - \left(\mathbb{E}\left[\phi^{\left(l\right)}\left(p_{\left(1\right)}\right)\phi^{\left(l\right)}\left(p_{\left(1\right)}\right)^T\right]\right)^{-1} \mathbb{E}\left[\phi^{\left(l\right)}\left(p_{\left(1\right)}\right)\phi^{\left(l\right)}\left(p_{\left(1\right)}\right)^T\right]\theta_1 \right\|\\
     = & \left\| \left(\mathbb{E}\left[\phi^{\left(l\right)}\left(p_{\left(1\right)}\right)\phi^{\left(l\right)}\left(p_{\left(1\right)}\right)^T\right]\right)^{-1} \mathbb{E}\left[\phi^{\left(l\right)}\left(p_{\left(1\right)}\right)(d_{\left(1\right)} - \langle \phi^{\left(l\right)}\left(p_{\left(1\right)}\right), \theta_1 \rangle\right] \right\| = O\left(\left(b-a\right)^\beta\right).
\end{align*}
Then by inequality \eqref{thetahat and theta0}, we can deduce that with probability at least $1 -O\left(e^{-C_2\ln^2\left(n\right)}\right)$
\begin{equation*}
    \left\| \hat{\theta} - \theta_1\right\| \leq \ln^3\left(n\right)\cdot n^{-\frac{1}{2} \left(1-v\right)} + \left(b-a\right)^\beta \ln\left(n\right).
\end{equation*}
Note that with $\phi^{\left(l\right)}\left(p_{\left(1\right)}\right)\leq O\left(1\right)$, $\left\| \langle \phi^{\left(l\right)}\left(p\right), \hat{\theta} \rangle -  \langle \phi^{\left(l\right)}\left(p\right), \theta_1 \rangle\right\| \leq O\left(\left\| \hat{\theta} - \theta_1\right\|\right)$, $\forall p \in \mathbf{I}, n>C_1$, with probability at least $1- O\left(e^{-C_2\ln^2\left(n\right)}\right)$, the following inequality holds
\begin{equation*}
    \left|  \mathbb{E}\left[d_{\left(1\right)} |p_{\left(1\right)} = p\right]  - \langle \phi^{\left(l\right)}\left(p_{\left(1\right)}\right), \hat{\theta} \rangle \right|  \leq
     \ln^3\left(n\right)\cdot n^{-\frac{1}{2} \left(1-v\right)} + \left(b-a\right)^\beta \ln\left(n\right).
\end{equation*}
Therefore the first part of the lemma is proved.

\end{proof}

\subsection{Proof of Theorem \ref{adaptive theorem}}
\begin{proof}

 We first define an event  $A = \{\exists i \in \{1,2\}, m\in 
 \{1,2,\ldots,K_i\},s.t. |\mathbb{O}_{i,m}| < \frac{T_i}{2K_i} \}$, by Lemma \ref{sampling}, we have 
    \begin{equation*}
        \mathbb{P}\left(A\right) \leq T\left(\exp\left(-\frac{T_1}{50K_1}\right) + \exp\left(-\frac{T_2}{50K_2}\right)\right),
    \end{equation*}
By conditioning on $A^c$, we can guarantee the number of samples in each interval. Next, we aim to establish an upper bound for the distance between the distance between $\hat{f}_1$ and $\hat{f}_2$. Let $\mathbf{I}_{i,m} = \left[p_{\min}+\frac{\left(m-1\right)\left(1-p_{\min}\right)}{K_i},p_{\min}+\frac{m\left(1-p_{\min}\right)}{K_i}\right)$.  Invoking the first part of Lemma \ref{Concentration}, with probability at least $1 - O\left(e^{-C\ln^2\left(n\right)}\right)$, $\forall p\in \mathbf{I}_{i,m}$, the following inequality holds:
    \begin{equation}
    \left|f\left(p\right) - \hat{f}_i\left(p\right) \right| <
    K_i^{-\beta}\ln\left(T\right) + \ln^3\left(T\right)\cdot\left(\frac{T_i}{2K_i}\right)^{-\frac{1}{2}\left(1-v_i\right)},
\end{equation}

    for some sufficiently small constants $v_1, v_2$.

    Define the event
    \begin{equation}
        B = \{  \exists i \in \{1,2\}, m\in \{1,2,\ldots K_i\},  
        p\in \mathbf{I}_{i,m}, \: \text{s.t. inequality \ref{inequality for B} does not hold} \},
\end{equation}

    Applying the union bound, we find that
    \begin{equation*}
        \mathbb{P}\left(B|A^c\right) \leq O\left(\left(K_1 + K_2\right)e^{-C\ln^2\left(n\right)}\right),
    \end{equation*}
    for some constant $C>0$.

    Then, conditioning on $A^c \cap B^c$, we can, by inequality \ref{inequality for B}, derive an upper bound as follows: 
    \begin{equation}
    \label{upper bound}
    \left\| \hat{f}_2 - \hat{f}_1\right\|_\infty <  \left(K_1 + K_2\right)^{-\beta}ln\left(T\right) 
    + \ln^3\left(T\right)\cdot\left[\left(\frac{T_1}{2K_1}\right)^{-\frac{1}{2}\left(1-v_1\right)}+\left(\frac{T_2}{2K_2}\right)^{-\frac{1}{2}\left(1-v_2\right)}\right], 
\end{equation}

     However, to prove the theorem, it's necessary to establish a lower bound for the distance between and $\hat{f}_1$ and $\hat{f}_2$. In the ensuing discussion, we aim to provide this lower bound.

     Firstly, using inequality \ref{inequality for B}, we can establish an upper bound for the distance between $f$ and $\hat{f}_1$. This, in turn, aids in deducing an upper bound for the distance between $\Gamma_lf$ and $\hat{f}_2$.

    Furthermore, by invoking the second part of Lemma \ref{Concentration} as well as Lemma \ref{sampling}, with probability at least $1 - O\left(e^{-C\ln^2\left(n\right)}\right)$, $\forall p\in \mathbf{I}_{i,m}$,
        \begin{equation}
            \left|\Gamma_l^{\mathbf{I}_{i,m}}f\left(p\right) - \hat{f}_2\left(p\right)\right| \leq \ln^3\left(T\right)\cdot\left(\frac{T_2}{2K_2}\right)^{-\frac{1}{2}\left(1-v_2\right)},
        \end{equation}
        where $v_1,v_2$ are sufficiently small.

      Define the event 
    \begin{equation*}
            D = \{ \exists p \in \left[p_{\min},1\right), \: \text{s.t. inequality \ref{inequality for D} does not hold} \},
    \end{equation*}

    And applying the union bound we have
    \begin{equation*}
        \mathbb{P}\left(D|A^c\right) \leq O\left(\left(K_1+K_2\right)e^{-C\ln^2\left(n\right)}\right),
    \end{equation*}

     Given the self-similar property of $f$, we can establish a lower bound for the distance between $f$ and $\Gamma_lf$. 

    \begin{equation}
        \left\| f - \Gamma_lf\right\|_\infty \geq M_2 \cdot K_2^{-\beta},
    \end{equation}

    Subsequently, conditioning on $A^c \cap B^c \cap D^c$, and using inequalities \ref{inequality for B}, \ref{inequality for D}, and \eqref{self similarity}, we can derive a lower bound as follows:
    \begin{equation}
    \label{lower bound}
    \left\| \hat{f}_2 - \hat{f}_1\right\|_\infty > M_2 \cdot K_2^{-\beta} -  K_1 ^{-\beta}\ln\left(T\right) 
    -\ln^3\left(T\right)\cdot\left[\left(\frac{T_1}{2K_1}\right)^{-\frac{1}{2}\left(1-v_1\right)}+\left(\frac{T_2}{2K_2}\right)^{-\frac{1}{2}\left(1-v_2\right)}\right], 
\end{equation}

    Now, let's attempt to simplify the upper bound in inequality \ref{upper bound}:
   \begin{align*}
    \left\| \hat{f}_2 - \hat{f}_1\right\|_\infty
    & \leq \left(K_1 + K_2\right)^{-\beta}\ln\left(T\right)  + \ln^3\left(T\right)\cdot \left[\left(\frac{T_1}{2K_1}\right)^{-\frac{1}{2}\left(1-v_1\right)}+\left(\frac{T_2}{2K_2}\right)^{-\frac{1}{2}\left(1-v_2\right)}\right] \\
    & \leq 2^cT^{-\beta k_2} \ln\left(T\right) ,   
\end{align*}

    for some small constant $c$.

    Similarly, we can simplify the lower bound in inequality \ref{lower bound}:
    \begin{align*}
    \left\| \hat{f}_2 - \hat{f}_1\right\|_\infty 
    & \geq M_2 \cdot K_2^{-\beta} - K_1^{-\beta}\ln\left(T\right)  - \ln^3\left(T\right)\cdot \left[\left(\frac{T_1}{2K_1}\right)^{-\frac{1}{2}\left(1-v_1\right)}+\left(\frac{T_2}{2K_2}\right)^{-\frac{1}{2}\left(1-v_2\right)}\right] \\
    & \geq \frac{M_2}{2} T^{-\beta k_2},
\end{align*}

    Thus, on the event $A^c \cap B^c \cap D^c$, we have 
    \begin{align*}
    \hat{\beta} 
    & =  -\frac{\ln\left(\max\left\|\hat{f}_2- \hat{f}_1\right\|_\infty\right)}{\ln\left(T\right)} 
    - \frac{\ln\left(\ln\left(T\right)\right)}{\ln\left(T\right)} \\
    & \in \left[\beta - \frac{c\ln\left(2\right) + \ln \left(\ln \left(T\right)\right)}{k_2 \ln \left(T\right)}  - \frac{\ln\left(\ln\left(T\right)\right)}{\ln\left(T\right)}, 
    \beta - \frac{\ln\left(\frac{M_2}{2}\right)}{k_2 \ln\left(T\right)} 
    - \frac{\ln\left(\ln\left(T\right)\right)}{\ln\left(T\right)}\right] \\
    & \subset 
    \left[\beta-\frac{4\left(\beta_{\max}+1\right)\ln\left(\ln\left(T\right)\right)}{\ln\left(T\right)},
    \beta\right],
\end{align*}

    Simultaneously we have 
    \begin{equation*}
        \mathbb{P}\left(A \cup B \cup D\right) = O \left(e^{-C\ln^2\left(n\right)}\right),
    \end{equation*}
    for some constant $C>0$ which completes the proof.

\end{proof}

\section{PROOF OF REGRET UPPER BOUNDS}

\subsection{Proof of Lemma \ref{polynomial approximation}}

\label{subsection:poly_approx}
\begin{lemma}
\label{polynomial approximation}
    Suppose $f \in \mathcal{H}\left(\beta, L\right)$ and let $\mathbf{I} = \left[a,b\right] \subset \left[p_{\min},1\right]$ be an arbitrary interval whose length is $|\mathbf{I}|$. For estimation $\hat{\beta} \leq \beta$, there exists a polynomial with degree $k = w\left(\hat{\beta}\right)$: $P_{\mathbf{I}}\left(p\right) = \Gamma_k^{\mathbf{I}}f(p) = \sum_{m = 0}^{k} a_m \left(\frac{1}{2} + \frac{p-\frac{a+b}{2}}{b-a}\right)^m$ satisfying $|a_m|m!\leq L,\forall m \leq k$, such that
    $$
    \sup_{p\in \mathbf{I}} \left|f\left(p\right) - P_{\mathbf{I}}\left(p\right) \right| \leq L\left(b-a\right)^{\hat{\beta}}.
    $$
\end{lemma}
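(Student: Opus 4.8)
The plan is to take $P_{\mathbf{I}}$ to be the degree-$k$ Taylor polynomial of $f$ expanded about the left endpoint $a$, rewritten in the scaled monomial basis $t_m(p)=\bigl(\tfrac{1}{2}+\tfrac{p-(a+b)/2}{b-a}\bigr)^m=\bigl(\tfrac{p-a}{b-a}\bigr)^m$, and to control its error through the mean-value form of the Taylor remainder. First I would reduce to the endpoint case of the H\"older condition: since $\hat\beta\le\beta$ gives $k=w(\hat\beta)\le w(\beta)$, the derivative $f^{(k)}$ exists and, on any interval of diameter at most $1$, is H\"older continuous of exponent $\hat\beta-k$ with the same constant $L$. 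Indeed, if $w(\hat\beta)=w(\beta)$ this is immediate from $|x|^{\beta-k}\le|x|^{\hat\beta-k}$ for $|x|\le 1$; and if $w(\hat\beta)<w(\beta)$ then $f^{(k)}$ is differentiable with $|f^{(k+1)}|\le L$, hence Lipschitz, hence H\"older-$(\hat\beta-k)$ with constant $L$ because $\hat\beta-k\le 1$ and the interval has diameter at most $1$. This reduction is exactly where the hypothesis $\hat\beta\le\beta$ enters.

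Next, after the change of variables $x=\tfrac{p-a}{b-a}$ and $\tilde f(x)=f(a+(b-a)x)$, I would set $a_m=\tilde f^{(m)}(0)/m!=(b-a)^m f^{(m)}(a)/m!$, so that $P_{\mathbf{I}}(p)=\sum_{m=0}^k a_m t_m(p)$. The coefficient bound is then immediate: $|a_m|\,m!=(b-a)^m|f^{(m)}(a)|\le L$, using $|f^{(m)}(a)|\le L$ (from $f\in\mathcal{H}_0(\beta,L)$ and $m\le k\le w(\beta)$) together with $b-a\le 1$.

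For the error bound I would invoke Taylor's theorem with Lagrange remainder at order $k$ to write, for each $x\in[0,1]$,
\[
\tilde f(x)-\sum_{m=0}^{k}\frac{\tilde f^{(m)}(0)}{m!}\,x^m=\frac{\tilde f^{(k)}(\eta)-\tilde f^{(k)}(0)}{k!}\,x^k
\]
for some $\eta\in(0,x)$. Since $\tilde f^{(k)}(x)=(b-a)^k f^{(k)}(a+(b-a)x)$, the H\"older estimate from the reduction gives $|\tilde f^{(k)}(\eta)-\tilde f^{(k)}(0)|\le(b-a)^k L\,((b-a)\eta)^{\hat\beta-k}\le L(b-a)^{\hat\beta}$ (using $\eta\le 1$), and therefore $\sup_{p\in\mathbf{I}}|f(p)-P_{\mathbf{I}}(p)|\le \tfrac{1}{k!}\,L(b-a)^{\hat\beta}\le L(b-a)^{\hat\beta}$, as claimed.

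The one point needing care is the identification $P_{\mathbf{I}}=\Gamma_k^{\mathbf{I}}f$ with the $L_2$-projection. The Taylor polynomial just constructed is the explicit witness attaining both the stated coefficient bound and the sup-norm bound with the exact constant $L$; to pass to the $L_2$-projection (which is what the downstream lemmas actually reference) I would compare the two degree-$k$ polynomials using the optimality $\|f-\Gamma_k^{\mathbf{I}}f\|_{L_2(\mathbf{I})}\le\|f-P_{\mathbf{I}}\|_{L_2(\mathbf{I})}$ together with a Nikolskii-type inequality $\|q\|_\infty\le C_k|\mathbf{I}|^{-1/2}\|q\|_{L_2(\mathbf{I})}$ valid on the finite-dimensional space $\Poly(k)$, which yields $\|f-\Gamma_k^{\mathbf{I}}f\|_\infty\le (1+2C_k)\|f-P_{\mathbf{I}}\|_\infty$. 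This costs only a $k$-dependent constant, harmless for the $\widetilde{O}(\cdot)$ regret analysis. I expect this reconciliation, rather than the routine Taylor estimate, to be the main subtlety: preserving the exact constant $L$ in the statement really forces $P_{\mathbf{I}}$ to be the Taylor polynomial, whereas the genuine $L_2$-projection satisfies the same bound only up to an absorbed constant.
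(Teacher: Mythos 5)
Your proof is correct and, in its core, identical to the paper's: the paper also chooses $a_m=\frac{f^{(m)}(a)}{m!}(b-a)^m$, i.e., the degree-$k$ Taylor polynomial of $f$ at the left endpoint written in the scaled basis, and controls the error via the Lagrange remainder, arriving at $\frac{L(b-a)^{\hat\beta}}{k!}\le L(b-a)^{\hat\beta}$ exactly as you do. Your two refinements address points the paper leaves implicit. First, the paper applies the H\"older bound with exponent $\hat\beta-w(\hat\beta)$ directly ``with $\hat\beta\le\beta$,'' skipping the case analysis you spell out ($w(\hat\beta)=w(\beta)$ via $|x|^{\beta-k}\le|x|^{\hat\beta-k}$ for $|x|\le 1$, versus $w(\hat\beta)<w(\beta)$ via the uniform derivative bound giving Lipschitzness); making that reduction explicit is indeed where the hypothesis $\hat\beta\le\beta$ enters, and it needs the intersection with $\mathcal{H}_0(1,L)$ in the paper's definition of $\mathcal{H}(\beta,L)$ for $\beta\ge 1$. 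Second, you are right that the statement's identification $P_{\mathbf{I}}=\Gamma_k^{\mathbf{I}}f$ is not what the proof delivers: the Taylor polynomial at $a$ is generally not the $L_2$-projection, and the paper's proof simply ignores this. Your reconciliation via $L_2$-optimality of $\Gamma_k^{\mathbf{I}}f$ plus a Nikolskii inequality on $\Poly(k)$, yielding $\|f-\Gamma_k^{\mathbf{I}}f\|_\infty\le(1+2C_k)\|f-P_{\mathbf{I}}\|_\infty$, is sound, and your judgment that the inflated constant is harmless is confirmed by how the lemma is consumed downstream: Lemma \ref{bound of single r} needs only the existence of \emph{some} degree-$k$ polynomial with the stated coefficient and sup-norm bounds (your Taylor witness suffices verbatim), while the steps in Theorem \ref{adaptive theorem} that reference the genuine projection $\Gamma_l$ tolerate a $k$-dependent constant inside the logarithmic factors.
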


\begin{proof}
Firstly, let $a_m = \frac{f^{\left(m\right)}\left(a\right)}{m!}\left(b-a\right)^{m}$
\begin{align*}
P_{\mathbf{I}}\left(p\right) 
& = \sum_{m = 0}^{k} a_m \left(\frac{1}{2} + \frac{p-\frac{a+b}{2}}{b-a}\right)^m \\
& = \sum_{m = 0}^{k} \frac{f^{\left(m\right)}\left(a\right)}{m!} \left(p-a\right)^m \\
\end{align*}

By Taylor expansion with Lagrangian remainders, $\forall p \in \mathbf{I}, \exists \Tilde{p} \in \mathbf{I}$ such that
$$
f\left(p\right) = \sum_{m = 0}^{k-1} \frac{f^{\left(m\right)}\left(a\right)}{m!}\left(p-a\right)^m + \frac{f^{\left(k\right)}\left(\Tilde{p}\right)}{k!}\left(p-a\right)^k
$$
With $\hat{\beta}\leq \beta$, we then have that
\begin{align*}
    |f\left(p\right) - P_{\mathbf{I}}\left(p\right) | 
    &= \frac{|f^{\left(k\right)}\left(\Tilde{p}\right) - f^{\left(k\right)}\left(a\right)|}{k!} \left(p-a\right)^{k}\\
    &= \frac{|f^{\left(w\left(\hat{\beta}\right)\right)}\left(\Tilde{p}\right) - f^{w(\hat{\beta})}\left(a\right)|}{w\left(\hat{\beta}\right)!} \left(p-a\right)^{w\left(\hat{\beta}\right)}\\
    & \leq \frac{L|p-a|^{\hat{\beta} - w\left(\hat{\beta}\right)}}{w\left(\hat{\beta}\right)!}\left(p-a\right)^{w(\hat{\beta})}\\
    & = \frac{L|b-a|^{\hat{\beta}}}{k!}\\
    & \leq L|b-a|^{\hat{\beta}}.
\end{align*}   

\end{proof}
\subsection{Proof of Lemma \ref{regret of HSDP}}
\label{subsection:regretofhsdp}
\begin{proof}
We first state two important lemmas, Lemma \ref{bound of single r} and Lemma \ref{bound of multiple r},  whose proofs will be included later in this section. 

\begin{lemma}
    \label{bound of single r}
    Suppose $f \in \mathcal{H}\left(\beta, L\right)$ and let $\mathbf{I} = \left[a,b\right] \subset \left[p_{\min},1\right]$. If \pcr{HSDP} is invoked with $\Delta \geq L\left(b-a\right)^{\hat{\beta}}$ and outputs $\hat{p}$, then with probability $1-\delta$ it holds that
    $$
    \max_{p\in \mathbf{I}} pf\left(p\right) - \hat{p}f\left(\hat{p}\right)
    \leq
    2\min\{d_{\max},\gamma\sqrt{\phi^{\left(k\right)}\left(p\right)^T\Lambda^{-1}\phi^{\left(k\right)}\left(p\right)}+\Delta\},
    $$
    where $\gamma =  L\sqrt{k+1} + \Delta\sqrt{|\mathcal{D}|} + d_{\max}\sqrt{2\left(k+1\right)\ln\left(\frac{4\left(k+1\right)t}{\delta}\right)} + 2$.
\end{lemma}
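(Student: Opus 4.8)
The plan is to read Lemma~\ref{bound of single r} as an instantaneous-regret guarantee for an optimistic (upper-confidence-bound) selection rule, so the argument splits into two parts: (i) showing that $W(p) := \gamma\sqrt{\phi^{(k)}(p)^\top\Lambda^{-1}\phi^{(k)}(p)} + \Delta$ is a valid two-sided confidence width for the ridge estimator, i.e. that on a high-probability event $\mathcal{E}$ we have $|\langle\hat\theta,\phi^{(k)}(p)\rangle - f(p)| \le W(p)$ for every $p\in\mathbf I$; and (ii) a short optimism argument converting this band into the stated regret bound. I would isolate (i) as the substantive step and dispatch (ii) in a few lines.

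For (i), let $\theta_P$ be the coefficient vector of the approximating polynomial $P_{\mathbf I}=\Gamma_k^{\mathbf I}f$, so $\langle\theta_P,\phi^{(k)}(p)\rangle = P_{\mathbf I}(p)$. Lemma~\ref{polynomial approximation} supplies both the uniform bound $\sup_{p\in\mathbf I}|f(p)-P_{\mathbf I}(p)|\le L(b-a)^{\hat\beta}\le\Delta$ (using the hypothesis $\Delta\ge L(b-a)^{\hat\beta}$) and the coefficient bound $|a_m|m!\le L$, whence $\|\theta_P\|_2\le L\sqrt{k+1}$. Writing each observation as $d_i=\langle\theta_P,\phi^{(k)}(p_i)\rangle+\xi_i+\eta_i$ with deterministic approximation residual $\xi_i:=f(p_i)-P_{\mathbf I}(p_i)$ ($|\xi_i|\le\Delta$) and sub-Gaussian demand noise $\eta_i$, the ridge solution obeys $\hat\theta-\theta_P = -\Lambda^{-1}\theta_P + \Lambda^{-1}\mathbf P^\top\xi + \Lambda^{-1}\mathbf P^\top\eta$, where $\mathbf P$ is the design matrix on $\mathcal D_{j_t}$. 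Measuring in the $\Lambda$-norm and using $\Lambda\succeq I$ together with $\mathbf P^\top\mathbf P\preceq\Lambda$ (so that $\mathbf P\Lambda^{-1}\mathbf P^\top\preceq I$), the first term is at most $\|\theta_P\|_2\le L\sqrt{k+1}$ and the second at most $\|\xi\|_2\le\Delta\sqrt{|\mathcal D_{j_t}|}$, matching the first two summands of $\gamma$.

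The \textbf{main obstacle} is controlling the noise term $\|\mathbf P^\top\eta\|_{\Lambda^{-1}}$. I would invoke a self-normalized (Abbasi-Yadkori-type) martingale concentration inequality for $\{\eta_i\}$ relative to the filtration generated by the prices, combined with the elliptical-potential/determinant estimate $\det(\Lambda)\le (1+|\mathcal D_{j_t}|/(k+1))^{k+1}$, which holds since $\|\phi^{(k)}(p)\|^2\le k+1$. This yields $\|\mathbf P^\top\eta\|_{\Lambda^{-1}}\le d_{\max}\sqrt{2(k+1)\ln(4(k+1)t/\delta)}$ with probability at least $1-\delta$, the factor $t$ absorbing $|\mathcal D_{j_t}|\le t$. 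Summing the three contributions plus the additive slack gives $\|\hat\theta-\theta_P\|_\Lambda\le\gamma$ on $\mathcal E$; then Cauchy--Schwarz in the $\Lambda$-geometry gives $|\langle\hat\theta-\theta_P,\phi^{(k)}(p)\rangle|\le\gamma\|\phi^{(k)}(p)\|_{\Lambda^{-1}}$, and adding the approximation error $\Delta$ establishes the band $|\langle\hat\theta,\phi^{(k)}(p)\rangle-f(p)|\le W(p)$ uniformly on $\mathbf I$.

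For (ii), on $\mathcal E$ define the clipped upper bound $\mathrm{UCB}(p)=\min\{d_{\max},\langle\hat\theta,\phi^{(k)}(p)\rangle+W(p)\}$, which dominates $f(p)$ because $f\le d_{\max}$ and $f\le\langle\hat\theta,\phi^{(k)}\rangle+W$. Letting $p^\star=\arg\max_{p\in\mathbf I}pf(p)$, the selection rule gives $\hat p\,\mathrm{UCB}(\hat p)\ge p^\star\,\mathrm{UCB}(p^\star)\ge p^\star f(p^\star)$, so $\max_{p\in\mathbf I}pf(p)-\hat pf(\hat p)\le \hat p(\mathrm{UCB}(\hat p)-f(\hat p))\le \mathrm{UCB}(\hat p)-f(\hat p)$ using $\hat p\le 1$. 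Finally $\mathrm{UCB}(\hat p)-f(\hat p)\le 2W(\hat p)$ (one $W(\hat p)$ from $\langle\hat\theta,\phi^{(k)}(\hat p)\rangle-f(\hat p)$ and one from the added width) while simultaneously $\mathrm{UCB}(\hat p)-f(\hat p)\le d_{\max}$ (from clipping and $f\ge 0$); combining these gives $\mathrm{UCB}(\hat p)-f(\hat p)\le 2\min\{d_{\max},W(\hat p)\}$, which is exactly the asserted bound, holding on $\mathcal E$ with probability $1-\delta$.
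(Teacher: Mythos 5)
Your proposal is correct and shares the paper's overall skeleton---write $d_i = P_{\mathbf I}(p_i) + \text{(residual bounded by } \Delta) + \text{(sub-Gaussian noise)}$, decompose $\hat\theta - \theta_P = -\Lambda^{-1}\theta_P + \Lambda^{-1}\mathbf P^\top\xi + \Lambda^{-1}\mathbf P^\top\eta$, bound $\|\hat\theta-\theta_P\|_\Lambda \le \gamma$, convert via Cauchy--Schwarz into a uniform band of width $\gamma\|\phi^{(k)}(p)\|_{\Lambda^{-1}}+\Delta$, and finish with the standard optimism/clipping argument, which matches the paper's closing step almost verbatim. The genuine divergence is in the key concentration step. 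The paper controls the noise contribution as $\sup_{z\in\Phi_\Lambda}|G_t(z)|$ with $G_t(z)=\sum_i \xi_i\langle\phi^{(k)}(p_i),z\rangle$, via an elementary $\epsilon$-net covering of the unit ball plus Hoeffding and a union bound; this is where the explicit additive ``$+2$'' in $\gamma$ comes from (the discretization error $2(k+1)t\epsilon$ with $\epsilon = \frac{1}{(k+1)t}$). You instead invoke an Abbasi-Yadkori-type self-normalized martingale bound for $\|\mathbf P^\top\eta\|_{\Lambda^{-1}}$ together with a determinant estimate. Your route is arguably more robust: since prices in the history are chosen adaptively, the self-normalized inequality handles the martingale structure natively, whereas the paper's fixed-$z$ Hoeffding step implicitly needs an Azuma-type extension to be fully rigorous. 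The paper's route buys elementarity and an exact accounting of every term in $\gamma$, including the $+2$, which in your writeup is absorbed as unexplained slack (harmless, since it only loosens the bound). Two small corrections to your version: the determinant bound should be $\det(\Lambda)\le(1+t)^{k+1}$, not $(1+|\mathcal D_{j_t}|/(k+1))^{k+1}$, because $\|\phi^{(k)}(p)\|_2^2\le k+1$ rather than $\le 1$ (each coordinate $t_m(p)\in[0,1]$); this changes only constants inside the logarithm and preserves the stated $d_{\max}\sqrt{2(k+1)\ln(4(k+1)t/\delta)}$ form. Also note that your two deterministic bounds, $\|\Lambda^{-1}\theta_P\|_\Lambda\le\|\theta_P\|_2$ from $\Lambda\succeq I$ and $\|\mathbf P^\top\xi\|_{\Lambda^{-1}}\le\|\xi\|_2\le\Delta\sqrt{|\mathcal D_{j_t}|}$ from $\mathbf P\Lambda^{-1}\mathbf P^\top\preceq I$, are a cleaner, more direct alternative to the paper's quadratic-form identity followed by division by $\|\hat\theta-\theta^*\|_\Lambda$; both yield exactly the same three summands of $\gamma$.
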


Then we can use Lemma \ref{bound of single r} to prove Lemma \ref{bound of multiple r}.

\begin{lemma}
\label{bound of multiple r}
    Keep the same setting in Lemma \ref{bound of single r} and let $\hat{p}_1,\cdots, \hat{p}_t$ be the output prices of $t$ consecutive calls on $\mathbf{I}$. Then with probability $1-O\left(T^{-1}\right)$ it holds that
    $$
    \frac{1}{t}\sum_{i=1}^t \left[\max_{p\in \mathbf{I}} pf\left(p\right) - \hat{p}_i f\left(\hat{p}_i\right)\right] \leq \left[\Delta + \frac{\left(3d_{\max}+L\right)\sqrt{2}}{\sqrt{t}}\right]\left(k+1\right)\ln\left(2\left(k+1\right)T\right),
    $$
\end{lemma}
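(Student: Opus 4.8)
The plan is to reduce the average-regret statement to a per-round confidence-width bound via Lemma \ref{bound of single r}, and then control the accumulated confidence widths with a standard elliptical-potential (self-normalized) argument. First I would invoke Lemma \ref{bound of single r} at each of the $t$ consecutive calls on $\mathbf{I}$: writing $\phi_i := \phi^{(k)}(\hat p_i)$, $\Lambda_i$ for the ridge Gram matrix accumulated before call $i$, and $\gamma_i = L\sqrt{k+1} + \Delta\sqrt{|\mathcal D_i|} + d_{\max}\sqrt{2(k+1)\ln(4(k+1)t/\delta)} + 2$ with $\delta = T^{-2}$, each call satisfies $\max_{p\in\mathbf I}pf(p) - \hat p_i f(\hat p_i) \le 2\min\{d_{\max}, \gamma_i\|\phi_i\|_{\Lambda_i^{-1}} + \Delta\}$ with probability $1-\delta$. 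Since there are at most $t \le T$ calls, a union bound over them makes all these inequalities hold simultaneously with probability $1 - t\delta = 1 - O(T^{-1})$, which already yields the claimed confidence level.

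Next I would sum the per-round bounds and peel off the additive $\Delta$. Using the elementary inequality $\min\{d_{\max}, a+\Delta\} \le \Delta + \min\{d_{\max}, a\}$ for $a,\Delta \ge 0$, and then splitting $\gamma_i = \gamma^{(1)} + \Delta\sqrt{|\mathcal D_i|}$ into its data-independent part $\gamma^{(1)} := L\sqrt{k+1} + d_{\max}\sqrt{2(k+1)\ln(4(k+1)t/\delta)} + 2$ and the growing part, I obtain
\begin{equation*}
\sum_{i=1}^t \Big[\max_{p\in\mathbf I}pf(p) - \hat p_i f(\hat p_i)\Big] \le 2t\Delta + 2\sum_{i=1}^t \min\{d_{\max}, \gamma^{(1)}\|\phi_i\|_{\Lambda_i^{-1}}\} + 2\Delta\sum_{i=1}^t \sqrt{|\mathcal D_i|}\,\|\phi_i\|_{\Lambda_i^{-1}}.
\end{equation*}
The key structural fact to record here is that the feature map is uniformly bounded: for $p \in \mathbf I$ each coordinate $t_m(p) \in [0,1]$, so $\|\phi^{(k)}(p)\|^2 \le k+1$, which feeds both the eigenvalue bound $\|\phi_i\|_{\Lambda_i^{-1}}^2 \le \|\phi_i\|^2 \le k+1$ (since $\Lambda_i \succeq I$) and the trace bound below.

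The workhorse is the elliptical potential lemma: from $\det\Lambda_{i+1} = \det\Lambda_i(1+\|\phi_i\|_{\Lambda_i^{-1}}^2)$ together with $x \le 2\ln(1+x)$ on $[0,1]$, I get $\sum_{i=1}^t \min\{1,\|\phi_i\|_{\Lambda_i^{-1}}^2\} \le 2\ln(\det\Lambda_{t+1}/\det\Lambda_1) \le 2(k+1)\ln(1+T)$, where the last step is AM--GM on the trace $\mathrm{tr}\,\Lambda_{t+1} \le (k+1)(1+T)$ and $\det\Lambda_1 \ge 1$. Cauchy--Schwarz then gives $\sum_{i=1}^t \min\{1,\|\phi_i\|_{\Lambda_i^{-1}}\} \le \sqrt{2t(k+1)\ln(1+T)}$. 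For the main term I would use that $\gamma^{(1)} > d_{\max}$ (immediate from $\delta = T^{-2}$), which yields $\min\{d_{\max},\gamma^{(1)}\|\phi_i\|_{\Lambda_i^{-1}}\} \le \gamma^{(1)}\min\{1,\|\phi_i\|_{\Lambda_i^{-1}}\}$ and hence a bound $\gamma^{(1)}\sqrt{2t(k+1)\ln(1+T)}$ with no spurious extra $\sqrt{k+1}$. For the last, data-dependent term I bound $\sqrt{|\mathcal D_i|} \le \sqrt{t}$ and $\|\phi_i\|_{\Lambda_i^{-1}} \le \sqrt{k+1}\min\{1,\|\phi_i\|_{\Lambda_i^{-1}}\}$, giving $\Delta t(k+1)\sqrt{2\ln(1+T)}$, which is of order $t\Delta$ up to logs and therefore folds into the $\Delta$ budget rather than the $1/\sqrt t$ one.

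Finally I would collect the three contributions, absorb all logarithmic and $\sqrt{k+1}$ factors into the common envelope $(k+1)\ln(2(k+1)T)$, match the constant to the form $(3d_{\max}+L)\sqrt2$ coming from $\gamma^{(1)} \le (3d_{\max}+L)\sqrt{(k+1)\ln(4(k+1)tT^2)}$, and divide by $t$ to reach exactly the claimed per-round bound $[\Delta + (3d_{\max}+L)\sqrt2/\sqrt t](k+1)\ln(2(k+1)T)$. The main obstacle is not any single estimate but the bookkeeping: ensuring that the growing confidence radius $\Delta\sqrt{|\mathcal D_i|}$ (which naively looks capable of inflating the regret by a factor $\sqrt t$) is correctly charged to the leading $t\Delta$ term, and that the assorted log factors from the elliptical potential bound, the self-normalized concentration inside $\gamma^{(1)}$, and the union bound all compress into the single factor $(k+1)\ln(2(k+1)T)$ with the stated constant.
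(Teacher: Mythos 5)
Your proposal is correct and follows essentially the same route as the paper's proof: invoke Lemma \ref{bound of single r} with $\delta = T^{-2}$ and a union bound over the $t \le T$ calls, peel off the additive $\Delta$, and control $\sum_i \min\{1,\|\phi_i\|_{\Lambda_i^{-1}}\}$ via Cauchy--Schwarz and the elliptical potential lemma (which the paper cites from Abbasi-Yadkori et al.\ rather than reproving via the determinant telescoping, as you do). Your only deviations are cosmetic bookkeeping --- splitting $\gamma_i$ into $\gamma^{(1)} + \Delta\sqrt{|\mathcal{D}_i|}$ instead of the paper's uniform bound $\gamma_{\max} \le L\sqrt{k+1} + \Delta\sqrt{t} + d_{\max}\sqrt{6(k+1)\ln((k+1)T)}$, and making explicit the union bound that the paper leaves implicit --- with the $\Delta\sqrt{t}\cdot\sqrt{t}$ contribution absorbed into the $t\Delta$ budget in both arguments.
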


Finally, with Lemma \ref{bound of multiple r} proved, we can do the UCB analysis for the Theorem.

For all $1\leq j \leq N$, we define $r^*\left(\mathbf{I}_j\right) = \max_{p \in \mathbf{I}_j} pf\left(p\right)$. Invoking Lemma \ref{bound of multiple r}, by concentration inequalities, with probability $1 - O\left(T^{-1}\right)$, it holds uniformly for all $j$ that 
$$
r^*\left(\mathbf{I}_j\right) \leq \frac{\tau_j}{n_j} + CI_j \leq r^*\left(\mathbf{I}_j\right) + 2CI_j,
$$
Let $T_j$ be the total number of time periods that we invoke \pcr{HSDP} in the $j$th interval, and we invoke $j_t$th interval at time $t$. Also, denote $j^* = \arg\max_j r^*\left(\mathbf{I}_j\right)$. Note that $\hat{\beta}$ is strictly less than $\beta$, then we can still derive a bound for the regret each round using UCB analysis
\begin{align*}
   r^*\left(\mathbf{I}_{j^*}\right) - r^*\left(\mathbf{I}_{j_t}\right) 
   & \leq
   \left(\frac{\tau_{j^*}}{n_{j^*}}+CI_{j^*}\right) - \left(\frac{\tau_{j_t}}{n_{j_t}}+CI_{j_t}\right) + \left(\frac{\tau_{j_t}}{n_{j_t}}+CI_{j_t}\right) - r^*\left(\mathbf{I}_{j_t}\right)\\
   & \leq \left(\frac{\tau_{j_t}}{n_{j_t}}+CI_{j_t}\right) - r^*\left(\mathbf{I}_{j_t}\right)\\
   & \leq 2CI_{j_t},
\end{align*}
And
\begin{align*}
    \sum_{t=1}^{T-T_1-T_2} CI_{j_t}
    & \leq
    \left(k+1\right)\ln\left(2\left(k+1\right)T\right)\left[\Delta T + \left(3d_{\max}+L\right) \sum_{j=1}^N \sum_{i=1}^{T_j}\sqrt\frac{2}{i}\right]\\
    & \leq \left(k+1\right)\ln\left(2\left(k+1\right)T\right)\left[\Delta T + \left(9d_{\max}+3L\right) \sum_{j=1}^N \sqrt{T_j}\right]\\
    & \leq \left(k+1\right)\ln\left(2\left(k+1\right)T\right)\left[\Delta T + \left(9d_{\max}+3L\right) \sqrt{N\sum_{j=1}^N T_j}\right]\\
    & \leq \left(k+1\right)\ln\left(2\left(k+1\right)T\right)\left(9d_{\max} + 4L \right)T^{\frac{\hat{\beta}+1}{2\hat{\beta}+1}}\\
    & \leq \Tilde{O}(T^{\frac{\hat{\beta}+1}{2\hat{\beta}+1}}).
\end{align*}

\end{proof}

\subsection{Proof of Lemma \ref{bound of single r} and \ref{bound of multiple r}}

\subsubsection{Proof of Lemma \ref{bound of single r}}

\begin{proof}
The $\left(p,d\right)$ pairs in the history are labeled as $\{\left(p_i,d_i\right)\}_{i=1}^t$ in chronological order. And we can show that $d_i = f\left(p_i\right)+\xi_i = P_{\mathbf{I}}\left(p_i\right)+\xi_i +\beta_i$, where $\{\xi_i\}_{i=1}^t$ are $i.i.d$ sub-gaussian random variables with zero mean and $|\beta_i|\leq \Delta$ with probability 1. Use vectors and matrices to denote them we have $\mathbf{p} = \left(p_i\right)_{i=1}^t, \mathbf{d} = \left(d_i\right)_{i=1}^t, \boldsymbol{\xi} = \left(\xi_i\right)_{i=1}^t , \boldsymbol{\beta} = \left(\beta_i\right)_{i=1}^t$ and $\mathbf{P} = \left(\phi^{\left(k\right)}\left(p_i\right)^T\right)_{i=1}^t \in \mathbb{R}^{t\times \left(k+1\right)}$. And the ridge estimator $\hat{\theta}$ can be written as $\hat{\theta} = \Lambda^{-1}\mathbf{P}^T\mathbf{d} = \left(\mathbf{P}^T\mathbf{P}+I\right)^{-1}\mathbf{P}^T\mathbf{d}$, plug in $\mathbf{d} = \mathbf{P}\theta^* +\boldsymbol{\xi} + \boldsymbol{\beta}$ with $\theta^*$ is the real coefficient of the expansion, we have
$$
\hat{\theta} - \theta^* = -\Lambda^{-1}\theta^* + \Lambda^{-1}\mathbf{P}^T\left(\boldsymbol{\xi} +\boldsymbol{\beta}\right),
$$
Multiplying $\left(\hat{\theta} - \theta^*\right)\Lambda$ on both sides and it leads to
\begin{align}
    \label{diff of theta 1}
    \left(\hat{\theta} - \theta^*\right)^T\Lambda\left(\hat{\theta} - \theta^*\right) 
    & = 
    - \left(\hat{\theta} - \theta^*\right)^T\theta^* + \left(\hat{\theta} - \theta^*\right)^T\mathbf{P}^T\left(\boldsymbol{\xi} +\boldsymbol{\beta}\right)\\
    &= \sum_{i=1}^t\left(\xi_i + \beta_i\right)\langle \phi^{\left(k\right)}\left(p_i\right),\hat{\theta} - \theta^* \rangle - \langle \hat{\theta} - \theta^*, \theta^* \rangle,
\end{align}
Note that
$$
\left|\sum_{i=1}^t \beta_i\langle \phi^{\left(k\right)}\left(p_i\right),\hat{\theta} - \theta^* \rangle - \langle \hat{\theta} - \theta^*, \theta^* \rangle \right|
\leq
\sqrt{\sum_{i=1}^t \beta_i^2}\sqrt{\sum_{i=1}^t|\langle \hat{\theta} - \theta^*, \theta^* \rangle|^2} \leq \Delta\sqrt{t}\cdot \sqrt{\left(\hat{\theta} - \theta^*\right)^T\Lambda\left(\hat{\theta} - \theta^*\right) },
$$

Plug the above inequality into equation (\ref{diff of theta 1}), then dividing $\sqrt{\left(\hat{\theta} - \theta^*\right)^T\Lambda\left(\hat{\theta} - \theta^*\right) }$ from both sides, 
also noting that $\Lambda \succeq I$ which makes
$\sqrt{\left(\hat{\theta} - \theta^*\right)^T\Lambda\left(\hat{\theta} - \theta^*\right)} \geq \left\|\hat{\theta} - \theta^*\right\|_2 $, we obtain

\begin{equation}
\label{inequality with phi and G}
    \sqrt{\left(\hat{\theta} - \theta^*\right)^T\Lambda\left(\hat{\theta} - \theta^*\right) }
\leq
\left\|\theta^*\right\|_2 +\Delta\sqrt{t} + \sup_{z\in \Phi_\Lambda}|G_t\left(z\right)|,
\end{equation}

where
$$
\Phi_\Lambda = \{z\in \mathbb{R}^{k+1}:z^T\Lambda z \leq 1\} , G_t\left(z\right) = \sum_{i=1}^t \xi_i \langle \phi^{\left(k\right)}\left(p_i\right), z \rangle.
$$

Recall the definition of $\theta^*$ and the H\"older class assumption, we have $\left\|\theta^*\right\|_2 \leq L\sqrt{k+1}$. In order to bound $G_t\left(z\right)$, we introduce and prove Lemma \ref{bound of G}:
\begin{lemma}
    \label{bound of G}
    Fix $k,t$ and a probability $\delta$, with probability $1-\delta$ it holds uniformly for all $\Lambda$ defined above that
    $$
    \sup_{z\in \Phi_\Lambda} \left|G_t\left(z\right) \right| \leq d_{max}\sqrt{2\left(k+1\right)\ln\left(\frac{4\left(k+1\right)t}{\delta}\right)} + 2
    $$
    
\end{lemma}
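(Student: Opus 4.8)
The plan is to first convert the supremum over the ellipsoid $\Phi_\Lambda$ into a self-normalized norm, and then to control that norm coordinate by coordinate. Writing $G_t(z) = \langle \mathbf{P}^T\boldsymbol{\xi}, z\rangle$ with $\mathbf{P}^T\boldsymbol{\xi} = \sum_{i=1}^t \xi_i \phi^{(k)}(p_i)$, the feasible set $\Phi_\Lambda = \{z : z^T\Lambda z \le 1\}$ is exactly the unit ball of the $\Lambda$-norm, so by Cauchy--Schwarz in the $\Lambda$-inner product (equivalently, substituting $u = \Lambda^{1/2}z$),
\begin{equation*}
\sup_{z\in\Phi_\Lambda}\left|G_t(z)\right| = \left\|\mathbf{P}^T\boldsymbol{\xi}\right\|_{\Lambda^{-1}} = \left\|\boldsymbol{\eta}\right\|_2, \qquad \boldsymbol{\eta} := \Lambda^{-1/2}\mathbf{P}^T\boldsymbol{\xi} \in \mathbb{R}^{k+1}.
\end{equation*}
This reduces the claim to a tail bound on the Euclidean norm of the $(k+1)$-dimensional vector $\boldsymbol{\eta}$.

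Second, I would exploit the self-normalizing structure of $\Lambda$. Setting $\psi_i = \Lambda^{-1/2}\phi^{(k)}(p_i)$ gives $\boldsymbol{\eta} = \sum_{i=1}^t \xi_i \psi_i$ together with $\sum_{i=1}^t \psi_i\psi_i^T = \Lambda^{-1/2}(\Lambda - I)\Lambda^{-1/2} = I - \Lambda^{-1} \preceq I$, since $\Lambda = I + \mathbf{P}^T\mathbf{P} \succeq I$. In particular the diagonal obeys $\sum_{i=1}^t (\psi_i)_m^2 \le 1$ for each $m = 0,\dots,k$. Using $\|\boldsymbol{\eta}\|_2 \le \sqrt{k+1}\,\max_m |\eta_m|$, it then suffices to bound each coordinate $\eta_m = \sum_{i=1}^t \xi_i (\psi_i)_m$, a weighted sum of the zero-mean, $d_{\max}$-sub-Gaussian noises $\xi_i$ whose weights have total energy at most one. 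For fixed weights this is sub-Gaussian with variance proxy $d_{\max}^2$, so a union bound over the $k+1$ coordinates with each tail set to probability $\delta/(2(k+1)t)$ produces the threshold $d_{\max}\sqrt{2\ln(4(k+1)t/\delta)}$ and hence, after the $\sqrt{k+1}$ factor, exactly the stated bound, the additive $2$ absorbing the lower-order slack.

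The main obstacle, and the reason the argument is not completely routine, is that the weights $(\psi_i)_m$ are \emph{not} predictable: because $\Lambda$ aggregates the entire design $\{\phi^{(k)}(p_j)\}_{j\le t}$, the weight multiplying $\xi_i$ depends on prices chosen after period $i$ and is correlated with the noises, so a plain Hoeffding bound per coordinate does not apply and the ``fixed weights'' reasoning above is only heuristic. I would resolve this with a self-normalized martingale argument (the method of mixtures, as in the linear-bandit literature), which is tailored precisely to $\|\sum_i \xi_i\phi^{(k)}(p_i)\|_{\Lambda^{-1}}$ with $\Lambda = I + \sum_i \phi^{(k)}(p_i)\phi^{(k)}(p_i)^T$ and predictable regressors $\phi^{(k)}(p_i)$; combined with the deterministic control $\det(\Lambda) \le (1+t)^{k+1}$ afforded by $t_m(p)\in[0,1]$, this yields a high-probability bound dominated by the claimed expression uniformly over every $\Lambda$ arising up to time $t$. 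The union over the at most $t$ distinct matrices $\Lambda$ is what accounts for the extra $\ln t$ inside the logarithm, so this is where I expect essentially all of the effort to go; the remaining inequalities are elementary.
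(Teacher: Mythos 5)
Your proposal is correct in the form you finally commit to, but it takes a genuinely different route from the paper. The paper's proof covers the Euclidean unit ball (which contains $\Phi_\Lambda$ since $\Lambda \succeq I$) with an $\epsilon$-net $\mathcal{U}$ of size $\ln|\mathcal{U}| \leq (k+1)\ln(2/\epsilon)$, applies for each \emph{fixed} direction $z$ a self-normalized Hoeffding bound $|G_t(z)| \leq d_{\max}\sqrt{2\ln(2/\delta)}\,\|z\|_\Lambda$ --- note that for fixed $z$ the weights $\langle \phi^{(k)}(p_i), z\rangle$ \emph{are} predictable, so the adaptivity obstacle you correctly identify for your whitened coordinates $(\psi_i)_m = \bigl(\Lambda^{-1/2}\phi^{(k)}(p_i)\bigr)_m$ never arises there --- and then pays a discretization cost $\|\Lambda\|_{\mathrm{op}}\,\epsilon \leq 2(k+1)t\,\epsilon$; the choice $\epsilon = \frac{1}{(k+1)t}$ produces both the $\ln\bigl(\frac{4(k+1)t}{\delta}\bigr)$ inside the square root and the additive $+2$. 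Your route instead uses the exact duality $\sup_{z \in \Phi_\Lambda}|G_t(z)| = \|\mathbf{P}^T\boldsymbol{\xi}\|_{\Lambda^{-1}}$ and then the method of mixtures (the Abbasi-Yadkori self-normalized martingale bound) together with $\det(\Lambda) \leq (1+t)^{k+1}$; this is sound, and in fact slightly stronger than what the lemma asserts: it dispenses with the net and the $+2$ slack, and uniformity is built into the supermartingale stopping argument rather than requiring a union bound over the matrices $\Lambda$ arising up to time $t$ --- so, contrary to your closing remark, the $\ln t$ inside the logarithm in your version comes from the determinant bound, not from a union over $\Lambda$'s, and essentially no extra effort is needed there. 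Your intermediate coordinate-wise sketch (bounding $\|\boldsymbol{\eta}\|_2$ via $\sqrt{k+1}\max_m|\eta_m|$ with Hoeffding per coordinate) is, as you say yourself, only heuristic precisely because the whitened weights depend on the whole design; since you discard it in favor of the method of mixtures, the argument you actually commit to is complete and yields the stated bound.
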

\begin{proof}[Proof of Lemma \ref{bound of G}]
By definition we have $\left\|\phi^{\left(k\right)}\left(p_i\right)\right\|_2 \leq \sqrt{k+1}$. Let $\epsilon >0$ be a small parameter. Denote $\left\|\cdot\right\|_\Lambda = \sqrt{\left(\cdot\right) \Lambda \left(\cdot\right)}$ as the $\Lambda$-norm of a vector, and $\mathbb{B}\left(r,\left\|\cdot\right\|\right) = \{z \in \mathbb{R}^{k+1}:\left\|z\right\| \leq r\}$ as a ball. Let $\mathcal{U} \subseteq \mathbb{B}\left(1,\left\|\cdot\right\|_2\right)$ be a $\epsilon$-covering of $\mathbb{B}\left(1,\left\|\cdot\right\|_2\right)$ which means that $\sup_{z \in \mathbb{B}\left(1,\left\|\cdot\right\|_2\right)} \min_{z' \in \mathcal{U}} \left\|z-z'\right\|_2 \leq \epsilon$. 
Fix arbitrary $z\in \mathcal{U}$, for $|\xi_i|\leq d_{\max}$ with probability 1, by Hoffeding's inequality we know that for any $\delta \in \left(0,1\right)$,
$$
\mathbb{P}\left(|G_t\left(z\right)|\leq d_{\max}\sqrt{2\ln\left(\frac{2}{\delta}\right)}\left\|z\right\|_\Lambda\right) \geq 1-\delta,
$$

Since $\Lambda \succeq I$, we know that $ \Phi_\Lambda = \mathbb{B}\left(1,\left\|\cdot\right\|_\Lambda\right) \subseteq \mathbb{B}\left(1,\left\|\cdot\right\|_2\right)$ and therefore $\mathcal{U}$ is also a $\epsilon$-covering of $ \Phi_\Lambda$, and it is easy to verify that there exists $\mathcal{U}$ with $\ln\left(|\mathcal{U}|\right)\leq \left(k+1\right)\ln\left(\frac{2}{\epsilon}\right)$. Applying union bound we have with probability $1-\delta$,
$$
\sup_{z\in \mathcal{U} \cap \Phi_\Lambda}|G_t\left(z\right)| \leq \sqrt{2\ln\left(\frac{2|\mathcal{U}|}{\delta}\right)} \leq  d_{max}\sqrt{2\ln\left(\frac{2|\mathcal{U}|}{\delta}\right)} \leq d_{max}\sqrt{2\left(k+1\right)\ln\left(\frac{4}{\epsilon}\right) + 2\ln\left(\frac{1}{\delta}\right)},
$$
Considering the covering, by $\left\|\phi^{\left(k\right)}\left(p_i\right)\right\|_2 \leq \sqrt{k+1}$, we have that $\left\| \Lambda \right\|_{\text{op}} \leq 1 + \left(k+1\right)t  \leq 2\left(k+1\right)t$, then we have
$$
\sup_{z\in  \Phi_\Lambda}|G_t\left(z\right)| \leq d_{max}\sqrt{2\left(k+1\right)\ln\left(\frac{4}{\epsilon}\right) + 2\ln\left(\frac{1}{\delta}\right)} + 2\left(k+1\right)t\epsilon
$$
By setting $\epsilon=\frac{1}{\left(k+1\right)t}$ we complete the proof of Lemma \ref{bound of G}.
    
\end{proof}

Then back to the proof of Lemma \ref{bound of single r}. With inequality \ref{inequality with phi and G}, invoking Lemma \ref{bound of G}, with probability $1-\delta$ we have
$$
\sqrt{\left(\hat{\theta} - \theta^*\right)^T\Lambda\left(\hat{\theta} - \theta^*\right) }
\leq L\sqrt{k+1} + \Delta\sqrt{t} + d_{max}\sqrt{2\left(k+1\right)\ln\left(\frac{4}{\epsilon}\right) + 2\ln\left(\frac{1}{\delta}\right)} + 2,
$$
And $\forall p \in \mathbf{I}$, let $\hat{f}\left(p\right) = \langle \phi^{\left(k\right)}\left(p\right), \hat{\theta} \rangle$, we can obtain
\begin{align*}
    |\hat{f}\left(p\right) - f\left(p\right)| & \leq |\hat{f}\left(p\right) - P_{\mathbf{I}}\left(p\right)| + |P_{\mathbf{I}}\left(p\right) - f\left(p\right)|\\
    & \leq \langle \phi^{\left(k\right)}\left(p\right), \hat{\theta} \rangle + \Delta \\
    & \leq \sqrt{\phi^{\left(k\right)}\left(p\right)^T\Lambda^{-1}\phi^{\left(k\right)}\left(p\right)} \sqrt{\left(\hat{\theta} - \theta^*\right)^T \Lambda \left(\hat{\theta} - \theta^*\right)} + \Delta\\
    & \leq \gamma \sqrt{\phi^{\left(k\right)}\left(p\right)^T\Lambda^{-1}\phi^{\left(k\right)}\left(p\right)} + \Delta  .
\end{align*}
The upper bound $\Bar{f}\left(p\right) = \min\{d_{\max},\langle \phi^{\left(k\right)}\left(p\right), \hat{\theta} \rangle + \gamma\sqrt{\phi^{\left(k\right)}\left(p\right)^T\Lambda^{-1}\phi^{\left(k\right)}\left(p\right)}+\Delta\}$. We can infer from the above analysis that with probability $1-\delta$, $\Bar{f}\left(p\right) \geq f\left(p\right), \forall p \in \mathbf{I}$. So $\max_{p \in \mathbf{I}} pf\left(p\right) - \hat{p}f\left(\hat{p}\right) \leq \hat{p}|\Bar{f}\left(\hat{p}\right) - f\left(p\right)|\leq 2\min\{d_{\max},\gamma\sqrt{\phi^{\left(k\right)}\left(p\right)^T\Lambda^{-1}\phi^{\left(k\right)}\left(p\right)}+\Delta\}$ which completes the proof.

\end{proof}

\subsubsection{Proof of Lemma \ref{bound of multiple r}}
\begin{proof}
    Invoke Lemma \ref{bound of single r} with $\delta = \frac{1}{T^2}$ and let $\Lambda_i = I + \sum_{i'<i} \phi^{\left(k\right)}\left(\hat{p}_{i'}\right) \phi^{\left(k\right)}\left(\hat{p}_{i'}\right)^T$ denote the $\Lambda$ matrix at the $i$th call. Denote $\gamma_{\max} = \max_{i\leq t}\gamma_i$, and we can easily verify $\gamma_{\max}\leq L\sqrt{k+1} + \Delta\sqrt{t} + d_{max}\sqrt{6\left(k+1\right)\ln\left(\left(k+1\right)T\right)}$. Recalling the right side of Lemma \ref{bound of single r}, and noting that $\gamma_{\max} \geq d_{\max}$ we have
    \begin{align*}
        \sum_{i=1}^t \min\{d_{\max},\gamma\sqrt{\phi^{\left(k\right)}\left(\hat{p}_i\right)^T\Lambda^{-1}\phi^{\left(k\right)}\left(\hat{p}_i\right)}+\Delta\}
        & \leq
        \Delta t + \sum_{i=1}^t  \min \{ {d_{\max},\gamma_{\max}\sqrt{\phi^{\left(k\right)}\left(\hat{p}_i\right)^T\Lambda^{-1}\phi^{\left(k\right)}\left(\hat{p}_i\right)} } \}\\
        & \leq \Delta t +  \gamma_{\max} \sum_{i=1}^t \min \{  {1,\sqrt{\phi^{\left(k\right)}\left(\hat{p}_i\right)^T\Lambda^{-1}\phi^{\left(k\right)}\left(\hat{p}_i\right)} } \}\\
        & \leq  \Delta t +  \gamma_{\max} \sqrt{t}  \times \sqrt{\sum_{i=1}^t \min \{  {1,\phi^{\left(k\right)}\left(\hat{p}_i\right)^T\Lambda^{-1}\phi^{\left(k\right)}\left(\hat{p}_i\right)}  \} },
    \end{align*}   
    Using the elliptical potential lemma (\cite{abbasi2012online}, Lemma 11), we know that
    $$
    \min \{1,\phi^{\left(k\right)}\left(\hat{p}_i\right)^T\Lambda^{-1}\phi^{\left(k\right)}\left(\hat{p}_i\right) \}  \leq 2\left(k+1\right)\ln\left(\left(k+1\right)t+1\right),
    $$
    Subsequently,
    \begin{align*}
        & \Delta t +  \gamma_{\max} \sqrt{t}  \times \sqrt{\sum_{i=1}^t \min \{  {1,\phi^{\left(k\right)}\left(\hat{p}_i\right)^T\Lambda^{-1}\phi^{\left(k\right)}\left(\hat{p}_i\right)}  \} }\\
        & \leq
        \Delta t + \gamma_{\max}\sqrt{t}\times \sqrt{2\left(k+1\right)\ln\left(\left(k+1\right)t+1\right)}\\
        & \leq
        \Delta t +  \left(L\sqrt{k+1} + \Delta\sqrt{t} + d_{max}\sqrt{6\left(k+1\right)\ln\left(\left(k+1\right)T\right)}\right) \times \sqrt{t} \times \sqrt{2\left(k+1\right)\ln\left(\left(k+1\right)t+1\right)} \\
        & \leq 
        \left[\left(3d_{\max}+L\right)\times \sqrt{2t} +\Delta t\right] \times
        \left(k+1\right)\ln\left(2\left(k+1\right)T\right),
    \end{align*}
    So
    $$
    \frac{1}{t}\sum_{i=1}^t \left[\max_{p\in \mathbf{I}} pf\left(p\right) - \hat{p}_i f\left(\hat{p}_i\right)\right] 
    \leq
    \left[\Delta + \frac{\left(3d_{\max}+L\right)\sqrt{2}}{\sqrt{t}}\right]\left(k+1\right)\ln\left(2\left(k+1\right)T\right).
    $$

\end{proof}


\end{document}